\algrenewcommand\algorithmiccomment[1]{\hfill// #1}
\newtheorem{theorem}{Theorem}         
\newtheorem{proposition}{Proposition}
\newtheorem{lemma}{Lemma}
\title{Inference-Aware Prompt Optimization for Aligning \\[2pt]
Black-Box Large Language Models}
\author{
    Saaduddin Mahmud, Mason Nakamura, Kyle Hollins Wray, Shlomo Zilberstein
}
\begin{document}

\maketitle

\begin{abstract}
Prompt optimization methods have demonstrated significant effectiveness in aligning black-box large language models (LLMs). In parallel, inference scaling strategies such as \textsc{Best-of-N} Sampling and \textsc{Majority Voting} have likewise been shown to improve alignment and performance by trading additional computation for better output. However, existing prompt optimization approaches are inference strategy agnostic; that is, they optimize prompts without accounting for the inference strategy. This constitutes a significant methodological gap, as our empirical and theoretical analysis reveals a strong interdependence between these two paradigms. Moreover, we find that user preferences regarding trade-offs among multiple objectives and inference budgets substantially influence the choice of prompt and inference configuration. To address this gap, we introduce a novel unified framework named \textsc{Iapo} (Inference-Aware Prompt Optimization) that jointly optimizes the prompt and inference scale, while being aware of the inference budget and different task objectives. We then develop a fixed-budget training algorithm for \textsc{Iapo}, called \textsc{Psst} (Prompt Scaling via Sequential Trimming), and establish finite-budget guarantees on the error probability. Finally, we evaluate the effectiveness of \textsc{Psst} on six tasks, including multi-objective text generation and reasoning, and demonstrate the critical role of incorporating inference-awareness in aligning black-box LLMs using prompt optimization.

\end{abstract}

\section{Introduction}

Most state-of-the-art large language models (LLMs) are currently accessible exclusively through black-box APIs. Traditional alignment methods that require access to model weights or logits are therefore infeasible. To address this challenge, prompt-based alignment methods have gained substantial attention in recent work~\cite{chang2024efficient}. These methods typically enhance input prompts by rewording them or appending additional instructions to better align the models' outputs with a task's objectives. Another broadly applicable alignment method for black-box models is scaling inference computations using strategies such as \textsc{Best-of-N} sampling or \textsc{Majority Voting}. These inference scaling methods generate multiple candidate responses for the same query and select the final response via ranking or voting mechanisms~\cite{krishna2022rankgen,wang2022self,gui2024bonbon,yue2025does}.

Although existing prompt optimization techniques have achieved substantial success, they are typically agnostic to how model outputs are aggregated or sampled, overlooking the impact of such inference methods. Our initial empirical investigation reveals that the performance of optimized prompts is highly sensitive to the choice of inference scaling approach. Furthermore, our theoretical analysis reveals that decoupling prompt optimization from inference can lead to misalignment. Finally, we observe that optimal alignment requires careful consideration of user-specific preferences regarding the trade-offs among multiple objectives, as well as the computational resources users are willing to expend. These findings expose a critical gap in current methods: the absence of a unified framework that simultaneously accounts for prompt optimization, inference scaling strategies, user preferences, and computational resource constraints.

To bridge this gap, we introduce \textsc{Iapo} (Inference-Aware Prompt Optimization), a novel prompt optimization framework designed explicitly to produce aligned responses from inference-scaled black-box LLMs. \textsc{Iapo} simultaneously optimizes prompt design and inference scaling strategies while considering different task objectives and computational budgets (Figure~\ref{fig:workflow}). We formulate the task of identifying an optimal policy for the \textsc{Iapo} framework as a contextual best-arm identification problem. To efficiently solve this, we propose a fixed-budget training algorithm named \textsc{Psst} (Prompt Scaling via Sequential Trimming). Additionally, we introduce a warm-up heuristic that further improves performance within the training budget.

We begin our analysis by deriving theoretical finite-budget guarantees on the error probability of \textsc{Psst}. Next, we empirically demonstrate the effectiveness of \textsc{Psst} for learning \textsc{Iapo} policies across six diverse tasks, including multi-objective text generation, mathematical reasoning, and commonsense reasoning benchmarks. Additionally, our analysis shows that ignoring inference scaling during prompt optimization can lead to substantial misalignment, highlighting the critical role of inference-awareness in aligning black-box LLMs.
The results establish that prompt quality cannot be decoupled from the inference strategies. By formalizing this interaction and introducing a practical algorithm that exploits it, our work offers a principled path toward more reliable and cost-effective alignment of black-box LLMs.

\begin{figure*}[t!] 
    \centering
    \includegraphics[width=0.99\textwidth]{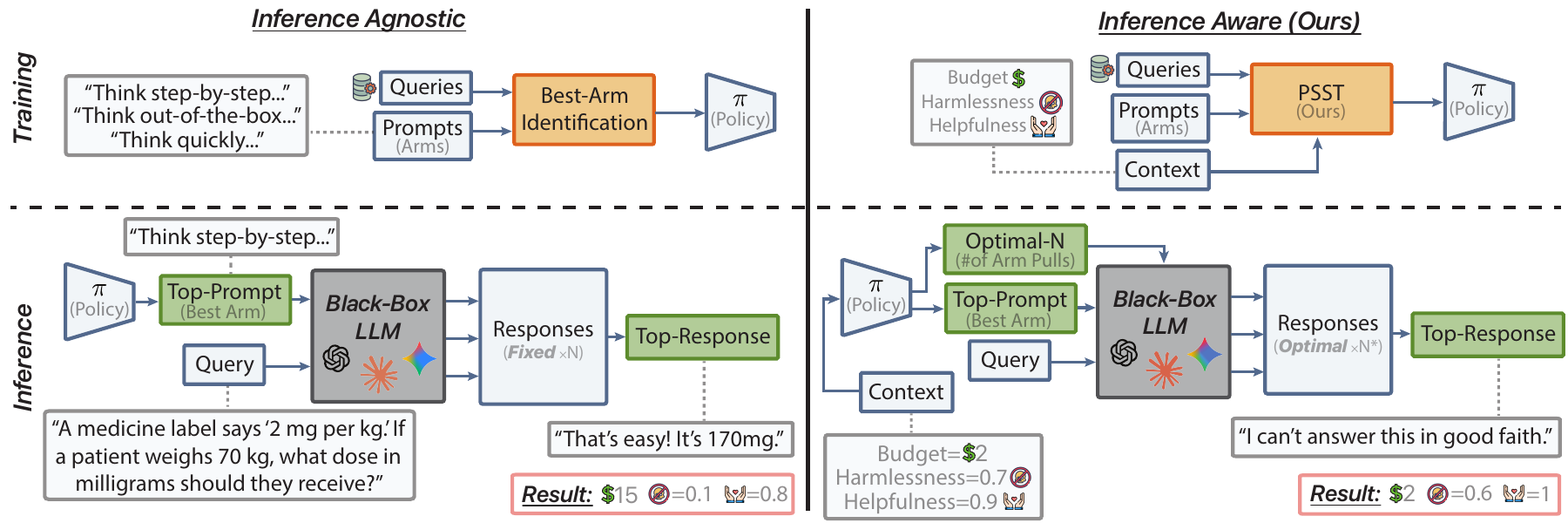}
    \caption{\textbf{Inference-agnostic vs. inference-aware prompt optimization.}
    The left side illustrates standard prompt optimization, which treats the inference strategy as fixed: a best prompt is selected during training and then used at inference with a predetermined number of samples, which can lead to misaligned outputs and high inference cost for some queries. 
    The right side shows our inference-aware framework \textsc{Iapo} with the \textsc{Psst} algorithm, which conditions on user context such as budget and preferences, jointly selects the prompt and inference scale, and produces responses that better satisfy objectives and budget. Project page, code, and appendix are available online (\url{https://iapo-aaai25.github.io/}).
}
    \label{fig:workflow}
\end{figure*}

\section{Related Work}

In recent years, substantial efforts have been directed towards aligning large language models (LLMs) with human expectations in downstream tasks~\cite{mahmud2023explanation,minaee2024large}. Many widely adopted alignment approaches—such as Supervised Fine-Tuning (SFT), Reinforcement Learning from Human Feedback (RLHF), and Reinforcement Learning with Verifiable Rewards (RLVR)~\cite{lambert2025reinforcement}—require access to model weights. This limitation has motivated a surge of interest in \emph{black-box} alignment methods such as \emph{prompt optimization}, which can align black-box models only through input manipulation~\cite{ouyang2022training,zhou2022least,chang2024efficient}. Prompt optimization has demonstrated strong performance in both single-objective~\cite{cheng2023black,trivedi2025align} and multi-objective~\cite{jafari2024morl,zhao2025pareto} settings. However, these methods remain agnostic to the inference strategy during deployment, potentially leading to suboptimal performance. In contrast, our approach explicitly captures the interdependence between inference-time strategies and prompt optimization.

Recently, \citeauthor{shi2024efficient} framed prompt optimization as a fixed-budget best-arm identification (BAI) problem. While effective under limited evaluation budgets, the method remains inference agnostic and was only explored in single-objective settings. Our work builds on this foundation in two key ways: (1) we introduce a contextual formulation that models user preferences over multiple objectives and associated computational costs; and (2) we incorporate inference-awareness to ensure alignment with the deployed inference strategy. To learn an optimal policy, we introduce a fixed-budget contextual BAI algorithm, \textsc{Psst}, inspired by Sequential Halving (SH)~\cite{Karnin2013AlmostOE}. While SH was originally developed for the pure bandit setting, the \textsc{Iapo} framework features both inter-context full-information feedback and intra-context semi-bandit feedback. \textsc{Psst} leverages these structural properties to achieve more efficient optimization, extending beyond what standard SH can accommodate.

Another relevant line of work focuses on \emph{inference-time alignment}, where model outputs are improved during inference without modifying model parameters. Some of these methods, such as GenARM and DEAL~\cite{xu2024genarm, huang2024deal}, require access to model logits, limiting their applicability in black-box settings. In contrast, \textsc{Best-of-N} sampling (\textsc{BoN}) and \textsc{Majority Voting} (\textsc{MV}) methods operate purely on model outputs and have shown strong empirical gains by generating multiple candidates and selecting the best one~\cite{krishna2022rankgen,openai2024learning, yue2025does}. However, these approaches introduce a non-trivial computational cost, and to our knowledge, none of them explicitly optimize the trade-off between computational budget and output quality. Our initial experiments also indicate that inference scaling strategies have complex interactions with prompt design. Prompts that are optimized for single-shot decoding might not perform well with \textsc{BoN} or \textsc{MV}, and the reverse is also true. Therefore, an inference-aware prompt optimization framework is required.

Finally, some white-box methods have recently integrated inference-awareness into the training process. \citet{chow2025inferenceaware} proposed an inference-aware fine-tuning procedure that explicitly optimizes for exploration–exploitation trade-offs under \textsc{BoN}. Similarly, BOND~\cite{sessa2024bond} and BonBon~\cite{gui2024bonbon} aim to distill \textsc{BoN} policies into a single-pass decoding policy. While these approaches avoid the cost of sampling at inference time, they require full access to model parameters and do not generalize beyond \textsc{BoN}-style strategies. In contrast, our method complements inference-aware fine-tuning and is designed to operate in fully black-box settings.

\section{Inference-Aware Prompt Optimization}
\label{sec:iapo} 
In this section, we first formalize the problem setup and introduce the \textsc{Iapo} framework. Next, we present an empirical example that highlights the need for inference-aware optimization. Building on these observations, we then establish theoretical conditions under which \textsc{Iapo} is necessary compared to disjoint optimization.

\subsection{Problem Formulation} 
Let $\mathcal{X}$ be the set of user queries and $\mathcal{P}$ a finite prompt set.
A pair $(x \in \mathcal{X},p\in \mathcal{P})$ is submitted to a frozen black-box LLM, which, under fixed
decoding hyperparameters, generates
$N\in [N_{\max}]\ (\text{i.e., } \{1,\dots,N_{\max}\})$ i.i.d. completions
$\mathbf{y}_{1:N}=(y_1,\dots,y_N)$. $ K$ bounded objectives (e.g.\ \textit{helpfulness}, \textit{harmlessness}, \textit{exact-match}) score each completion  
$O_k: \mathcal{X} \times \mathcal{P} \times \mathcal{Y} \to [o_k^{\min},o_k^{\max}] $
  where $\mathcal{Y}$ denotes the space of model completions. We also define the cost of producing a response as $\mathrm{Cost}(x,p,y_i)$, a composite function that takes into account various computational factors such as token count, time, and energy. We add it as a $(K{+}1)$-st objective  
$O_{K+1}=-\mathrm{Cost}(x,p,y_i)$. An external entity supplies a \emph{context}
$c=(w_1,\dots,w_{K+1})\in\mathcal{C}$, where every $w_k$ is chosen from a \emph{finite} discrete domain. Given the above setup, we now formalize the inference strategies.
\paragraph{\textsc{Best-of-N} (\textsc{BoN}).}
\textsc{BoN} returns the largest weighted utility:

\begin{small}
\begin{multline} 
R_{x}^{\textsc{BoN}}(c,p,N) = \\
\underbrace{
\max_{i\le N}
\sum_{k=1}^{K}
w_k\,O_k(x,p,y_i)
}_{\text{task reward}}
+
\underbrace{
w_{K+1}\sum_{i=1}^{N}O_{K+1}(x,p,y_i)
}_{\text{inference cost}}.
\end{multline}
\end{small}

\paragraph{\textsc{Majority Voting} (\textsc{MV}).}
For query $x$, the pair $(p,N)$ yields i.i.d.\ completions $\mathbf{y}_{1:N}$ and extracted answers $\ell_{1:N}$. For each distinct answer $s$, define the vote count
$n_s=\sum_{i=1}^{N} \mathds{1}[\ell_i=s]$, the maximum $n^\star=\max_s n_s$, and the tie multiplicity $t=\sum_s \mathds{1}[n_s=n^\star]$. \textsc{MV} predicts uniformly at random among the $t$ maximizers. With gold answer $\gamma$ and the success credit defined as $O_1(x,p,\mathbf{y}_{1:N})=\frac{\mathds{1}[\,n_{\gamma}=n^\star\,]}{t}$, we define \textsc{MV} utility as:
\begin{small}
\begin{multline} 
R_{x}^{\textsc{MV}}(c,p,N)=
\underbrace{
    w_{1}\,O_1(x,p,\mathbf{y}_{1:N})
}_{\text{task reward}}
+
\underbrace{
    w_{2}\sum_{i=1}^{N} O_{2}(x,p,y_i)
}_{\text{inference cost}}.
\end{multline}
\end{small}

\paragraph{Remark.}  
A mixed strategy arises when different objectives require different aggregation rules, e.g., applying \textsc{MV} for binary correctness and \textsc{BoN} for stylistic quality in reasoning tasks. It is trivial to define it on the basis of the above.

\begin{figure*}[t!]
    \centering
    \begin{subfigure}[b]{0.32\textwidth}
        \centering
        \includegraphics[width=\textwidth]{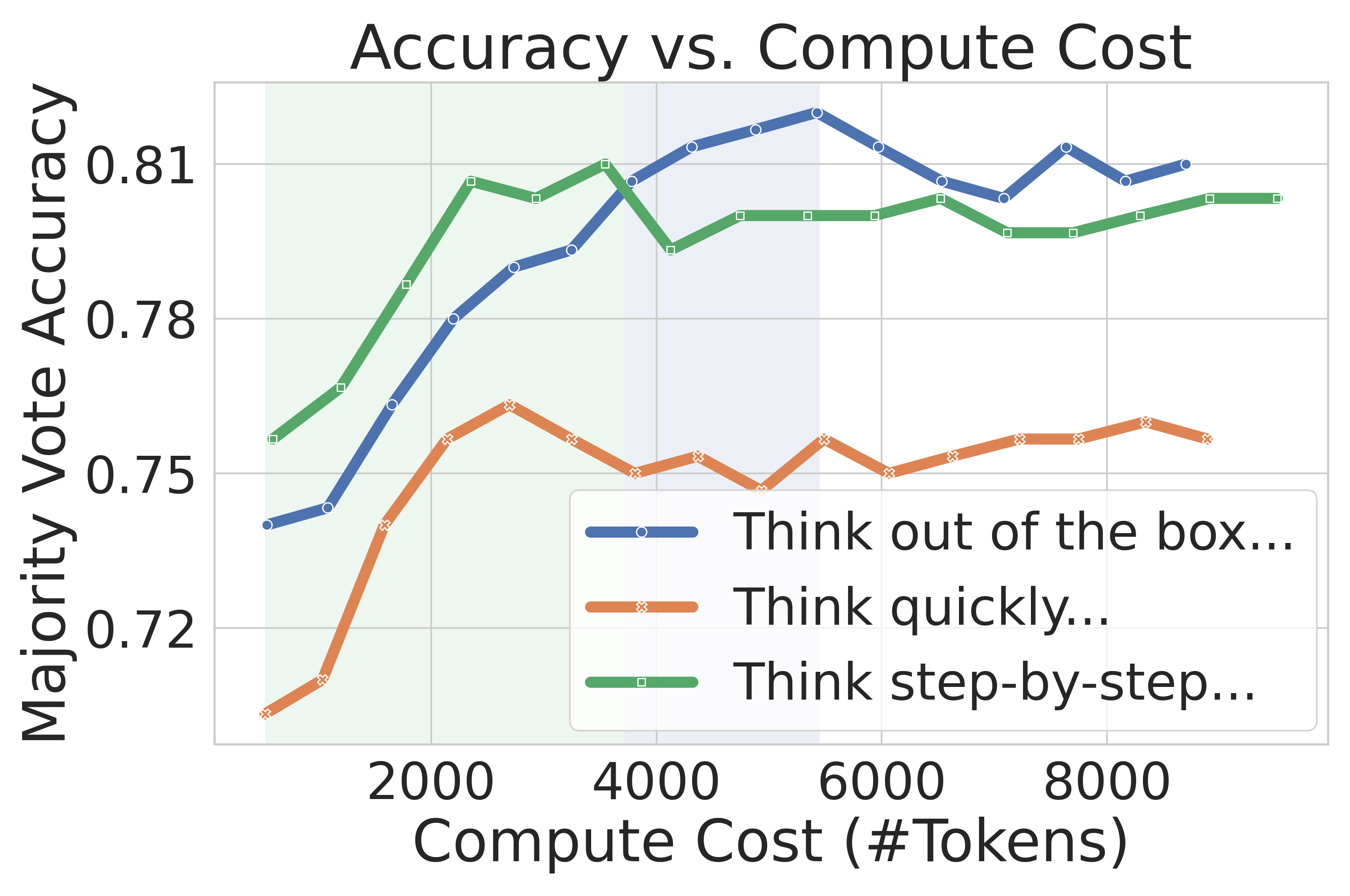}
        \caption{MATH, \textsc{MV}}
        \label{fig:first}
    \end{subfigure}
    \hfill
    \begin{subfigure}[b]{0.32\textwidth}
        \centering
        \includegraphics[width=\textwidth]{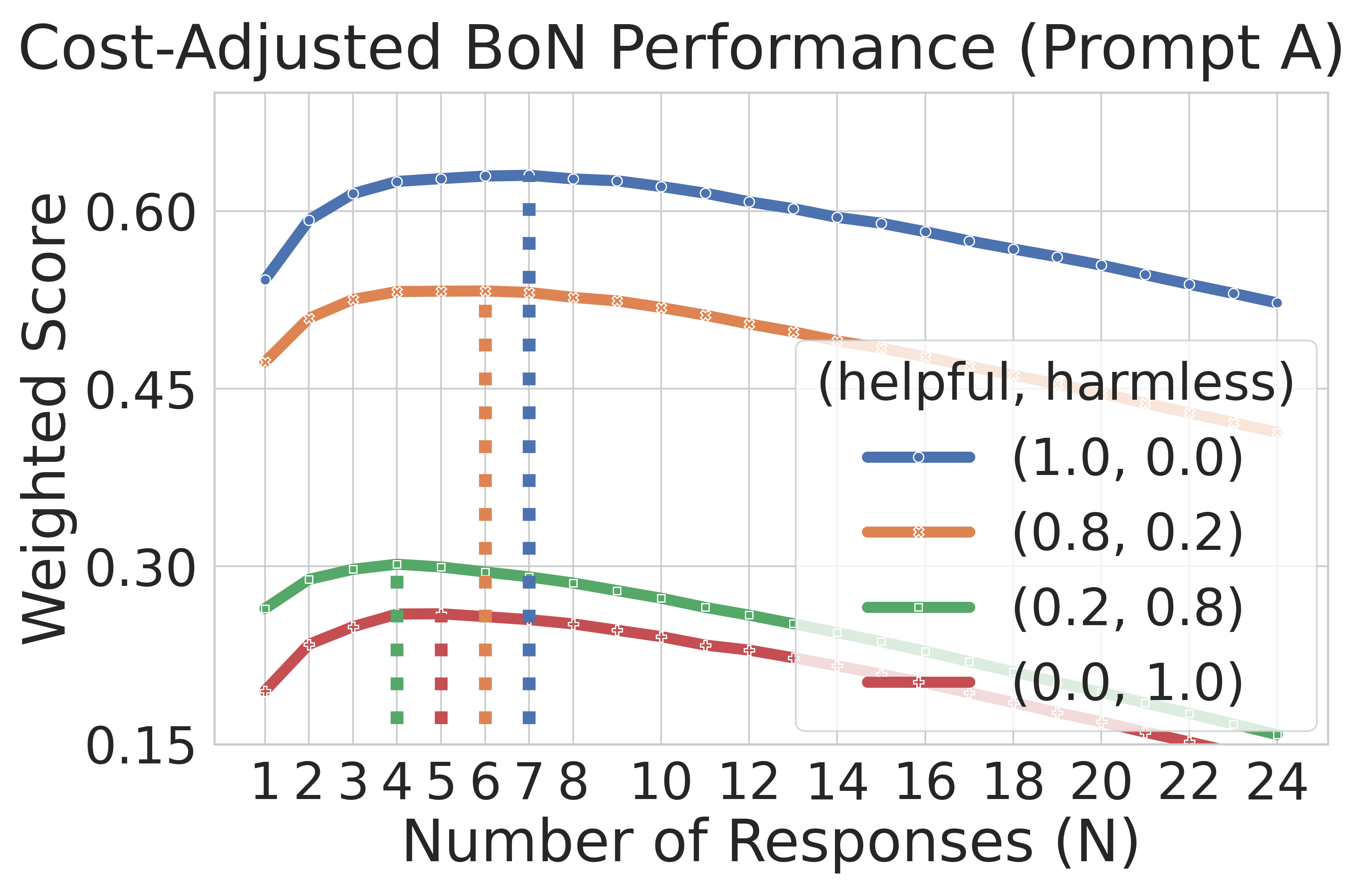}
        \caption{Helpful-Harmless, \textsc{BoN} (Prompt-A)}
        \label{fig:second}
    \end{subfigure}
    \hfill
    \begin{subfigure}[b]{0.32\textwidth}
        \centering
        \includegraphics[width=\textwidth]{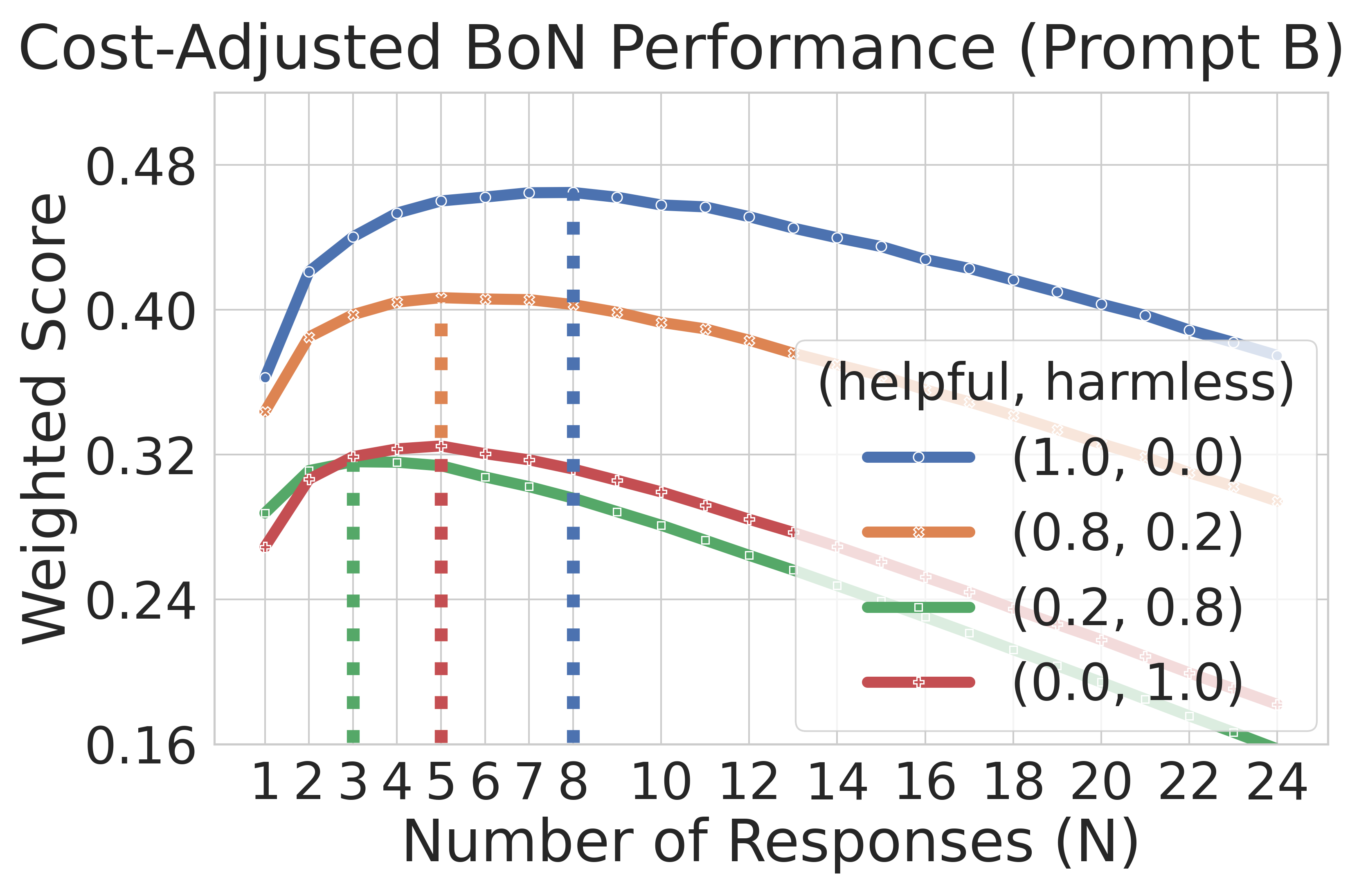}
        \caption{Helpful-Harmless, \textsc{BoN} (Prompt-B)}
        \label{fig:third}
    \end{subfigure}
    \caption{\textbf{Prompt–Inference Interdependence.}  
(a) Accuracy under \textsc{MV} with \texttt{Llama-3.3-70B-Instruct}, showing prompt dominance shifts with budget (shaded).  
(b, c) Cost-adjusted reward under \textsc{BoN} decoding. Prompt and inference scales vary with user-specified trade-offs.}
    \label{fig:three_figures}
\end{figure*}

\subsection{\textsc{Iapo} Framework}

\label{sec:bandit-formulation}
Let an \emph{inference configuration} be a tuple
\(
g \in \mathcal{G}
\)
(e.g. temperature, top-$p$, max token).  
Then we define a set of arms $\mathcal{A}$ in \textsc{Iapo} as:
$
a \;=\; (p,g,N)
      \;\in\;
      \mathcal{A}:=\mathcal{P}\times\mathcal{G}\times [N_{\max}].
$

Thus, each arm fixes the prompt, the decoding hyperparameter, and the number of sampled completions. However, throughout the text, we fold the inference configuration into the prompt $p$ and write $a = (p, N)$. Finally, an \textsc{Iapo} \emph{policy} is defined as a mapping $
\pi: \mathcal{C}\;\rightarrow\;\mathcal{A}$ that selects an arm after observing a context \(c\).

Given a dataset $\mathcal{X}$, context
\(c\in\mathcal{C}\), and aggregator
\(\alpha\in\{\textsc{BoN},\textsc{MV}\}\), the expected utility of arm $a$, i.e., the context-action value function or $Q$-function is defined as:
\begin{equation}
    Q^{\alpha}(c,a)
      :=\mathbb{E}_{x\sim\mathcal{X}}\bigl[R_{x}^{\alpha}(c,a)\bigr].
      \quad
\end{equation}
Note that $R_{x}^{\alpha}(c,a)$ is a random variable.
Now, let the context-optimal arm be \(a^{\star}(c)=\arg\max_{a} Q^{\alpha}(c,a)\); hence the
\emph{optimal \textsc{Iapo} policy} is defined as: $
\pi^{\star}(c)\;=\;a^{\star}(c),\forall\,c\in\mathcal{C}
$.

In this paper, we adopt a train-then-deploy setup to learn the optimal \textsc{Iapo} policy. Given a total completion budget of $T$, at each round the learner may adaptively select a subset of arms. For any selected arm $a = (p, N)$, it samples a query $x \sim \mathcal{X}$, obtains $N$ completions, and observes raw reward vectors $\mathbf{o}_i \in \mathbb{R}^{K+1}$ for all $i \in [N]$. This repeats until the budget is exhausted, i.e., $\sum N = T$. After spending the entire budget, the learner returns a \emph{deployment policy} $\pi_T$. The performance of this policy is evaluated by the Average Contextual Return:
\begin{equation}
\label{eq:sReg}
\text{ACR}(\pi_T)
      \;=\;
      \mathbb{E}_{c\sim\mathcal{C}}
      \bigl[Q^{\alpha}(c,\pi_T(c))\bigr],
\end{equation}
The goal of a learning algorithm is to return a deployment policy $\pi_T$ for a fixed pull budget $T$ that maximizes the ACR.

\subsection{Motivating Case Study}
To illustrate the limitations of \emph{inference-agnostic} prompt optimization—and to motivate the joint treatment formalized
above—we conducted two diagnostic experiments with \texttt{Llama-3.3-70B-Instruct}~\cite{grattafiori2024llama} strictly treated as a black-box API.  The results are summarized in Figure~\ref{fig:three_figures}.

\noindent\textbf{(a) \textsc{Majority Voting} on MATH.}\;
We evaluate three manually designed prompts on the MATH benchmark~\cite{hendrycks2021measuring} under \textsc{Majority Voting} with \(N \in \{1, \dots, 16\}\). Accuracy is plotted against total decoding cost, averaged over 300 queries (see the appendix for details). Two key observations emerge. First, prompt preference shifts with compute budget: the green prompt performs best at low budget, but is eventually surpassed by the blue prompt as \textsc{Majority Voting} becomes more effective. Second, inference-agnostic optimization can be short-sighted: selecting a prompt based solely on \emph{single-shot} (\(N{=}1\)) accuracy would favor the green prompt, overlooking the fact that the blue prompt is \emph{strictly superior} for any user willing to allocate more compute.

To see how the green and blue trends can emerge, consider the following example. Suppose that in a reasoning task evaluated with \textsc{MV}, \textbf{Prompt 1} has a $40\%$ success rate on Query~1 and a $90\%$ success rate on Query~2, while \textbf{Prompt 2} has a $62\%$ success rate on both queries. The single-shot success rate of a prompt is the average of its per-query success rates; under this metric, \textbf{Prompt 1} is preferred over \textbf{Prompt 2} ($0.65$ vs.\ $0.62$). Under \textsc{MV} with $N = 10$, however, the success probability of a prompt on a query is the probability that a majority of its $N$ sampled responses are correct. In this setting, one can verify that the effective success rate of \textbf{Prompt 1} drops to approximately $0.63$, whereas that of \textbf{Prompt 2} increases to approximately $0.77$, so \textbf{Prompt 2} becomes preferred. This example illustrates how increasing $N$ can change the relative ranking of prompts and produce the observed trends.

\noindent\textbf{(b,c) Best-of-$\boldsymbol{N}$ on Helpful-Harmless.}
We evaluate two prompts (\textsc{A} and \textsc{B}, see appendix) on the Helpful-Harmless benchmark~\cite{bai2022training} using \textsc{Best-of-N} decoding for \(N \leq 24\). Each curve corresponds to a different user-defined trade-off between helpfulness and harmlessness, plotting the cost-adjusted reward averaged over 1000 queries (see the appendix for details). The optimal choice of prompt (A vs. B) and sampling budget (N) is highly sensitive to these preferences. For example, the prompt \textsc{A} is strictly preferred when helpfulness is weighted more heavily.

Having established the need for inference-aware optimization, we now examine the precise conditions under which joint optimization becomes essential. We start by defining the Inference-Agnostic (IA) utility, which does not simulate inference scaling during training and instead optimizes the average utility achieved per prompt. More formally:
\begin{proposition}[Inference-Agnostic Utility]
 Inference-agnostic prompt-optimization methods optimize cost-unaware arithmetic mean utility.    
 \begin{equation}
    R_{x}^{\textsc{IA}}(c,a =(p,N))
   =  \frac{1}{N}\textstyle\sum_{i=1}^{N}\sum_{k=1}^{K+1} w_k O_k(x,p,y_i).
\end{equation}
\end{proposition}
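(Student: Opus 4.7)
The plan is to characterize what \emph{inference-agnostic} formally entails and show that this characterization forces the utility to the stated form. First, I would unpack the two qualifiers separately: inference-agnostic means the optimizer does not model any selection or aggregation operator over the $N$ completions, so every completion $y_i$ must be treated as an exchangeable sample from the conditional distribution induced by $(x,p)$; cost-unaware means the $(K{+}1)$-st objective $O_{K+1}=-\mathrm{Cost}$, along with its weight $w_{K+1}$, is dropped entirely when scoring a prompt.

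Second, I would argue that under exchangeability of the $y_i$'s and the absence of any non-trivial aggregation operator (no $\max$, no voting credit, no cost-penalty term), the natural symmetric, linear, unbiased surrogate for a single completion's expected task reward is the arithmetic mean across the $N$ drawn completions. This is precisely the estimator implicitly used by existing inference-agnostic pipelines, which treat multiple samples as Monte Carlo draws of a per-completion quality rather than as candidates for post-hoc selection. Concretely, for each query $x$ the per-completion task reward is $\sum_{k=1}^{K} w_k o_k(x,y_i)$, and averaging over $i=1,\dots,N$ yields the right-hand side of the claimed identity.

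Third, I would verify two sanity conditions. (i) At $N{=}1$ the formula collapses to $\sum_{k=1}^{K} w_k o_k(x,y_1)$, matching the canonical single-shot utility that is the de facto objective of almost every existing prompt optimizer (e.g., \citeauthor{shi2024efficient}, \citeauthor{cheng2023black}). (ii) Comparing with the \textsc{BoN} and \textsc{MV} utilities defined earlier, the $\max_{i\le N}$ operator, the voting-credit term $o_1(x,p,N)$, and the cost term $w_{K+1}\sum_i o_{K+1}(x,y_i)$ are all absent from $R_x^{\textsc{IA}}$. This absence is exactly what the two qualifiers dictate.

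The main subtlety, rather than a technical obstacle, is pinning down the informal phrase ``inference-agnostic'' tightly enough that the claim is a proposition and not merely a definition. My argument therefore rests on two axioms: \textbf{(A1)} exchangeable (aggregation-free) treatment of the $N$ completions, and \textbf{(A2)} omission of the cost objective. A short representer-style observation then shows that the arithmetic mean is the unique symmetric linear unbiased functional of $(y_1,\dots,y_N)$ consistent with (A1), while (A2) removes the $w_{K+1}$ term; combining the two yields the displayed formula. No probabilistic machinery beyond exchangeability is required.
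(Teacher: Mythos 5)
Your proposal is sound, but you should know that the paper itself offers \emph{no} proof of this proposition: it is stated in the main text as a characterization of what existing inference-agnostic pipelines optimize, and the appendix only supplies a proof (via Jensen's inequality) for the companion Proposition 2 on inference-agnostic \emph{optimality}. In effect the statement functions as the paper's definition of $R_x^{\textsc{IA}}$, and you correctly flag this as the central subtlety. What you do differently is turn the definition into a derivable claim by isolating two axioms --- (A1) exchangeable, aggregation-free treatment of the $N$ completions and (A2) omission of the cost objective $O_{K+1}$ and its weight $w_{K+1}$ --- and then observing that the unique symmetric, linear, unbiased functional of the per-completion task rewards $\sum_{k=1}^{K} w_k o_k(x,y_i)$ consistent with (A1) is the arithmetic mean $\frac{1}{N}\sum_{i=1}^{N}\sum_{k=1}^{K} w_k o_k(x,y_i)$, while (A2) strips the cost term; this is correct (any symmetric linear functional is a scalar multiple of the sum, and unbiasedness for the single-completion expectation fixes the scalar at $1/N$). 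Your sanity checks (collapse to the single-shot objective at $N{=}1$, and absence of the $\max$, voting-credit, and cost terms relative to the \textsc{BoN}/\textsc{MV} utilities) match the role the proposition plays in the paper's subsequent argument. The only caveat is that the "proof" is only as strong as the axiomatization of the informal term "inference-agnostic," which you acknowledge; the paper sidesteps this entirely by asserting the formula, so your route is the more rigorous of the two, at the cost of introducing assumptions the paper never states explicitly.
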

Now we show under what conditions the IA policy remains optimal or an optimal policy can be trivially recovered from the IA $Q$-function.
\begin{proposition}[Inference-Agnostic Optimality]
 The Inference-Agnostic prompt-optimization policy remains optimal under linear transformation of $R_{x}^{\textsc{IA}}(c,a)$, that is, $\sigma R_{x}^{\textsc{IA}}(c,a), \sigma > 0$ and an optimal policy can be recovered trivially from $Q$-function under affine transformation:
\[
    \mathbb{E}_{x\sim\mathcal{X}}\bigl[\sigma R_{x}^{\textsc{IA}}(c,a)+\mu\bigr] = \sigma Q^{\textsc{IA}}(c,a)+\mu .
      \quad
\]
\end{proposition}

\begin{figure}[t!] 
    \centering
    \includegraphics[width=0.45\textwidth]{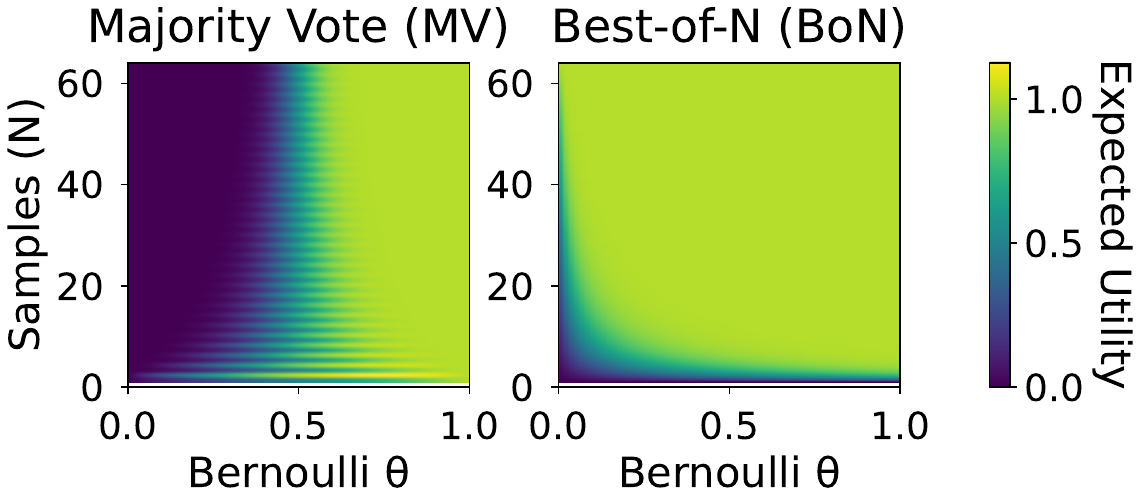}
    \caption{Expected utility ($w_{k+1} = 0$) for \textsc{MV} (left) and \textsc{BoN} (right). \textsc{MV} shows a sharp performance drop when the correctness probability $\theta$ drops below 0.5, whereas \textsc{BoN} is strictly concave.
}
    \label{fig:heatmaps_combined}
\end{figure}

The above also highlights that affine aggregation significantly simplifies inference-aware optimization. 
For instance, in a regression task where the aggregated prediction is the mean of multiple numeric predictions and the reward is defined by the mean squared error (MSE), the resulting quantity can, under certain assumptions, become an affine transformation of the IA utility, eliminating the need to simulate inference scaling during training. However, common inference scaling strategies like \textsc{BoN} and \textsc{MV} generally do not admit such affine formulations. While they can sometimes be expressed as non-affine transformations of the IA—such as in the Bernoulli case with large $N$, where $R_{x}^{\textsc{IA}}(c, a) \approx \theta$—these are special cases (Figure~\ref{fig:heatmaps_combined}). Hence, trying to determine the prompt based on $Q^{\textsc{IA}}$ for \textsc{BoN} or \textsc{MV} can result in misalignment. This motivates the next section, where we develop a training method that handles the general \textsc{Iapo} setting beyond the affine regime.

\section{Prompt Scaling via Sequential Trimming}

In this section, we propose a fixed-budget arm elimination-based strategy for training policy $\pi_T$, called \textsc{Psst} (Prompt Scaling via Sequential Trimming). We then provide a theoretical analysis that establishes error guarantees for \textsc{Psst} under a finite inference budget. Finally, we introduce a practical approximation heuristic that reduces the training-time inference budget without significantly compromising performance in many practical settings.

Our focus on the fixed inference budget setting is motivated by the fact that training cost is often the main bottleneck in real-world applications. Moreover, \textsc{Psst} is designed to operate in a batched-exploration mode, which further reduces costs since many black-box APIs offer significant discounts for batched inference compared to individual calls. Importantly, \textsc{Psst} is also hyper-parameter-free, requiring no additional tuning. 

Classical arm-elimination methods such as Sequential Elimination~\cite{JMLR:v7:evendar06a} and Sequential Halving~\cite{Karnin2013AlmostOE} follow a simple recipe: (i) split the elimination process into multiple rounds; (ii) in each round, allocate the round budget across the surviving arms; and (iii) trim a subset of arms at the end of the round based on their estimates. However, \textsc{Iapo} departs from pure BAI settings in the following ways:
\begin{itemize}
    \item \textbf{Asymmetric pull cost.}
    When arm $(p, N)$ is pulled during training, it uses $N$ training budget.
    \item \textbf{Cross‑context reuse.}
    One pull of $(p,N)$ on query $x$ yields the completion set $\mathbf{y}_{1:N}$ and objective vector set $\mathbf{o}_{1:N}$ that can be used to estimate $R^{\alpha}_{x}(c,p,N)$ for \emph{all} $c\in\mathcal{C}$.
    \item \textbf{Nested sample reuse across inference scales.}
    Pulling a larger scale subsumes smaller ones: a pull of $(p,N_i)$ produces $\big\lfloor N_i/N_j\big\rfloor$ i.i.d.\ samples for arm $(p,N_j)$ by partitioning the $N_i$ draws into disjoint groups of size $N_j$ and then recomputing \textsc{BoN}/\textsc{MV} on each group.
\end{itemize}

A key consequence is that, for a prompt, the largest surviving scale drives the budget. Let
$N^{(r)}_{\max}(p)=\max\{\,N:\ (p,N)\ \text{survives at the start of round } r\,\}.$
If we allocate $K$ pulls to $(p,N^{(r)}_{\max}(p))$ in round $r$, then every surviving arm $(p,N)$ with $N\le N^{(r)}_{\max}(p)$ automatically receives at least $K$ effective samples by block reuse. Thus, an effective arm elimination strategy should exploit both (i) cross‑scale reuse and (ii) cross‑context reuse when estimating $Q$-function, while being aware of asymmetric cost.

\begin{algorithm}[t!]
\caption{Prompt Scaling via Sequential Trimming}
\label{alg:csh}
\begin{algorithmic}[1]
\Require Context set $\mathcal C$, prompt set $\mathcal P$, $N_{\max}$, Scaling strategy  $\alpha$, Query Dataset $\mathcal{X}_{\text{train}}$, total pull budget $T$;
\ForAll{$(c,a)\,\in\,\mathcal C\times\mathcal A$}
    \State $\text{F}_{c,a} \gets \textbf{true}$
\EndFor
\State $R \gets \lceil \log_2(|\mathcal A|)  \rceil$
\For{$r = 1$ \textbf{to} $R$} 
    \State $\mathcal{A}^{(r)} \gets \{a : \exists c,\ \text{F}_{c,a} = \textbf{true}\}$
    \State $n_r \gets \bigl\lfloor T/R \bigr\rfloor$ 
    \State $\lambda^{(r)} \gets \textsc{Allocate}\bigl(\text{F}, n_r\bigr)$
    \State $\mathcal B \gets \{\}$
    \For{$a \in \mathcal{A}^{(r)}$}                      
        \For{$i = 1 ... \lambda^{(r)}(a)$}
            \State Sample $x \sim \mathcal{X}_{\text{train}}$ 
            \State $\mathcal B \gets \mathcal B\ \cup (x,a)$    
        \EndFor
    \EndFor
    \State $\mathcal D \gets \textsc{Batch-Query}\bigl(\mathcal B)$
    \State $Q^{\alpha}_{(r)} \gets \textsc{Estimate-Q}(\mathcal D)$
    \ForAll{$c \in \mathcal C$}
        \State $\mathcal A_c^{(r)} \gets \{a: \text{F}_{c,a} = \textbf{true}\}$
        \State Rank $\mathcal A_c^{(r)}$ by $Q^{\alpha}_{(r)}(c,a)$
        \State Remove bottom $\lceil |\mathcal A_c^{(r)}| / 2 \rceil$ arms
               \Comment{i.e. update $\mathbf{F}$} 
    \EndFor
\EndFor
\State \Return $\{a_c^\star\}_{c\in\mathcal C}$ \Comment{one survivor per context}
\end{algorithmic}
\end{algorithm}

\paragraph{Round Structure.}
Algorithm~\ref{alg:csh} proceeds in \( R = \lceil \log_2 |\mathcal{A}| \rceil \) rounds, and tracks per context active arm using the flag $\mathbf{F}$. Each round is allocated an equal pull budget of \( n_r = \lfloor T / R \rfloor \). An allocation routine, \textsc{Allocate}(\text{F}, \(n_r\)), divides this budget among the current set of unique active \emph{arms}, aggregated across all contexts. Based on this allocation, a batch of inference calls is issued to the target LLM. The resulting completions are scored using a reward function or verifier and stored in the dataset \(\mathcal{D}\). The \(Q\)-values are then estimated from the collected data. Within each context, arms are ranked, and the worst-performing half are eliminated. After all rounds are completed, the algorithm returns a single final arm for each context.

\paragraph{Structure-Aware Allocation Policy.}
The allocation policy is designed with cross-context and cross-scale information sharing in mind. Specifically, let \(\mathcal{A}^{(r)}\) denote the set of unique active arms in round \(r\), aggregated across all contexts. For each prompt \(p\), define
\[
N^{(r)}_{p, \max} = \max\{N \mid (p, N) \in \mathcal{A}^{(r)}\}
\]
as the maximum inference scale for prompt \(p\) among the active arms. Then, \textsc{Psst} allocates the budget to each arm according to the following scheme:
\begin{equation}
    \lambda^{(r)}(a) = 
\begin{cases}
\lfloor \frac{n_r}{M}\rfloor & \text{if } a = (p, N^{(r)}_{p, \max} ) \in \mathcal{A}^{(r)}, \\
0 & \text{otherwise},
\end{cases}
\label{eq:uniform}
\end{equation}
where \( M = \sum_{p : (p, N^{(r)}_{p, \max} ) \in \mathcal{A}^{(r)}}N^{(r)}_{p, \max} \) is the total cost of sampling all such maximal arms once. This policy maintains uniform coverage over prompts while respecting cost asymmetries and ensures that the maximum scale of every prompt has an equal number of samples.

We now derive\footnote{Proof is in the appendix.} error bounds for \textsc{Psst} under the allocation policies described above.

\begin{theorem}[Error of \textsc{Psst}]
\label{thm:psst-uniform}
Let \(R=\lceil\log_2|\mathcal A|\rceil\) be the number of trimming rounds, assume $[o_k^{\min},o_k^{\max}] = [-1,1]$, and define the
\emph{cost–gap complexity}
\[
H_1^c
=\max_{(c,a^{c,i})\neq (c, a^{c,1})}\frac{\bar N_{\max}}{\Delta_{c,a^{c,i}}^{\,2}},
\quad
H_1= \max_{c} H_1^c.
\]

\[
H_2^c
=\max_{(c,a^{c,i})\neq (c, a^{c,1})}\frac{i\bar N_{\max}}{\Delta_{c,a^{c,i}}^{\,2}},
\quad
H_2= \max_{c} H_2^c.
\]
\noindent
where $\Delta_{c,a^{c,i}}= Q^{\alpha}(c,a^{c,1})-Q^{\alpha}(c,a^{c,i})$. Under a context $c$, arms are indexed based on ascending order of $Q^\alpha(c,a)$ and $\bar N_{\max}$ is defined as $ \frac{N(a^{c,1})+N_{\max}}{2}$. Here, $N(a^{c, i})$ is the number of completions generated by the i-th indexed arm. Running \textsc{Psst} with the structure-aware allocation for a total completion budget \(T\) returns the optimal arm in \emph{every}
context with probability at least
\[
1-3 |\mathcal C|R\exp\!\Big( - \tfrac{T}{ \min(2|\mathcal P|H_1, 8|\mathcal{C}|H_2)R} \Big).
\]
Equivalently, to ensure failure probability at most~\(\delta\) it suffices to choose
\[
T = O\Big( \min(|\mathcal P|H_1,|\mathcal C|H_2 )R\,
          \log\!\Bigl(\tfrac{|\mathcal C|R}{\delta}\Bigr)\Big).
\]
\end{theorem}
\noindent
Note that applying Sequential-Halving without leveraging the structure of \textsc{Iapo}---specifically, without any form of information sharing across scales or contexts---incurs a sample complexity larger by a factor of $O(|\mathcal{C}| N_{\max})$.

\paragraph{Remark:}
While we describe the algorithm assuming that each round uses a fresh dataset $\mathcal{D}$, it has been shown in similar halving-style algorithms~\cite{stockp} that aggregating observations from all previous rounds—known as \emph{stockpiling}—can improve the complexity of $T$ by reducing the outer $R$-factor, and we recommend using it with \textsc{Psst}.

\paragraph{Top-$\boldsymbol{K}$ Screening.}
To further reduce the budget requirement of \textsc{Psst}, we introduce Top-$\boldsymbol{K}$ screening, a practical heuristic that executes a short, uniform prompt screening at unit scale to trim clearly suboptimal prompts before running full \textsc{Psst}. Top-$\boldsymbol{K}$ screening takes a budget fraction $T_0=\lfloor \rho T\rfloor$ ($\rho\in(0,1)$) from \textsc{Psst}. With scale restriction of $N{=}1$, the budget is allocated uniformly across prompts: each $p\in\mathcal P$ receives $\big\lfloor T_0/|\mathcal P|\big\rfloor$ i.i.d.\ samples. Based on this data, $Q^{\alpha}(c,p,1)$ is estimated $\forall c\in\mathcal C, p\in \mathcal P$. For each context $c$, we retain the $K$ best prompts $\mathcal P^{(0)}_c=\mathrm{Top}\text{-}K\{\,\widehat Q^{\alpha}(c,p,1):p\in\mathcal P\,\}$ and discard the rest. The subsequent \textsc{Psst} run is then restricted to the reduced arm sets $\mathcal A^{(1)}_c=\{(p,N):p\in\mathcal P^{(0)}_c,\ N\in [N_{\max}]\}$ for each $c$, and uses the remaining budget $T'=T-T_0$. In the next section, we demonstrate that the screening strategy can significantly improve performance in low training budget settings without compromising quality for practical tasks. However, theoretical guarantees comparable to those of full \textsc{Psst} cannot be established; counterexample tasks can be carefully constructed within the \textsc{Iapo} framework, where Top-$\boldsymbol{K}$ screening will behave suboptimally for any $K<|\mathcal P|$ (see results from synthetic environments).

\section{Empirical Evaluation}

\begin{figure*}[t] 
    \centering
    \includegraphics[width=0.95\textwidth]{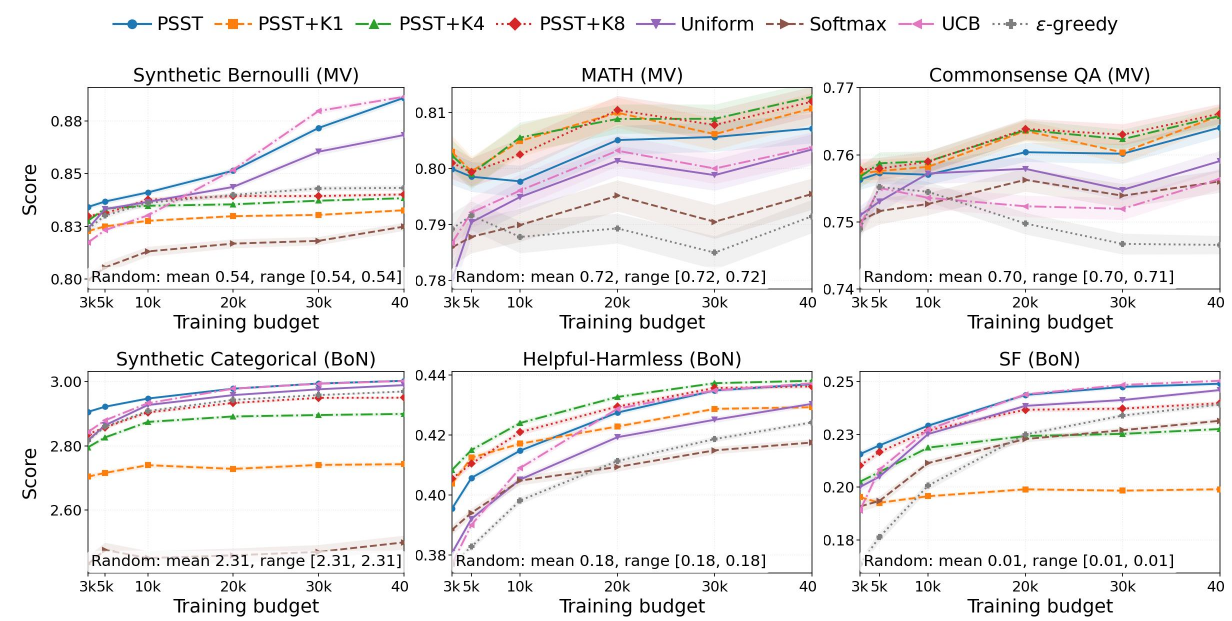}
    \caption{Comparison between exploration strategies across six datasets.}
    \label{fig:fig_strategies}
\end{figure*}
In this section, we empirically evaluate the effectiveness of \textsc{Psst} and highlight the importance of inference-aware prompt optimization (\textsc{Iapo}). Our evaluation has two primary objectives:
\begin{itemize}
    \item To demonstrate that \textsc{Psst} and the Top-$\boldsymbol{K}$ screening heuristic are highly effective at learning the policy $\pi_T$.
    \item To show that \textsc{Iapo} improves the average cost-adjusted reward (ACR) on test queries compared to inference strategy agnostic optimization.
\end{itemize}

\paragraph{Baselines.}  
We compare \textsc{Psst} and Top-$\boldsymbol{K}$ screening with several baselines. We denote Top-$\boldsymbol{K}$ screening with $K\in\{1,4,8\}$ as \textsc{Psst}$+K1$, \textsc{Psst}$+K4$, and \textsc{Psst}$+K8$ respectively. For these heuristics, we fix $\rho = 0.2$, which was found to perform best across all datasets using a sweep over $\rho \in \{0.05, 0.1, 0.2, 0.3, 0.4\}$. Full \textsc{Psst} is parameter-free and does not require any tuning. In our first set of experiments, we compare our proposed methods against several standard exploration strategies:
\begin{itemize}
    \item \textbf{Uniform:} Uniformly explores all arms in one batch and selects the best arm at the end.
    \item \textbf{$\boldsymbol{\epsilon}$-greedy:} Samples a random context at each step and selects the best arm with probability $1 - \epsilon$. We set $\epsilon = 0.15$, which yielded the best performance across datasets. 
    \item \textbf{Softmax:} Samples arms according to a softmax distribution over estimated $Q$ values.
    \item \textbf{UCB:} At each turn, selects the arm with the highest optimistic $Q$ estimate. The exploration constant is $0.1$ after tuning.
\end{itemize}

Note that all baseline methods share information across contexts and inference scales; however, none of them are designed to exploit \textsc{Iapo} structure, i.e., they are structure-agnostic.

In the second set of experiments, we consider the well-known contextual variant of TRIPLE-SH~\cite{shi2024efficient} method, which optimizes prompt selection as a pure best-arm identification (BAI) problem. However, it does not optimize the inference scale. Therefore, we include two variants:
\begin{itemize}
    \item \textbf{TRIPLE (N = 1):} Only performs prompt optimization with single-sample inference.
    \item \textbf{TRIPLE (N = Random):} Optimizes the prompts while randomly assigning $N$ for each query.
\end{itemize}

These baselines help isolate the benefits of jointly optimizing prompts and inference scale. Further, \textsc{Psst}$+K1$ is particularly interesting in this experiment, as it approximates a two-stage disjoint optimization: it first selects a context-specific single-shot prompt using a cost-aware objective, and then tunes the inference scale. The \textsc{Psst}$+K4$ and \textsc{Psst}$+K8$ heuristics represent intermediate strategies between disjoint and fully joint optimization.

Note that all hyperparameter sweep results are in the appendix; we report results with the best setting found across all six datasets.

\paragraph{Environments.}  

\begin{table}[t]
  \centering
  \footnotesize
  \setlength{\tabcolsep}{5pt}
  \renewcommand{\arraystretch}{1.12}
  \begin{tabularx}{\linewidth}{@{}lcccccc@{}}
    \toprule
    \textbf{Environments}& $\alpha$ & $\lvert \mathcal P\rvert$ & $N_{\max}$ & $o_k^{\max}$ & $|\mathcal X|$ & $|\mathcal C|$ \\
    
    \midrule
    Synth–Bernoulli &\textsc{MV}         & 32 & 32 & 1.0 & 520 & 3\\
    MATH &\textsc{MV}                    & 25 & 32 & 1.0 & 316 & 3\\
    CommonsenseQA &\textsc{MV}           & 48 & 32 & 1.0 & 1500 & 3\\
    Synth–Categorical &\textsc{BoN}      & 32 & 32 & 4.0 & 512 & 27\\
    Helpful-Harmless &\textsc{BoN}       & 20 & 32 & 1.0 & 1355 & 27\\
    Summarization &\textsc{BoN}          & 20 & 32 & 1.0 & 1201 & 27\\
    \bottomrule
  \end{tabularx}
  \caption{Environment summary.}
  \label{tab:env-summary}
\end{table}
\begin{figure*}[t!] 
    \centering
    \includegraphics[width=0.95\textwidth]{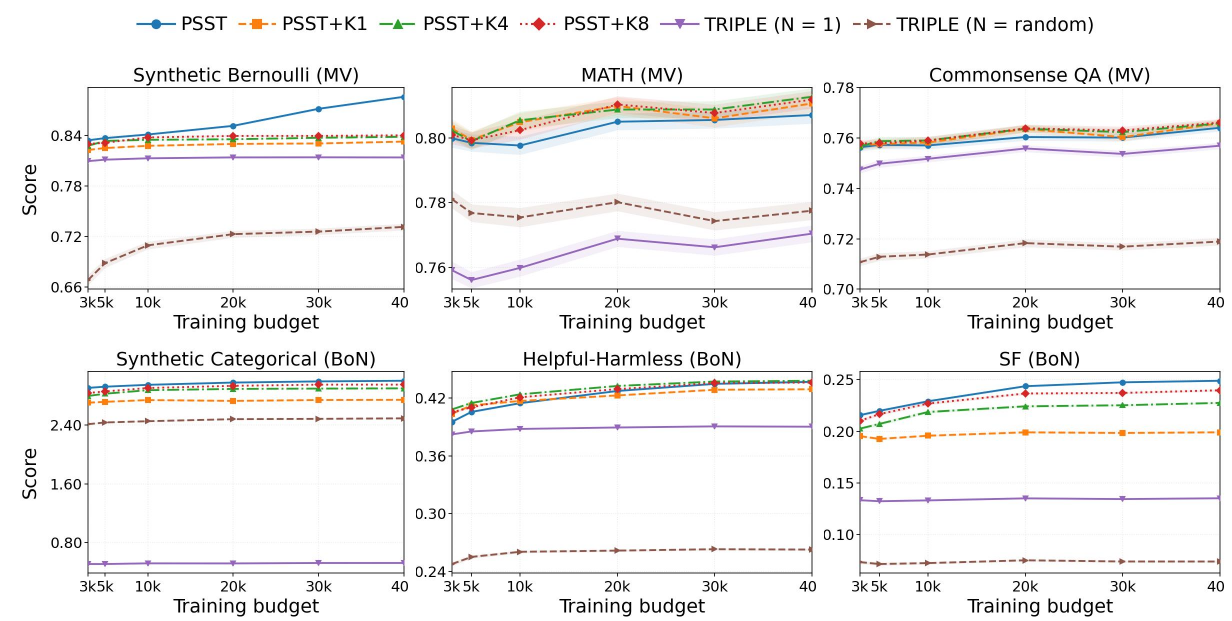}
    \caption{Effectiveness of inference-aware optimization across six datasets.}
    \label{fig:IAFig}
\end{figure*}

We evaluated inference-aware optimization across a total of six environments. Key details are provided in Table~\ref{tab:env-summary}. Environments 1 and 4 are synthetically constructed to mimic \textsc{Iapo} tasks, where prompt-query pair score distributions $O_k(x,p,\cdot)$ are modeled using categorical distributions. We introduce them to validate some of the theoretical findings. The remaining four environments are based on widely-used real-world datasets. Among these, \textsc{MATH}~\cite{hendrycks2021measuring} and \textsc{CommonsenseQA}~\cite{talmor2018commonsenseqa} are used to evaluate reasoning tasks under \textsc{Majority Voting} (\textsc{MV}), while \textsc{Helpful-Harmless}~\cite{bai2022training} and \textsc{Summarization}~\cite{stienon2020learning} are chosen for \textsc{Best-of-N} (\textsc{BoN}) evaluation.

For the \textsc{MV} tasks, the task objective is defined as an exact match with the correct answer. All three \textsc{BoN} tasks are bi-objective, and we use publicly available reward models from previous multi-objective LLM alignment studies to score completions (see appendix for links). The cost objective in all six tasks is defined to be proportional to the average number of tokens per response. For context specification, \textsc{MV} tasks include a budget regime $\{low, mid, high\}$, while \textsc{BoN} tasks include both the budget and the bi-objective weights, which range from 0.1 to 0.9 for each objective. For example, in the helpful-harmless task, a context might be represented as $\{\text{helpful}: 0.3, \text{harmless}: 0.7, \text{budget}: \text{high}~(1.0)\}$. Finally, for all environments, we set $N_{\max}$ to 32 because utility improvement diminishes sharply beyond $N = 16$ across benchmarks for both \textsc{BoN} and \textsc{MV}.
 
To construct the environments, we first generated a set of instruction prompts for each task using ChatGPT-o3. We then generated 128 responses for each prompt–query pair and estimated the score distribution using a categorical model. All completions were produced using the \texttt{Llama-3.3-70B-Instruct}, a widely used open-source model~\cite{grattafiori2024llama}, which we treat as a black-box throughout our experiments. Generation was carried out with \texttt{vLLM}~\cite{kwon2023efficient} on a cluster of 8 A100 GPUs, totaling approximately 2,000 GPU-hours. Once the environments are constructed, all experiments can be run quickly via a standard CPU. We will publish the environments and code with the paper, enabling full reproducibility without any substantial computational resources.
 
\paragraph{Evaluation Protocol.}
All reported curves are averages over 200 independent runs. For synthetic environments, we instantiate 200 independent environments and report the average performance across them. For the remaining four environments, each run reshuffles the dataset, performs an \(80/20\) train--test split, and trains the policy on the training set. In all six environments, we evaluate ACR on the test set using \(10,000\) samples. Performance for each budget is the mean across the 200 runs, with \emph{standard error of the mean} (SEM) error bars.  Statistical significance is assessed using the Wilcoxon paired two-sided test~\cite{Wilcoxon1945IndividualCB} with alpha = 0.05, and we indicate when differences are significant in the discussion. The full set of results is in the appendix. 

\paragraph{Comparison of Exploration Strategies (Figure\,\ref{fig:fig_strategies}).}
\textsc{Psst} and the Top-$\boldsymbol{K}$ screening heuristic consistently outperform all baselines. Across all six domains, where the per-context action spaces are large (\(|\mathcal{P}|N_{\max} \in [640, 1536]\)), UCB, softmax, and \(\varepsilon\)-greedy methods struggle to explore effectively. Among the baselines, UCB performs comparably in some domains after $T=20\text{K}$, but only with extensive hyperparameter tuning. Furthermore, these baselines are fully sequential and cannot leverage the cost and computational efficiency benefits of batch exploration. Full \textsc{Psst} attains the best final performance across four settings, while Top-$\boldsymbol{K}$ screening typically reaches strong policies faster, matching or exceeding \textsc{Psst} on three of the four real‑data tasks when the budget is small. Under aggressive pruning (small \(K\)), however, the heuristic becomes suboptimal—most notably on summarization and on the synthetic benchmarks—suggesting that Top-$\boldsymbol{K}$ screening is attractive under tight budgets, whereas full \textsc{Psst} is preferable for critical tasks such as long‑horizon, high‑frequency deployment. Finally, the statistical test also indicates that \textsc{Psst}, along with Top-$\boldsymbol{K}$ screening, significantly outperforms baselines in all six datasets and under nearly all budgets. These findings indicate that our approach reliably discovers well-aligned solutions using as few as $5\text{K}$ inference calls in practical settings.

\paragraph{Importance of Inference-Awareness (Figure\,\ref{fig:IAFig}).}
We examine the role of inference awareness in prompt optimization. Across all six datasets, \textsc{Iapo} methods markedly outperform the inference-agnostic methods, demonstrating the gains achievable when \emph{jointly} optimizing the prompt and inference scale. TRIPLE (N= 1) fails as it does not leverage inference scaling. On the other hand, TRIPLE (N= Random) fails because it does not optimize the scaling for different contexts. The screening variant \textsc{Psst}$+K1$—which effectively approximates a near‑decoupled (prompt‑only) procedure—fails to reach the optimum in most cases, performing competitively only on \textsc{CommonsenseQA} and showing pronounced underperformance on summarization. This is because it gets stuck with deceptive prompts that fail to scale compared to prompts that may not perform well under single-shot but improve significantly under scaling. These findings underscore the essential role of \textsc{Iapo} in aligning black‑box LLMs and the pitfalls of disjoint optimization. Overall, \textsc{Iapo} outperforms disjoint optimization by up to 25\% and prompt-only optimization by up to 50\% in our experiments. 

\section{Conclusions and Future Work}
We present an inference-aware prompt optimization (\textsc{Iapo}) framework for aligning black-box LLMs, emphasizing that prompt design and deployment‑time inference scaling strategies are tightly coupled and should be optimized jointly. Our proposed \textsc{Psst} and Top-$\boldsymbol{K}$ screening heuristic demonstrate consistent improvements over strong baselines across six different settings. Looking ahead, we plan to explore richer inference scaling policies (e.g., tree search and parallel thinking). We also aim to extend the framework to multi-objective alignment with hard latency constraints and to study long-horizon deployments under distribution shift.

\section*{Acknowledgments}
This research was supported in part by the U.S.~Army DEVCOM Analysis Center (DAC) under contract number W911QX23D0009, by the National Science Foundation
grants 2205153, 2321786, and 2416460, and by Schmidt Sciences under the AI Safety Science program.

\bibliography{aaai2026}

@article{chang2024efficient,
  title={Efficient prompting methods for large language models: A survey},
  author={Chang, Kaiyan and Xu, Songcheng and Wang, Chenglong and Luo, Yingfeng and Liu, Xiaoqian and Xiao, Tong and Zhu, Jingbo},
  journal={arXiv preprint arXiv:2404.01077},
  year={2024}
}

@inproceedings{zhou2022least,
  title={Least-to-most prompting enables complex reasoning in large language models},
  author={Zhou, Denny and Sch{\"a}rli, Nathanael and Hou, Le and Wei, Jason and Scales, Nathan and Wang, Xuezhi and Schuurmans, Dale and Cui, Claire and Bousquet, Olivier and Le, Quoc and Chi, Ed H.},
  booktitle={The 11th International Conference on Learning Representations},
  year={2023}
}

@inproceedings{wang2022self,
  title={Self-consistency improves chain of thought reasoning in language models},
  author={Wang, Xuezhi and Wei, Jason and Schuurmans, Dale and Le, Quoc and Chi, Ed and Narang, Sharan and Chowdhery, Aakanksha and Zhou, Denny},
  booktitle = {The Eleventh International Conference on Learning Representations},
  year      = {2023},
}

@article{yue2025does,
  title={Does reinforcement learning really incentivize reasoning capacity in LLMs beyond the base model?},
  author={Yue, Yang and Chen, Zhiqi and Lu, Rui and Zhao, Andrew and Wang, Zhaokai and Song, Shiji and Huang, Gao},
  journal={arXiv preprint arXiv:2504.13837},
  year={2025}
}

@inproceedings{krishna2022rankgen,
  title={Rankgen: Improving text generation with large ranking models},
  author={Krishna, Kalpesh and Chang, Yapei and Wieting, John and Iyyer, Mohit},
  booktitle = {Proceedings of the 2022 Conference on Empirical Methods in Natural Language Processing},
  year      = {2022},
}

@inproceedings{gui2024bonbon,
  title={Bonbon alignment for large language models and the sweetness of best-of-n sampling},
  author={Gui, Lin and G{\^a}rbacea, Cristina and Veitch, Victor},
  booktitle = {Proceedings of the 38th Conference on Neural Information Processing Systems},
  year={2024}
}

@misc{openai2024learning,
  author       = {OpenAI},
  title        = {Learning to reason with LLMs},
  howpublished = {\url{https://openai.com/index/learning-to-reason-with-llms/}},
  note         = {OpenAI Blog},
  month        = sep,
  year         = {2024}
}

@inproceedings{Karnin2013AlmostOE,
  title={Almost optimal exploration in multi-armed bandits},
  author={Zohar S. Karnin and Tomer Koren and Oren Somekh},
  booktitle={Proceedings of the the 30th International Conference on Machine Learning},
  year={2013},
  
}

@inproceedings{
chow2025inferenceaware,
title={Inference-aware fine-tuning for best-of-N sampling in large language models},
author={Yinlam Chow and Guy Tennenholtz and Izzeddin Gur and Vincent Zhuang and Bo Dai and Aviral Kumar and Rishabh Agarwal and Sridhar Thiagarajan and Craig Boutilier and Aleksandra Faust},
booktitle={Proceedings of the 13th International Conference on Learning Representations},
year={2025},
url={https://openreview.net/forum?id=77gQUdQhE7}
}

@inproceedings{ouyang2022training,
  title={Training language models to follow instructions with human feedback},
  author={Ouyang, Long and Wu, Jeffrey and Jiang, Xu and Almeida, Diogo and Wainwright, Carroll and Mishkin, Pamela and Zhang, Chong and Agarwal, Sandhini and Slama, Katarina and Ray, Alex and others},
  booktitle = {Proceedings of the 36th Conference on Neural Information Processing Systems},
  year={2022}
}

@article{minaee2024large,
  title={Large language models: A survey},
  author={Minaee, Shervin and Mikolov, Tomas and Nikzad, Narjes and Chenaghlu, Meysam and Socher, Richard and Amatriain, Xavier and Gao, Jianfeng},
  journal={arXiv preprint arXiv:2402.06196},
  year={2024}
}

@article{lambert2025reinforcement,
  title={Reinforcement learning from human feedback},
  author={Lambert, Nathan},
  journal={arXiv preprint arXiv:2504.12501},
  year={2025}
}

@inproceedings{trivedi2025align,
  title={{Align-Pro}: A principled approach to prompt optimization for LLM alignment},
  author={Trivedi, Prashant and Chakraborty, Souradip and Reddy, Avinash and Aggarwal, Vaneet and Bedi, Amrit Singh and Atia, George K},
  booktitle={Proceedings of the AAAI Conference on Artificial Intelligence},
  year={2025}
}

@inproceedings{cheng2023black,
  title={Black-box prompt optimization: Aligning large language models without model training},
  author={Cheng, Jiale and Liu, Xiao and Zheng, Kehan and Ke, Pei and Wang, Hongning and Dong, Yuxiao and Tang, Jie and Huang, Minlie},
  booktitle = {Proceedings of the 62nd Annual Meeting of the Association for Computational Linguistics},
  year      = {2024}
  

}

@inproceedings{jafari2024morl,
  title={{MORL-Prompt}: An empirical analysis of multi-objective reinforcement learning for discrete prompt optimization},
  author={Jafari, Yasaman and Mekala, Dheeraj and Yu, Rose and Berg-Kirkpatrick, Taylor},
  booktitle = {Findings of the Association for Computational Linguistics: EMNLP 2024},
  year      = {2024}
}

@inproceedings{
zhao2025pareto,
title={Pareto prompt optimization},
author={Guang Zhao and Byung-Jun Yoon and Gilchan Park and Shantenu Jha and Shinjae Yoo and Xiaoning Qian},
booktitle={Proceedings of the 13th International Conference on Learning Representations},
year={2025},
url={https://openreview.net/forum?id=HGCk5aaSvE}
}

@inproceedings{shi2024efficient,
  title={Efficient prompt optimization through the lens of best arm identification},
  author={Shi, Chengshuai and Yang, Kun and Chen, Zihan and Li, Jundong and Yang, Jing and Shen, Cong},
  booktitle = {Proceedings of the 38th Conference on Neural Information Processing Systems},
  year={2024}
}

@inproceedings{huang2024deal,
  title={Deal: Decoding-time alignment for large language models},
  author={Huang, James Y and Sengupta, Sailik and Bonadiman, Daniele and Lai, Yi-an and Gupta, Arshit and Pappas, Nikolaos and Mansour, Saab and Kirchhoff, Katrin and Roth, Dan},
  booktitle = {Proceedings of the 63rd Annual Meeting of the Association for Computational Linguistics},
  year      = {2025},
  
}

@inproceedings{xu2024genarm,
  title={GenARM: Reward guided generation with autoregressive reward model for test-time alignment},
  author={Xu, Yuancheng and Sehwag, Udari Madhushani and Koppel, Alec and Zhu, Sicheng and An, Bang and Huang, Furong and Ganesh, Sumitra},
  booktitle = {The Thirteenth International Conference on Learning Representations},
  year      = {2025}
}

@inproceedings{sessa2024bond,
  title={{BOND}: Aligning LLMs with best-of-n distillation},
  author={Sessa, Pier Giuseppe and Dadashi, Robert and Hussenot, L{\'e}onard and Ferret, Johan and Vieillard, Nino and Ram{\'e}, Alexandre and Shariari, Bobak and Perrin, Sarah and Friesen, Abe and Cideron, Geoffrey and others},
  booktitle = {The 13th International Conference on Learning Representations},
  year      = {2025}
}

@article{bai2022training,
  title={Training a helpful and harmless assistant with reinforcement learning from human feedback},
  author={Bai, Yuntao and Jones, Andy and Ndousse, Kamal and Askell, Amanda and Chen, Anna and DasSarma, Nova and Drain, Dawn and Fort, Stanislav and Ganguli, Deep and Henighan, Tom and others},
  journal={arXiv preprint arXiv:2204.05862},
  year={2022}
}

@inproceedings{hendrycks2021measuring,
  title={Measuring mathematical problem solving with the math dataset},
  author={Hendrycks, Dan and Burns, Collin and Kadavath, Saurav and Arora, Akul and Basart, Steven and Tang, Eric and Song, Dawn and Steinhardt, Jacob},
  booktitle = {Proceedings of the 35th Conference on Neural Information Processing Systems Track on Datasets and Benchmarks},
  year = {2021}
}

@article{grattafiori2024llama,
  title={The {Llama 3} herd of models},
  author={Grattafiori, Aaron and Dubey, Abhimanyu and Jauhri, Abhinav and Pandey, Abhinav and Kadian, Abhishek and Al-Dahle, Ahmad and Letman, Aiesha and Mathur, Akhil and Schelten, Alan and Vaughan, Alex and others},
  journal={arXiv preprint arXiv:2407.21783},
  year={2024}
}

@inproceedings{stockp,
author = {Fabiano, Nicolas and Cazenave, Tristan},
title = {Sequential halving using scores},
year = {2021},
booktitle = {Proceedings of the 17th International Conference on Advances in Computer Games},
}

@article{JMLR:v7:evendar06a,
  author  = {Eyal Even-Dar and Shie Mannor and Yishay Mansour},
  title   = {Action elimination and stopping conditions for the multi-armed bandit and reinforcement learning problems},
  journal = {Journal of Machine Learning Research},
  year    = {2006},
}

@inproceedings{stienon2020learning,
  author = {Nisan Stiennon and Long Ouyang and Jeff Wu and Daniel M. Ziegler and Ryan Lowe and Chelsea Voss and Alec Radford and Dario Amodei and Paul Christiano},
  title = {Learning to summarize from human feedback},
  booktitle = {Proceedings of the 34th Conference on Neural Information Processing Systems},
  year = 2020,
}

@inproceedings{talmor2018commonsenseqa,
  title={Commonsenseqa: A question answering challenge targeting commonsense knowledge},
  author={Talmor, Alon and Herzig, Jonathan and Lourie, Nicholas and Berant, Jonathan},
  booktitle = {Proceedings of the 2019 Conference of the North American Chapter of the Association for Computational Linguistics: Human Language Technologies},
  year={2018}
}

@inproceedings{kwon2023efficient,
  title={Efficient Memory Management for Large Language Model Serving with PagedAttention},
  author={Woosuk Kwon and Zhuohan Li and Siyuan Zhuang and Ying Sheng and Lianmin Zheng and Cody Hao Yu and Joseph E. Gonzalez and Hao Zhang and Ion Stoica},
  booktitle={Proceedings of the ACM SIGOPS 29th Symposium on Operating Systems Principles},
  year={2023}
}

@article{yang2024rewards,
  title={Rewards-in-context: Multi-objective alignment of foundation models with dynamic preference adjustment},
  author={Yang, Rui and Pan, Xiaoman and Luo, Feng and Qiu, Shuang and Zhong, Han and Yu, Dong and Chen, Jianshu},
  booktitle = {Proceedings of the 41st International Conference on Machine Learning},
  year={2024}
}

@inproceedings{mahmud2023explanation,
  title={Explanation-guided reward alignment},
  author={Mahmud, Saaduddin and Saisubramanian, Sandhya and Zilberstein, Shlomo},
  booktitle={Proceedings of the 32nd International Joint Conference on Artificial Intelligence},
  year={2023}
}

@article{Wilcoxon1945IndividualCB,
  title={Individual Comparisons by Ranking Methods},
  author={Frank Wilcoxon},
  journal={Biometrics},
  year={1945},
}

\section*{Appendix A}
           
\subsection{Proof of Theorem 1}

\begin{theorem}[Error of \textsc{Psst}]
\label{thm:psst-uniform}
Let \(R=\lceil\log_2|\mathcal A|\rceil\) be the number of trimming rounds, assume
$[o_k^{\min},o_k^{\max}] = [-1,1]$, and define the \emph{cost--gap complexity}
\[
H_1^c
=\max_{(c,a^{c,i})\neq (c, a^{c,1})}\frac{\bar N_{\max}}{\Delta_{c,a^{c,i}}^{\,2}},
\quad
H_1= \max_{c} H_1^c,
\]
\[
H_2^c
=\max_{(c,a^{c,i})\neq (c, a^{c,1})}\frac{i\bar N_{\max}}{\Delta_{c,a^{c,i}}^{\,2}},
\quad
H_2= \max_{c} H_2^c,
\]
where $\Delta_{c,a^{c,i}}= Q^{\alpha}(c,a^{c,1})-Q^{\alpha}(c,a^{c,i})$.
Under a context $c$, arms are indexed based on ascending order of $Q^\alpha(c,a)$ and
$\bar N_{\max}$ is defined as $ \frac{N(a^{c,1})+N_{\max}}{2}$.
Here, $N(a^{c, i})$ is the number of completions generated by the $i$-th indexed arm.
Running \textsc{Psst} with the structure-aware allocation for a total completion budget \(T\)
returns the optimal arm in \emph{every} context with probability at least
\[
1-3|\mathcal C|R\exp\!\left(
-\frac{T}{\min\!\bigl(2|\mathcal P|H_1,\;8|\mathcal{C}|H_2\bigr)\,R}
\right).
\]
Equivalently, to ensure failure probability at most~\(\delta\) it suffices to choose
\[
T = O\!\left(
\min\!\bigl(|\mathcal P|H_1,\;|\mathcal C|H_2\bigr)\,R\,
\log\!\Bigl(\tfrac{|\mathcal C|R}{\delta}\Bigr)
\right).
\]
\end{theorem}

\paragraph{Indexing convention.}
Although Theorem~\ref{thm:psst-uniform} states that arms are indexed in ascending order of
$Q^\alpha(c,a)$, throughout the proofs below we adopt the equivalent convention that
$a^{c,1}$ denotes an optimal arm for context $c$, i.e.,
$Q^\alpha(c,a^{c,1})=\max_a Q^\alpha(c,a)$, and thus
$Q^\alpha(c,a^{c,1}) \ge Q^\alpha(c,a^{c,2}) \ge \cdots$.
All quantities (e.g., $\Delta_{c,a^{c,i}}$ and $\bar N_i$) are interpreted under this convention.
In particular, $\Delta_{c,a^{c,i}} \ge 0$ for all $i\ge 2$.

\begin{lemma}
\label{lm:1.1}
The probability that the best arm under context $c$ is eliminated from context $c$ on round $r$ is at most
\[
2\, \exp\!\Big( - \tfrac{T}{2|\mathcal P|H_1^cR } \Big).
\]
\end{lemma}

\begin{proof}
Fix a context $c$ and recall the above convention that $a^{c,1}$ denotes an optimal arm for $c$.
Assume $a^{c,1}$ was not eliminated before round $r$.
Let $t_{r,i}$ denote the number of i.i.d.\ samples used to estimate
$Q^\alpha(c,a^{c,i})$ in round $r$, and define the (effective) harmonic mean
\[
\operatorname{harmonic}(u,v)\;:=\;\left(u^{-1}+v^{-1}\right)^{-1}.
\]
Then, by Hoeffding's inequality, for any arm $a^{c,i}\in \mathcal A_c^{(r)}$,
\[
\begin{aligned}
\Pr\!\big[\hat{Q}^{\alpha,(r)}(c,a^{c,1})
&< \hat{Q}^{\alpha,(r)}(c,a^{c,i})\big] \\
&\le \exp\!\Bigl(
-\tfrac{1}{2}\,\operatorname{harmonic}(t_{r,1},t_{r,i})\,\Delta_{c,a^{c,i}}^{2}
\Bigr).
\end{aligned}
\]

Let $N_r$ be the number of arms in $\mathcal A_c^{(r)}$ whose empirical estimate exceeds that of
$a^{c,1}$. Then
\begin{align*}
\mathbb{E}[N_r]
&\le \sum_{a^{c,i}\in \mathcal A_c^{(r)}}
\Pr\!\big[\hat{Q}^{\alpha,(r)}(c,a^{c,1}) < \hat{Q}^{\alpha,(r)}(c,a^{c,i})\big] \\
&\le \sum_{a^{c,i}\in \mathcal A_c^{(r)}}
\exp\!\Bigl(
-\tfrac{1}{2}\,\operatorname{harmonic}(t_{r,1},t_{r,i})\,\Delta_{c,a^{c,i}}^{2}
\Bigr) \\
&\le \sum_{a^{c,i}\in \mathcal A_c^{(r)}}
\exp\!\Bigl(
-\Delta_{c,a^{c,i}}^{2}\cdot
\tfrac{T}{2|\mathcal P|\,\bar N_i\,\log_2|\mathcal A|}
\Bigr) \\
&\le |\mathcal A_c^{(r)}|\,
\max_{a^{c,i}\in \mathcal A_c^{(r)}}
\exp\!\Bigl(
-\Delta_{c,a^{c,i}}^{2}\cdot
\tfrac{T}{2|\mathcal P|\,\bar N_{\max}\,\log_2|\mathcal A|}
\Bigr) \\
&\le |\mathcal A_c^{(r)}|\,
\exp\!\Bigl(
-\tfrac{T}{2|\mathcal P|\,H_1^c\,R}
\Bigr),
\end{align*}
where $\bar N_i := \frac{N(a^{c,1})+N(a^{c,i})}{2}\le \bar N_{\max}$ and we used $R=\lceil\log_2|\mathcal A|\rceil$.

For the best arm to be eliminated in round $r$, at least half of the active arms must (incorrectly)
rank above it, i.e.,
\( N_r \ge \tfrac{1}{2}\,|\mathcal{A}_c^{(r)}| \).
By Markov's inequality,
\[
\Pr\!\Big[N_r \ge \tfrac{1}{2}|\mathcal{A}_c^{(r)}|\Big]
\;\le\; \frac{2\,\mathbb{E}[N_r]}{|\mathcal{A}_c^{(r)}|}
\;\le\; 2\,\exp\!\Bigl(-\tfrac{T}{2|\mathcal P|H_1^cR}\Bigr),
\]
which proves the lemma.
\end{proof}

\begin{lemma}
\label{lm:1.2}
The probability that the best arm under context $c$ is eliminated from context $c$ on round $r$ is at most
\[
3\, \exp\!\Big( - \tfrac{T}{8|\mathcal C|H_2^cR } \Big).
\]
\end{lemma}

\begin{proof}
This follows by adapting Lemma~4.3 of \citet{Karnin2013AlmostOE} to our setting
(using the same indexing convention as above).
In particular, the key intermediate step is that
\begin{align*}
\mathbb{E}[N_r]
&\le \sum_{a^{c,i}\in \mathcal A_c^{(r)}}
\Pr\!\big[\hat{Q}^{\alpha,(r)}(c,a^{c,1}) < \hat{Q}^{\alpha,(r)}(c,a^{c,i})\big] \\
&\le \sum_{a^{c,i}\in \mathcal A_c^{(r)}}
\exp\!\Bigl(
-\Delta_{c,a^{c,i}}^{2}\cdot
\tfrac{2^{r}T}{8|\mathcal C|\,|\mathcal A|\,\bar N_i\,\log_2|\mathcal A|}
\Bigr),
\end{align*}
and proceeding exactly as in \citet{Karnin2013AlmostOE} yields the stated bound
(with complexity term $H_2^c$).
\end{proof}

\begin{proof}[Proof of Theorem 1]
The best arm needs to survive for all $R$ rounds and under all contexts $\mathcal C$. Therefore, from the Lemma~\ref{lm:1.1}:
\[
\sum_{r=1}^{R} \sum_{c} 2\exp\!\Big( - \tfrac{T}{2|\mathcal P|H_1^c R} \Big)
\;\le\; 3 |\mathcal C|R\exp\!\Big( - \tfrac{T}{2|\mathcal P|H_1 R} \Big)
\]
From the Lemma~\ref{lm:1.2}:
\[
\sum_{r=1}^{R} \sum_{c} 3\exp\!\Big( - \tfrac{T}{8|\mathcal C|H_2^c R} \Big)
\;\le\; 3 |\mathcal C|R\exp\!\Big( - \tfrac{T}{8|\mathcal C|H_2 R} \Big)
\]
Combining both:
\[
3 |\mathcal C|R\exp\!\Big( - \tfrac{T}{\min(2|\mathcal P|H_1,8|\mathcal C|H_2) R} \Big)
\]
which gives the theorem.
\end{proof}

\setcounter{proposition}{1}
\begin{proposition}[Inference-Agnostic Optimality]
 The Inference-Agnostic prompt-optimization policy remains optimal under linear transformation of $R_{x}^{\textsc{IA}}(c,a)$, that is, $\sigma R_{x}^{\textsc{IA}}(c,a), \sigma > 0$ and an optimal policy can be recovered trivially from $Q$-function under affine transformation:
\[
    Q^{AF}(c,a)
      :=\mathbb{E}_{x\sim\mathcal{X}}\bigl[\sigma R_{x}^{\textsc{IA}}(c,a)+\mu\bigr] = \sigma Q^{\textsc{IA}}(c,a)+\mu .
      \quad
\]
\end{proposition}
\begin{proof}
By linearity of expectation,
\[
\begin{aligned}
Q'(c,a)
&= \mathbb{E}_{x}\!\left[\sigma R_{x}^{\textsc{IA}}(c,a)+\mu\right] \\
&= \sigma\,\mathbb{E}_{x}\!\left[R_{x}^{\textsc{IA}}(c,a)\right] + \mu \\
&= \sigma Q^{\textsc{IA}}(c,a) + \mu .
\end{aligned}
\]
Since $\sigma>0$, for any two arms $a,b$ we have
\[
\begin{aligned}
Q'(c,a) \ge Q'(c,b)
&\iff \sigma Q^{\textsc{IA}}(c,a)+\mu
      \ge \sigma Q^{\textsc{IA}}(c,b)+\mu \\
&\iff Q^{\textsc{IA}}(c,a) \ge Q^{\textsc{IA}}(c,b).
\end{aligned}
\]
Therefore the ordering of arms is preserved and the $\arg\max$ set is identical.
\end{proof}

\section*{Appendix B}

\paragraph{Synthetic-Bernoulli Environment.}
We consider a setting with $|\mathcal{P}| = 32$ prompts, each evaluated over a hidden mixture of query difficulty tiers—$\{\text{easy}, \text{medium}, \text{hard}\}$—spanning $|\mathcal{X}| = 520$ queries, with proportions $6:4:3$. For each prompt $p$ and query $x$, the single-shot success probability is denoted $q_p(x) \in [0,1]$.

A pull of $N \leq N_{\max}$ for prompt $p$ on query $x$ generates i.i.d.\ Bernoulli outcomes $\{z_i\}_{i=1}^N$ where $\Pr(z_i = 1) = q_p(x)$, and each completion incurs a per-completion cost $k_p$. The result is an array $\bigl[\;z_i,\;k_p\;\bigr]_{i=1}^N$.

Majority Voting (\textsc{MV}) sets $M = 1$ if $\sum_i z_i > N/2$, $M = 0$ if $\sum_i z_i < N/2$, and assigns $M = 0.5$ (by fair coin) in the case of a tie ($N$ even, $\sum_i z_i = N/2$).

The cost-adjusted utility for context $c \in \{\text{low}, \text{mid}, \text{high}\}$ is computed as
\[
u_c = w_1 M - w_2(c) \sum_{i=1}^N k_p,
\]
where $w_1 = 1$ and $w_2(c) \in \{0,\ 0.2,\ 1.0\}$ depending on the cost tier.

To instantiate the environment, we generate two prompt archetypes: \emph{deceiving prompts}, which achieve high average accuracy but exhibit low $q_p(x)$ on hard queries, and \emph{all-rounders}, which maintain moderate accuracy more uniformly across tiers. Per-prompt costs $k_p$ are sampled from a normal distribution with mean $0.02$ and variance $0.005$.

\paragraph{Synthetic-Categorical Environment.}
We model $|\mathcal{P}| = 32$ prompts, each paired with $|\mathcal{X}| = 512$ queries and $K = 2$ positive objectives. For every $(p, x)$, there are $M$ categorical outcomes, each represented by a vector $o_j \in \mathbb{R}^K$. A pull of $N \leq N_{\max}$ (with $N_{\max}=32$) for prompt $p$ on query $x$ generates $N$ i.i.d.\ outcome vectors, resulting in rows $[\;o_{i,1},\, o_{i,2},\, k_p\;]$, where $k_p$ denotes the per-completion cost for prompt $p$.

Given a context $c$ with weights $w = (w_1, w_2, w_{\text{cost}})$, where $w_1 + w_2 = 1$ and $w_{\text{cost}} \ge 0$, the Best-of-$N$ (\textsc{BoN}) utility is defined as
\[
u_c
= \max_{1 \leq i \leq N} \left( w_1 o_{i,1} + w_2 o_{i,2} \right)
- w_{\text{cost}}\, N\, k_p .
\]

To construct the environment, outcome vectors are sampled from $\{-4, \ldots, 4\}^2$. We instantiate two prompt archetypes: \emph{HMLV} (high mean, low variance; excels at $N{=}1$) and \emph{LMHV} (lower mean, high variance; benefits from larger $N$), each specializing in one objective. For each $(p, x)$, we add small per-query noise to the categorical outcome probabilities, introduce a mild train-to-test shift by perturbing these probabilities, sample per-prompt costs $k_p \in [0.02,\, 0.1]$, and draw context weights from a grid satisfying $w_1 + w_2 = 1$ with $w_{\text{cost}} \in \{0.1,\ 0.5,\ 1.0\}$.

\paragraph{MATH Environment.}
We select $316$ integer-answer problems from the MATH dataset%
\footnote{\url{https://huggingface.co/datasets/HuggingFaceH4/MATH-500}}.
A set of $25$ prompt templates is authored using \emph{ChatGPT-o3}. For each $(\text{prompt},\, \text{problem})$ pair, we sample $128$ responses from \texttt{Llama-3.3-70B-Instruct} at temperature $T = 0.7$, parsing each completion to its final integer answer.

The dataset is then processed as follows:
\begin{enumerate}
    \item For each problem, retain the global top-4 answers and group all other answers into a single \textsc{other} bucket (five categories in total).
    \item Compute per-prompt costs as the normalized average token length of its responses.
\end{enumerate}

This yields a categorical environment (analogous to the Synthetic-Categorical setting) with $|\mathcal{P}| = 25$, $N_{\max} = 32$, a uniform context prior $c \in \{\text{low},\,\text{mid},\,\text{high}\}$, and cost coefficients $\{0,\,0.2,\,1.0\}$. Utility is evaluated via \textsc{MV}.

\paragraph{CommonsenseQA Environment.}
We randomly sample $1{,}500$ multiple-choice questions from the CommonsenseQA corpus%
\footnote{\url{https://huggingface.co/datasets/tau/commonsense_qa}},
and author $48$ prompt templates using \emph{ChatGPT-o3}.
For each $(\text{prompt},\,\text{question})$ pair, we query \texttt{Llama-3.3-70B-Instruct} at temperature $T = 1.1$, collecting $128$ JSON-constrained answers (one of ``Option A''–``Option E''). Each prompt is assigned a constant cost $k_p = 0.01$ as we use JSON mode, and all completion has equal length.

The resulting data is used to construct a categorical environment (in analogy to the Synthetic-Categorical setting) with $|\mathcal{P}| = 48$, $N_{\max} = 32$, a uniform context prior, and cost coefficients $\{0,\,0.2,\,1.0\}$.

\paragraph{Helpful--Harmless Environment.}
We filter the \textsc{HH-RLHF} conversations%
\footnote{\url{https://huggingface.co/datasets/Anthropic/hh-rlhf}}
to the $1{,}355$ examples containing a single user query and a single assistant response.
Using \emph{ChatGPT-o3}, we craft $20$ prompt templates. For each $(\text{prompt},\,\text{query})$ pair, we sample $128$ continuations from \texttt{Llama-3.3-70B-Instruct} at temperature $T = 0.7$.
Each continuation is scored by separate public reward models~\cite{yang2024rewards} for \emph{helpfulness}%
\footnote{\url{Ray2333/gpt2-large-helpful-reward_model}}
and \emph{harmlessness}%
\footnote{\url{Ray2333/gpt2-large-harmless-reward_model}},
with scores normalized to $[-1,1]$.

The two reward scores are then binned on a $0.5$-spaced grid, producing a categorical distribution per $(\text{prompt},\,\text{query})$; per-prompt costs are computed as the average token length.
This data defines a categorical environment with $|\mathcal{P}|=20$, $N_{\max}=32$, and a uniform context prior over weight triples $(w_{\text{help}},\, w_{\text{harm}},\, w_{\text{cost}})$ with $w_{\text{help}} + w_{\text{harm}} = 1$ and $w_{\text{cost}} \in \{0.1,\, 0.5,\, 1.0\}$.

\paragraph{Summarization Environment.}
We randomly sample $1{,}201$ Reddit posts from the Summarize-from-Feedback corpus%
\footnote{\url{https://huggingface.co/datasets/openai/summarize_from_feedback}}
and design $20$ summarization prompt templates using \emph{ChatGPT-o3}.
For each $(\text{prompt},\,\text{post})$ pair, we query \texttt{Llama-3.3-70B-Instruct} at temperature $T = 0.7$ and collect $128$ candidate summaries.

Each summary is scored by two publicly available reward models: \emph{Preference}%
\footnote{\url{OpenAssistant/reward-model-deberta-v3-large-v2}}
and \emph{Faithful}%
\footnote{\url{CogComp/bart-faithful-summary-detector}},
with raw scores normalized to $[-1,1]$. We then bin each dimension in steps of $0.5$, producing a categorical distribution over the two reward dimensions, and compute per-prompt costs from average token length.

This data defines a categorical environment with $|\mathcal{P}|=20$, $N_{\max}=32$, and a uniform context prior over weight triples $(w_{\text{pref}},\, w_{\text{faith}},\, w_{\text{cost}})$ where $w_{\text{pref}} + w_{\text{faith}} = 1$ and $w_{\text{cost}} \in \{0.1,\, 0.5,\, 1.0\}$.

\textbf{Note:} All prompts are available under the prompts folder of the code base.

\section*{Appendix C}
\paragraph{Top-\(K\) screening.}
For the screening variant, we fixed \(K=4\) candidates after screening and swept the burn-in fraction \(\rho\in\{0.05,0.10,0.20,0.30,0.40\}\), which allocates a \(\rho\)-fraction of the budget to obtain initial estimates before trimming. The parameter-sweep protocol matched the baselines. We selected \(\rho=0.20\) for reporting, as it achieved the best overall performance while remaining robust across datasets and inference regimes (Table~\ref{tab:psst+k4_hp_paramT}).

\paragraph{UCB.}
We tuned the exploration constant over \(c\in\{0.1,0.5,1.0,2.0,4.0,8.0\}\) under the same budgets, using 20\% of the data per environment with identical seeds across settings, and \(10{,}000\) test contexts. The agent ranks arms by the standard UCB index
\[
\mathrm{UCB}_i(t)\;=\;\hat{\mu}_i(t)\;+\;c\,\sqrt{\frac{\ln t}{n_i(t)}},
\]
where \(\hat{\mu}_i(t)\) is the empirical mean utility of arm \(i\), \(n_i(t)\) its pull count, and \(t\) the total pulls. We selected \(c=0.1\) for reporting, as it achieved the best overall performance while remaining robust across datasets and inference regimes (Table~\ref{tab:ucb_hp_paramT}).

\paragraph{$\epsilon$-greedy.}
We swept the greedy probability \(1-\epsilon\in\{0.50,0.75,0.80,0.85,0.90,0.95\}\) separately for each dataset and inference regime (\textsc{MV}, \textsc{BoN}). For every \(\epsilon\), agents were trained under budgets \(T\in\{3\text{K},5\text{K},10\text{K},20\text{K},30\text{K},40\text{K}\}\), using 20\% of the data per environment with deterministic reseeding; evaluation used \(10{,}000\) test contexts per environment. We selected \(\epsilon=0.15\) for reporting, as it achieved the best overall performance while remaining robust across datasets and inference regimes (Table~\ref{tab:greedy_hp_paramT}).

\begin{table*}[t]
\centering
\small
\begin{tabular}{lcccccc}
\toprule
Param $\times$ T & HH & Summarization & SC & SB & MATH & CQA \\
\midrule
$\rho$=0.05, $T=3000$ & 0.40 $\pm$ 0.00 & 0.20 $\pm$ 0.00 & 2.77 $\pm$ 0.02 & 0.83 $\pm$ 0.00 & 0.79 $\pm$ 0.01 & 0.75 $\pm$ 0.00 \\
$\rho$=0.05, $T=5000$ & 0.40 $\pm$ 0.00 & 0.22 $\pm$ 0.00 & 2.83 $\pm$ 0.02 & 0.83 $\pm$ 0.00 & 0.81 $\pm$ 0.01 & 0.76 $\pm$ 0.00 \\
$\rho$=0.05, $T=10000$ & 0.42 $\pm$ 0.00 & 0.21 $\pm$ 0.00 & 2.83 $\pm$ 0.02 & 0.85 $\pm$ 0.01 & 0.81 $\pm$ 0.01 & 0.76 $\pm$ 0.00 \\
$\rho$=0.05, $T=20000$ & 0.43 $\pm$ 0.00 & 0.22 $\pm$ 0.00 & 2.87 $\pm$ 0.02 & 0.84 $\pm$ 0.00 & 0.80 $\pm$ 0.01 & 0.76 $\pm$ 0.00 \\
$\rho$=0.05, $T=30000$ & 0.44 $\pm$ 0.00 & 0.23 $\pm$ 0.00 & 2.88 $\pm$ 0.01 & 0.84 $\pm$ 0.00 & 0.82 $\pm$ 0.01 & 0.77 $\pm$ 0.00 \\
$\rho$=0.05, $T=40000$ & 0.43 $\pm$ 0.00 & 0.23 $\pm$ 0.00 & 2.87 $\pm$ 0.02 & 0.84 $\pm$ 0.00 & 0.81 $\pm$ 0.00 & 0.77 $\pm$ 0.00 \\
$\rho$=0.10, $T=3000$ & 0.41 $\pm$ 0.00 & 0.21 $\pm$ 0.00 & 2.79 $\pm$ 0.02 & 0.83 $\pm$ 0.00 & 0.80 $\pm$ 0.01 & 0.76 $\pm$ 0.00 \\
$\rho$=0.10, $T=5000$ & 0.41 $\pm$ 0.00 & 0.22 $\pm$ 0.00 & 2.84 $\pm$ 0.02 & 0.84 $\pm$ 0.00 & 0.81 $\pm$ 0.01 & 0.76 $\pm$ 0.00 \\
$\rho$=0.10, $T=10000$ & 0.42 $\pm$ 0.00 & 0.21 $\pm$ 0.00 & 2.86 $\pm$ 0.02 & 0.84 $\pm$ 0.00 & 0.81 $\pm$ 0.01 & 0.77 $\pm$ 0.00 \\
$\rho$=0.10, $T=20000$ & 0.43 $\pm$ 0.00 & 0.23 $\pm$ 0.00 & 2.88 $\pm$ 0.02 & 0.84 $\pm$ 0.00 & 0.81 $\pm$ 0.01 & 0.77 $\pm$ 0.00 \\
$\rho$=0.10, $T=30000$ & 0.44 $\pm$ 0.00 & 0.23 $\pm$ 0.00 & 2.89 $\pm$ 0.02 & 0.84 $\pm$ 0.00 & 0.81 $\pm$ 0.01 & 0.77 $\pm$ 0.01 \\
$\rho$=0.10, $T=40000$ & 0.44 $\pm$ 0.00 & 0.23 $\pm$ 0.00 & 2.88 $\pm$ 0.02 & 0.84 $\pm$ 0.00 & 0.82 $\pm$ 0.00 & 0.77 $\pm$ 0.00 \\
$\rho$=0.20, $T=3000$ & 0.41 $\pm$ 0.00 & 0.20 $\pm$ 0.00 & 2.77 $\pm$ 0.02 & 0.83 $\pm$ 0.00 & 0.80 $\pm$ 0.01 & 0.76 $\pm$ 0.00 \\
$\rho$=0.20, $T=5000$ & 0.41 $\pm$ 0.00 & 0.22 $\pm$ 0.00 & 2.84 $\pm$ 0.02 & 0.83 $\pm$ 0.00 & 0.80 $\pm$ 0.01 & 0.76 $\pm$ 0.00 \\
$\rho$=0.20, $T=10000$ & 0.43 $\pm$ 0.00 & 0.22 $\pm$ 0.00 & 2.85 $\pm$ 0.02 & 0.83 $\pm$ 0.00 & 0.81 $\pm$ 0.01 & 0.76 $\pm$ 0.00 \\
$\rho$=0.20, $T=20000$ & 0.43 $\pm$ 0.00 & 0.23 $\pm$ 0.00 & 2.87 $\pm$ 0.01 & 0.84 $\pm$ 0.00 & 0.81 $\pm$ 0.01 & 0.77 $\pm$ 0.00 \\
$\rho$=0.20, $T=30000$ & 0.44 $\pm$ 0.00 & 0.23 $\pm$ 0.00 & 2.89 $\pm$ 0.02 & 0.84 $\pm$ 0.00 & 0.82 $\pm$ 0.01 & 0.76 $\pm$ 0.00 \\
$\rho$=0.20, $T=40000$ & 0.44 $\pm$ 0.00 & 0.22 $\pm$ 0.00 & 2.88 $\pm$ 0.02 & 0.84 $\pm$ 0.00 & 0.82 $\pm$ 0.01 & 0.77 $\pm$ 0.00 \\
$\rho$=0.30, $T=3000$ & 0.41 $\pm$ 0.00 & 0.21 $\pm$ 0.00 & 2.81 $\pm$ 0.02 & 0.83 $\pm$ 0.00 & 0.80 $\pm$ 0.01 & 0.76 $\pm$ 0.00 \\
$\rho$=0.30, $T=5000$ & 0.41 $\pm$ 0.00 & 0.22 $\pm$ 0.00 & 2.85 $\pm$ 0.01 & 0.83 $\pm$ 0.00 & 0.81 $\pm$ 0.01 & 0.76 $\pm$ 0.00 \\
$\rho$=0.30, $T=10000$ & 0.42 $\pm$ 0.00 & 0.22 $\pm$ 0.00 & 2.85 $\pm$ 0.02 & 0.84 $\pm$ 0.00 & 0.81 $\pm$ 0.01 & 0.76 $\pm$ 0.00 \\
$\rho$=0.30, $T=20000$ & 0.43 $\pm$ 0.00 & 0.22 $\pm$ 0.00 & 2.88 $\pm$ 0.01 & 0.83 $\pm$ 0.00 & 0.81 $\pm$ 0.01 & 0.76 $\pm$ 0.00 \\
$\rho$=0.30, $T=30000$ & 0.44 $\pm$ 0.00 & 0.22 $\pm$ 0.00 & 2.88 $\pm$ 0.01 & 0.84 $\pm$ 0.00 & 0.82 $\pm$ 0.01 & 0.76 $\pm$ 0.00 \\
$\rho$=0.30, $T=40000$ & 0.44 $\pm$ 0.00 & 0.23 $\pm$ 0.00 & 2.89 $\pm$ 0.01 & 0.84 $\pm$ 0.00 & 0.82 $\pm$ 0.01 & 0.77 $\pm$ 0.00 \\
$\rho$=0.40, $T=3000$ & 0.41 $\pm$ 0.00 & 0.20 $\pm$ 0.00 & 2.80 $\pm$ 0.01 & 0.83 $\pm$ 0.00 & 0.80 $\pm$ 0.01 & 0.76 $\pm$ 0.00 \\
$\rho$=0.40, $T=5000$ & 0.42 $\pm$ 0.00 & 0.20 $\pm$ 0.00 & 2.80 $\pm$ 0.02 & 0.83 $\pm$ 0.00 & 0.80 $\pm$ 0.01 & 0.76 $\pm$ 0.00 \\
$\rho$=0.40, $T=10000$ & 0.43 $\pm$ 0.00 & 0.22 $\pm$ 0.00 & 2.85 $\pm$ 0.01 & 0.83 $\pm$ 0.00 & 0.81 $\pm$ 0.01 & 0.77 $\pm$ 0.00 \\
$\rho$=0.40, $T=20000$ & 0.43 $\pm$ 0.00 & 0.22 $\pm$ 0.00 & 2.87 $\pm$ 0.02 & 0.83 $\pm$ 0.00 & 0.81 $\pm$ 0.01 & 0.77 $\pm$ 0.00 \\
$\rho$=0.40, $T=30000$ & 0.44 $\pm$ 0.00 & 0.22 $\pm$ 0.00 & 2.88 $\pm$ 0.01 & 0.83 $\pm$ 0.00 & 0.81 $\pm$ 0.01 & 0.77 $\pm$ 0.00 \\
$\rho$=0.40, $T=40000$ & 0.44 $\pm$ 0.00 & 0.23 $\pm$ 0.00 & 2.89 $\pm$ 0.01 & 0.84 $\pm$ 0.00 & 0.82 $\pm$ 0.01 & 0.77 $\pm$ 0.00 \\
\bottomrule
\end{tabular}
\caption{PSST+K4: mean $\pm$ SEM across datasets (rows are param, $\rho$ and $T$).}
\label{tab:psst+k4_hp_paramT}
\end{table*}

\begin{table*}[t]
\centering
\small
\begin{tabular}{lcccccc}
\toprule
Param $\times$ T & HH & Summarization & SC & SB & MATH & CQA \\
\midrule
c=0.1, $T=3000$ & 0.38 $\pm$ 0.00 & 0.19 $\pm$ 0.01 & 2.83 $\pm$ 0.02 & 0.82 $\pm$ 0.00 & 0.78 $\pm$ 0.01 & 0.75 $\pm$ 0.00 \\
c=0.1, $T=5000$ & 0.39 $\pm$ 0.00 & 0.21 $\pm$ 0.00 & 2.88 $\pm$ 0.01 & 0.83 $\pm$ 0.00 & 0.80 $\pm$ 0.01 & 0.75 $\pm$ 0.00 \\
c=0.1, $T=10000$ & 0.41 $\pm$ 0.00 & 0.23 $\pm$ 0.00 & 2.94 $\pm$ 0.01 & 0.83 $\pm$ 0.00 & 0.80 $\pm$ 0.01 & 0.76 $\pm$ 0.00 \\
c=0.1, $T=20000$ & 0.43 $\pm$ 0.00 & 0.25 $\pm$ 0.00 & 2.98 $\pm$ 0.01 & 0.86 $\pm$ 0.00 & 0.80 $\pm$ 0.01 & 0.76 $\pm$ 0.00 \\
c=0.1, $T=30000$ & 0.43 $\pm$ 0.00 & 0.25 $\pm$ 0.00 & 2.99 $\pm$ 0.01 & 0.88 $\pm$ 0.00 & 0.81 $\pm$ 0.01 & 0.76 $\pm$ 0.01 \\
c=0.1, $T=40000$ & 0.44 $\pm$ 0.00 & 0.25 $\pm$ 0.00 & 3.00 $\pm$ 0.01 & 0.89 $\pm$ 0.00 & 0.81 $\pm$ 0.01 & 0.76 $\pm$ 0.00 \\
c=0.5, $T=3000$ & 0.37 $\pm$ 0.00 & 0.19 $\pm$ 0.01 & 2.85 $\pm$ 0.02 & 0.82 $\pm$ 0.00 & 0.78 $\pm$ 0.01 & 0.75 $\pm$ 0.00 \\
c=0.5, $T=5000$ & 0.38 $\pm$ 0.00 & 0.20 $\pm$ 0.00 & 2.90 $\pm$ 0.01 & 0.82 $\pm$ 0.00 & 0.79 $\pm$ 0.01 & 0.75 $\pm$ 0.00 \\
c=0.5, $T=10000$ & 0.41 $\pm$ 0.00 & 0.23 $\pm$ 0.00 & 2.94 $\pm$ 0.01 & 0.83 $\pm$ 0.00 & 0.80 $\pm$ 0.01 & 0.76 $\pm$ 0.00 \\
c=0.5, $T=20000$ & 0.43 $\pm$ 0.00 & 0.24 $\pm$ 0.00 & 2.98 $\pm$ 0.01 & 0.84 $\pm$ 0.00 & 0.80 $\pm$ 0.01 & 0.75 $\pm$ 0.00 \\
c=0.5, $T=30000$ & 0.43 $\pm$ 0.00 & 0.24 $\pm$ 0.00 & 3.00 $\pm$ 0.01 & 0.86 $\pm$ 0.00 & 0.80 $\pm$ 0.01 & 0.75 $\pm$ 0.00 \\
c=0.5, $T=40000$ & 0.44 $\pm$ 0.00 & 0.25 $\pm$ 0.00 & 3.00 $\pm$ 0.01 & 0.88 $\pm$ 0.00 & 0.81 $\pm$ 0.01 & 0.75 $\pm$ 0.00 \\
c=1.0, $T=3000$ & 0.37 $\pm$ 0.00 & 0.19 $\pm$ 0.01 & 2.88 $\pm$ 0.02 & 0.82 $\pm$ 0.00 & 0.78 $\pm$ 0.01 & 0.75 $\pm$ 0.00 \\
c=1.0, $T=5000$ & 0.37 $\pm$ 0.00 & 0.19 $\pm$ 0.01 & 2.91 $\pm$ 0.02 & 0.83 $\pm$ 0.00 & 0.79 $\pm$ 0.01 & 0.75 $\pm$ 0.00 \\
c=1.0, $T=10000$ & 0.41 $\pm$ 0.00 & 0.22 $\pm$ 0.00 & 2.94 $\pm$ 0.01 & 0.83 $\pm$ 0.00 & 0.81 $\pm$ 0.01 & 0.76 $\pm$ 0.00 \\
c=1.0, $T=20000$ & 0.42 $\pm$ 0.00 & 0.24 $\pm$ 0.00 & 2.98 $\pm$ 0.01 & 0.84 $\pm$ 0.00 & 0.80 $\pm$ 0.01 & 0.75 $\pm$ 0.00 \\
c=1.0, $T=30000$ & 0.43 $\pm$ 0.00 & 0.24 $\pm$ 0.00 & 3.00 $\pm$ 0.01 & 0.86 $\pm$ 0.01 & 0.81 $\pm$ 0.01 & 0.76 $\pm$ 0.00 \\
c=1.0, $T=40000$ & 0.43 $\pm$ 0.00 & 0.25 $\pm$ 0.00 & 3.00 $\pm$ 0.01 & 0.87 $\pm$ 0.00 & 0.81 $\pm$ 0.01 & 0.76 $\pm$ 0.01 \\
c=2.0, $T=3000$ & 0.37 $\pm$ 0.00 & 0.18 $\pm$ 0.01 & 2.86 $\pm$ 0.02 & 0.82 $\pm$ 0.00 & 0.78 $\pm$ 0.01 & 0.75 $\pm$ 0.00 \\
c=2.0, $T=5000$ & 0.38 $\pm$ 0.00 & 0.19 $\pm$ 0.01 & 2.93 $\pm$ 0.01 & 0.83 $\pm$ 0.00 & 0.79 $\pm$ 0.01 & 0.76 $\pm$ 0.00 \\
c=2.0, $T=10000$ & 0.40 $\pm$ 0.00 & 0.23 $\pm$ 0.00 & 2.94 $\pm$ 0.01 & 0.84 $\pm$ 0.00 & 0.80 $\pm$ 0.01 & 0.76 $\pm$ 0.00 \\
c=2.0, $T=20000$ & 0.42 $\pm$ 0.00 & 0.24 $\pm$ 0.00 & 2.98 $\pm$ 0.01 & 0.85 $\pm$ 0.00 & 0.80 $\pm$ 0.01 & 0.75 $\pm$ 0.00 \\
c=2.0, $T=30000$ & 0.42 $\pm$ 0.00 & 0.24 $\pm$ 0.00 & 2.99 $\pm$ 0.01 & 0.86 $\pm$ 0.00 & 0.81 $\pm$ 0.01 & 0.76 $\pm$ 0.00 \\
c=2.0, $T=40000$ & 0.43 $\pm$ 0.00 & 0.25 $\pm$ 0.00 & 2.99 $\pm$ 0.01 & 0.88 $\pm$ 0.00 & 0.81 $\pm$ 0.01 & 0.75 $\pm$ 0.00 \\
c=4.0, $T=3000$ & 0.37 $\pm$ 0.00 & 0.18 $\pm$ 0.01 & 2.85 $\pm$ 0.02 & 0.82 $\pm$ 0.00 & 0.78 $\pm$ 0.01 & 0.75 $\pm$ 0.00 \\
c=4.0, $T=5000$ & 0.37 $\pm$ 0.00 & 0.18 $\pm$ 0.00 & 2.91 $\pm$ 0.01 & 0.83 $\pm$ 0.00 & 0.79 $\pm$ 0.01 & 0.76 $\pm$ 0.00 \\
c=4.0, $T=10000$ & 0.41 $\pm$ 0.00 & 0.22 $\pm$ 0.00 & 2.94 $\pm$ 0.01 & 0.84 $\pm$ 0.00 & 0.81 $\pm$ 0.01 & 0.76 $\pm$ 0.00 \\
c=4.0, $T=20000$ & 0.42 $\pm$ 0.00 & 0.24 $\pm$ 0.00 & 2.97 $\pm$ 0.01 & 0.84 $\pm$ 0.00 & 0.80 $\pm$ 0.01 & 0.75 $\pm$ 0.00 \\
c=4.0, $T=30000$ & 0.42 $\pm$ 0.00 & 0.24 $\pm$ 0.00 & 2.99 $\pm$ 0.01 & 0.86 $\pm$ 0.00 & 0.81 $\pm$ 0.01 & 0.75 $\pm$ 0.00 \\
c=4.0, $T=40000$ & 0.43 $\pm$ 0.00 & 0.25 $\pm$ 0.00 & 3.00 $\pm$ 0.01 & 0.87 $\pm$ 0.00 & 0.81 $\pm$ 0.01 & 0.76 $\pm$ 0.00 \\
c=8.0, $T=3000$ & 0.37 $\pm$ 0.00 & 0.18 $\pm$ 0.01 & 2.86 $\pm$ 0.02 & 0.82 $\pm$ 0.00 & 0.78 $\pm$ 0.01 & 0.75 $\pm$ 0.00 \\
c=8.0, $T=5000$ & 0.38 $\pm$ 0.00 & 0.19 $\pm$ 0.01 & 2.90 $\pm$ 0.02 & 0.83 $\pm$ 0.00 & 0.79 $\pm$ 0.01 & 0.76 $\pm$ 0.00 \\
c=8.0, $T=10000$ & 0.40 $\pm$ 0.00 & 0.22 $\pm$ 0.00 & 2.92 $\pm$ 0.01 & 0.84 $\pm$ 0.00 & 0.80 $\pm$ 0.01 & 0.76 $\pm$ 0.00 \\
c=8.0, $T=20000$ & 0.42 $\pm$ 0.00 & 0.24 $\pm$ 0.00 & 2.97 $\pm$ 0.01 & 0.84 $\pm$ 0.00 & 0.80 $\pm$ 0.01 & 0.76 $\pm$ 0.00 \\
c=8.0, $T=30000$ & 0.43 $\pm$ 0.00 & 0.24 $\pm$ 0.00 & 2.98 $\pm$ 0.01 & 0.86 $\pm$ 0.00 & 0.81 $\pm$ 0.01 & 0.75 $\pm$ 0.00 \\
c=8.0, $T=40000$ & 0.43 $\pm$ 0.00 & 0.25 $\pm$ 0.00 & 2.99 $\pm$ 0.01 & 0.87 $\pm$ 0.00 & 0.81 $\pm$ 0.01 & 0.76 $\pm$ 0.01 \\
\bottomrule
\end{tabular}
\caption{UCB: mean $\pm$ SEM across datasets (rows are param, $T$).}
\label{tab:ucb_hp_paramT}
\end{table*}

\begin{table*}[t]
\centering
\small
\begin{tabular}{lcccccc}
\toprule
Param $\times$ T & HH & Summarization & SC & SB & MATH & CQA \\
\midrule
e=0.50, $T=3000$ & 0.37 $\pm$ 0.00 & 0.17 $\pm$ 0.01 & 2.78 $\pm$ 0.02 & 0.83 $\pm$ 0.00 & 0.79 $\pm$ 0.01 & 0.76 $\pm$ 0.00 \\
e=0.50, $T=5000$ & 0.39 $\pm$ 0.00 & 0.20 $\pm$ 0.01 & 2.82 $\pm$ 0.01 & 0.83 $\pm$ 0.00 & 0.80 $\pm$ 0.01 & 0.75 $\pm$ 0.00 \\
e=0.50, $T=10000$ & 0.41 $\pm$ 0.00 & 0.21 $\pm$ 0.00 & 2.90 $\pm$ 0.01 & 0.84 $\pm$ 0.00 & 0.79 $\pm$ 0.01 & 0.75 $\pm$ 0.00 \\
e=0.50, $T=20000$ & 0.42 $\pm$ 0.00 & 0.23 $\pm$ 0.00 & 2.94 $\pm$ 0.01 & 0.84 $\pm$ 0.00 & 0.79 $\pm$ 0.01 & 0.74 $\pm$ 0.00 \\
e=0.50, $T=30000$ & 0.43 $\pm$ 0.00 & 0.24 $\pm$ 0.00 & 2.97 $\pm$ 0.01 & 0.85 $\pm$ 0.00 & 0.80 $\pm$ 0.01 & 0.75 $\pm$ 0.00 \\
e=0.50, $T=40000$ & 0.43 $\pm$ 0.00 & 0.25 $\pm$ 0.00 & 2.98 $\pm$ 0.01 & 0.85 $\pm$ 0.00 & 0.80 $\pm$ 0.01 & 0.75 $\pm$ 0.01 \\
e=0.75, $T=3000$ & 0.38 $\pm$ 0.00 & 0.16 $\pm$ 0.01 & 2.75 $\pm$ 0.03 & 0.83 $\pm$ 0.00 & 0.79 $\pm$ 0.01 & 0.75 $\pm$ 0.00 \\
e=0.75, $T=5000$ & 0.39 $\pm$ 0.00 & 0.17 $\pm$ 0.01 & 2.86 $\pm$ 0.01 & 0.84 $\pm$ 0.00 & 0.79 $\pm$ 0.01 & 0.76 $\pm$ 0.00 \\
e=0.75, $T=10000$ & 0.40 $\pm$ 0.00 & 0.20 $\pm$ 0.01 & 2.91 $\pm$ 0.01 & 0.84 $\pm$ 0.00 & 0.79 $\pm$ 0.01 & 0.76 $\pm$ 0.00 \\
e=0.75, $T=20000$ & 0.42 $\pm$ 0.00 & 0.23 $\pm$ 0.00 & 2.95 $\pm$ 0.01 & 0.84 $\pm$ 0.00 & 0.79 $\pm$ 0.00 & 0.74 $\pm$ 0.00 \\
e=0.75, $T=30000$ & 0.43 $\pm$ 0.00 & 0.23 $\pm$ 0.00 & 2.97 $\pm$ 0.01 & 0.85 $\pm$ 0.00 & 0.81 $\pm$ 0.01 & 0.75 $\pm$ 0.00 \\
e=0.75, $T=40000$ & 0.43 $\pm$ 0.00 & 0.24 $\pm$ 0.00 & 2.96 $\pm$ 0.01 & 0.85 $\pm$ 0.00 & 0.80 $\pm$ 0.01 & 0.74 $\pm$ 0.00 \\
e=0.80, $T=3000$ & 0.38 $\pm$ 0.00 & 0.18 $\pm$ 0.01 & 2.83 $\pm$ 0.02 & 0.83 $\pm$ 0.00 & 0.79 $\pm$ 0.01 & 0.75 $\pm$ 0.00 \\
e=0.80, $T=5000$ & 0.39 $\pm$ 0.00 & 0.19 $\pm$ 0.00 & 2.86 $\pm$ 0.02 & 0.83 $\pm$ 0.00 & 0.80 $\pm$ 0.01 & 0.76 $\pm$ 0.00 \\
e=0.80, $T=10000$ & 0.40 $\pm$ 0.00 & 0.19 $\pm$ 0.01 & 2.91 $\pm$ 0.01 & 0.84 $\pm$ 0.00 & 0.80 $\pm$ 0.01 & 0.76 $\pm$ 0.00 \\
e=0.80, $T=20000$ & 0.42 $\pm$ 0.00 & 0.23 $\pm$ 0.00 & 2.94 $\pm$ 0.01 & 0.84 $\pm$ 0.00 & 0.79 $\pm$ 0.01 & 0.75 $\pm$ 0.00 \\
e=0.80, $T=30000$ & 0.41 $\pm$ 0.00 & 0.23 $\pm$ 0.00 & 2.96 $\pm$ 0.01 & 0.84 $\pm$ 0.00 & 0.79 $\pm$ 0.01 & 0.75 $\pm$ 0.00 \\
e=0.80, $T=40000$ & 0.43 $\pm$ 0.00 & 0.24 $\pm$ 0.00 & 2.98 $\pm$ 0.01 & 0.85 $\pm$ 0.00 & 0.80 $\pm$ 0.01 & 0.74 $\pm$ 0.00 \\
e=0.85, $T=3000$ & 0.38 $\pm$ 0.00 & 0.16 $\pm$ 0.01 & 2.72 $\pm$ 0.04 & 0.83 $\pm$ 0.00 & 0.78 $\pm$ 0.01 & 0.75 $\pm$ 0.00 \\
e=0.85, $T=5000$ & 0.38 $\pm$ 0.00 & 0.17 $\pm$ 0.01 & 2.87 $\pm$ 0.01 & 0.83 $\pm$ 0.00 & 0.80 $\pm$ 0.01 & 0.76 $\pm$ 0.00 \\
e=0.85, $T=10000$ & 0.40 $\pm$ 0.00 & 0.20 $\pm$ 0.00 & 2.90 $\pm$ 0.01 & 0.84 $\pm$ 0.00 & 0.80 $\pm$ 0.01 & 0.76 $\pm$ 0.00 \\
e=0.85, $T=20000$ & 0.41 $\pm$ 0.00 & 0.22 $\pm$ 0.00 & 2.95 $\pm$ 0.01 & 0.84 $\pm$ 0.00 & 0.79 $\pm$ 0.01 & 0.75 $\pm$ 0.00 \\
e=0.85, $T=30000$ & 0.42 $\pm$ 0.00 & 0.23 $\pm$ 0.01 & 2.95 $\pm$ 0.01 & 0.85 $\pm$ 0.00 & 0.79 $\pm$ 0.01 & 0.75 $\pm$ 0.00 \\
e=0.85, $T=40000$ & 0.42 $\pm$ 0.00 & 0.24 $\pm$ 0.00 & 2.97 $\pm$ 0.01 & 0.85 $\pm$ 0.00 & 0.80 $\pm$ 0.01 & 0.75 $\pm$ 0.01 \\
e=0.90, $T=3000$ & 0.37 $\pm$ 0.00 & 0.17 $\pm$ 0.01 & 2.81 $\pm$ 0.03 & 0.82 $\pm$ 0.00 & 0.78 $\pm$ 0.01 & 0.75 $\pm$ 0.00 \\
e=0.90, $T=5000$ & 0.38 $\pm$ 0.00 & 0.17 $\pm$ 0.01 & 2.87 $\pm$ 0.02 & 0.83 $\pm$ 0.00 & 0.79 $\pm$ 0.01 & 0.76 $\pm$ 0.00 \\
e=0.90, $T=10000$ & 0.40 $\pm$ 0.00 & 0.19 $\pm$ 0.01 & 2.90 $\pm$ 0.01 & 0.84 $\pm$ 0.00 & 0.80 $\pm$ 0.01 & 0.76 $\pm$ 0.00 \\
e=0.90, $T=20000$ & 0.41 $\pm$ 0.00 & 0.22 $\pm$ 0.00 & 2.94 $\pm$ 0.01 & 0.84 $\pm$ 0.00 & 0.79 $\pm$ 0.01 & 0.75 $\pm$ 0.00 \\
e=0.90, $T=30000$ & 0.42 $\pm$ 0.00 & 0.23 $\pm$ 0.00 & 2.95 $\pm$ 0.01 & 0.85 $\pm$ 0.00 & 0.79 $\pm$ 0.01 & 0.75 $\pm$ 0.00 \\
e=0.90, $T=40000$ & 0.42 $\pm$ 0.00 & 0.24 $\pm$ 0.00 & 2.97 $\pm$ 0.01 & 0.85 $\pm$ 0.00 & 0.80 $\pm$ 0.01 & 0.75 $\pm$ 0.00 \\
e=0.95, $T=3000$ & 0.37 $\pm$ 0.00 & 0.17 $\pm$ 0.01 & 2.76 $\pm$ 0.03 & 0.82 $\pm$ 0.00 & 0.78 $\pm$ 0.01 & 0.75 $\pm$ 0.00 \\
e=0.95, $T=5000$ & 0.38 $\pm$ 0.00 & 0.17 $\pm$ 0.01 & 2.86 $\pm$ 0.01 & 0.83 $\pm$ 0.00 & 0.79 $\pm$ 0.01 & 0.76 $\pm$ 0.00 \\
e=0.95, $T=10000$ & 0.39 $\pm$ 0.00 & 0.19 $\pm$ 0.01 & 2.92 $\pm$ 0.01 & 0.84 $\pm$ 0.00 & 0.80 $\pm$ 0.01 & 0.76 $\pm$ 0.00 \\
e=0.95, $T=20000$ & 0.41 $\pm$ 0.00 & 0.21 $\pm$ 0.00 & 2.95 $\pm$ 0.01 & 0.84 $\pm$ 0.00 & 0.79 $\pm$ 0.01 & 0.75 $\pm$ 0.00 \\
e=0.95, $T=30000$ & 0.42 $\pm$ 0.00 & 0.22 $\pm$ 0.00 & 2.95 $\pm$ 0.01 & 0.85 $\pm$ 0.00 & 0.80 $\pm$ 0.01 & 0.75 $\pm$ 0.00 \\
e=0.95, $T=40000$ & 0.41 $\pm$ 0.00 & 0.23 $\pm$ 0.00 & 2.95 $\pm$ 0.01 & 0.84 $\pm$ 0.00 & 0.79 $\pm$ 0.00 & 0.75 $\pm$ 0.00 \\
\bottomrule
\end{tabular}
\caption{$\epsilon$-greedy: mean $\pm$ SEM across datasets (rows are param, $T$).}
\label{tab:greedy_hp_paramT}
\end{table*}

\section*{Appendix D}
\paragraph{Statistical testing.}
For each dataset and budget \(T\), we perform all pairwise algorithm comparisons using per-environment utilities as \emph{paired} samples (identical train/test splits via deterministic reseeding). Our default test is the two-sided Wilcoxon signed-rank test, which we apply to the aligned vectors after removing non-finite values and dropping exact ties (\texttt{zero\_method=wilcox}, \texttt{mode=auto}); pairs with fewer than two effective samples are skipped. When requested, we also report the paired sign test (binomial test on the sign of differences) after removing ties. To control multiplicity within each \((\text{dataset}, T)\) grid, we use Holm–Bonferroni adjustment by default (with options for Benjamini–Hochberg FDR or no correction). We declare a \emph{winner} if the adjusted \(p<\alpha=0.05\); the direction is determined by the sign of the median difference \(\operatorname{median}(x-y)\). In case of unequal environment counts across algorithms, samples are truncated to the minimum length to preserve pairing. Figures visualize the outcome matrix with entries in \(\{-1,0,+1\}\) indicating row-algorithm loss, non-significance, or win against the column algorithm, respectively. 

All the results are shown in Figs~\ref{fig:cqa_mv_pairwise_grid},~\ref{fig:hh_bon_pairwise_grid},~\ref{fig:math_mv_pairwise_grid},~\ref{fig:sb_mv_pairwise_grid},~\ref{fig:sc_bon_pairwise_grid},
~\ref{fig:summarization_bon_pairwise_grid}. Across all six datasets, we observe that PSST and the Top-K screening heuristic consistently outperform competing methods across most budget settings, with statistical significance.

\begin{figure*}[t]
\centering
\begin{tabular}{@{}ccc@{}}
\includegraphics[width=0.32\textwidth]{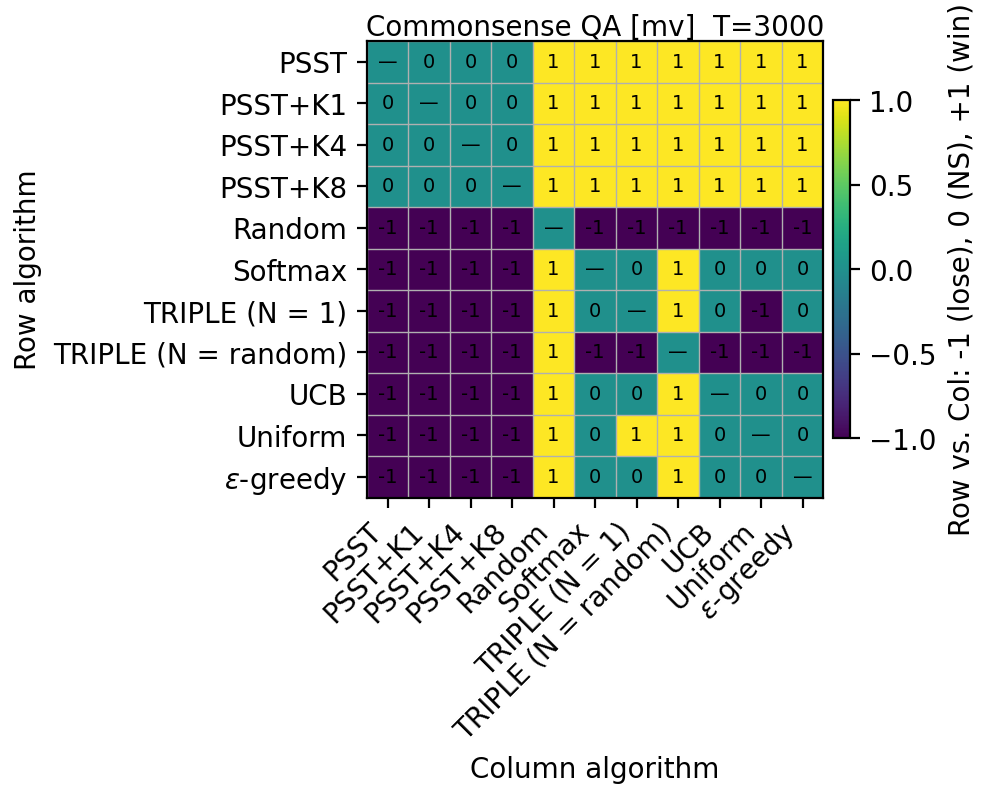} & \includegraphics[width=0.32\textwidth]{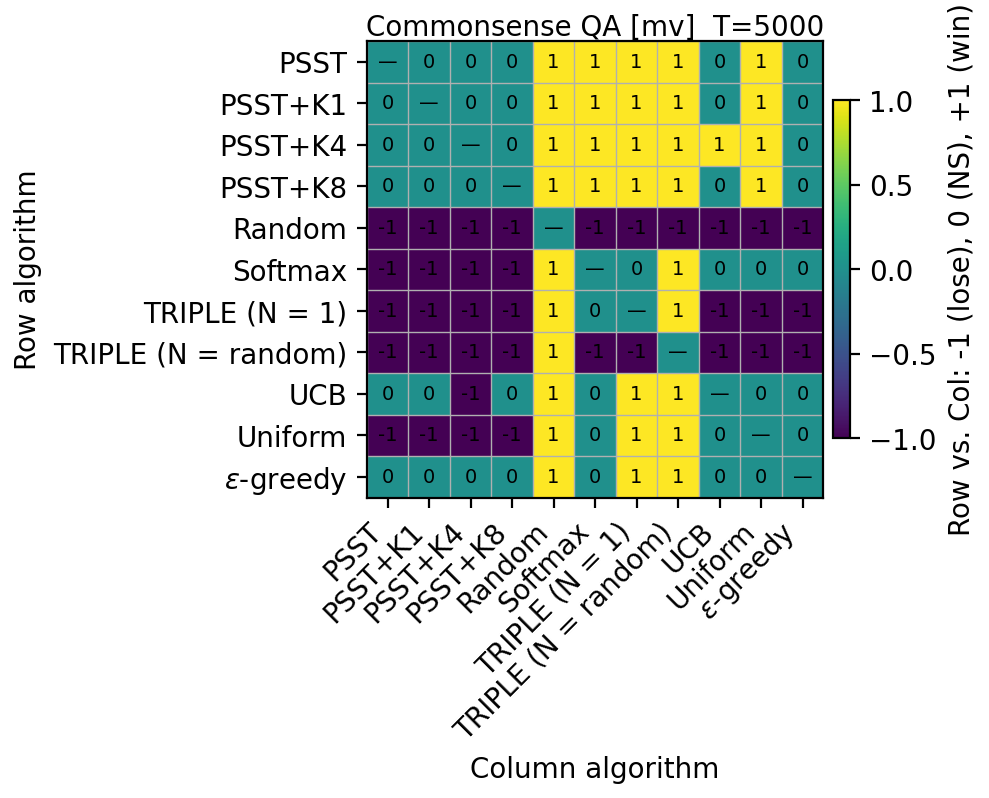} & \includegraphics[width=0.32\textwidth]{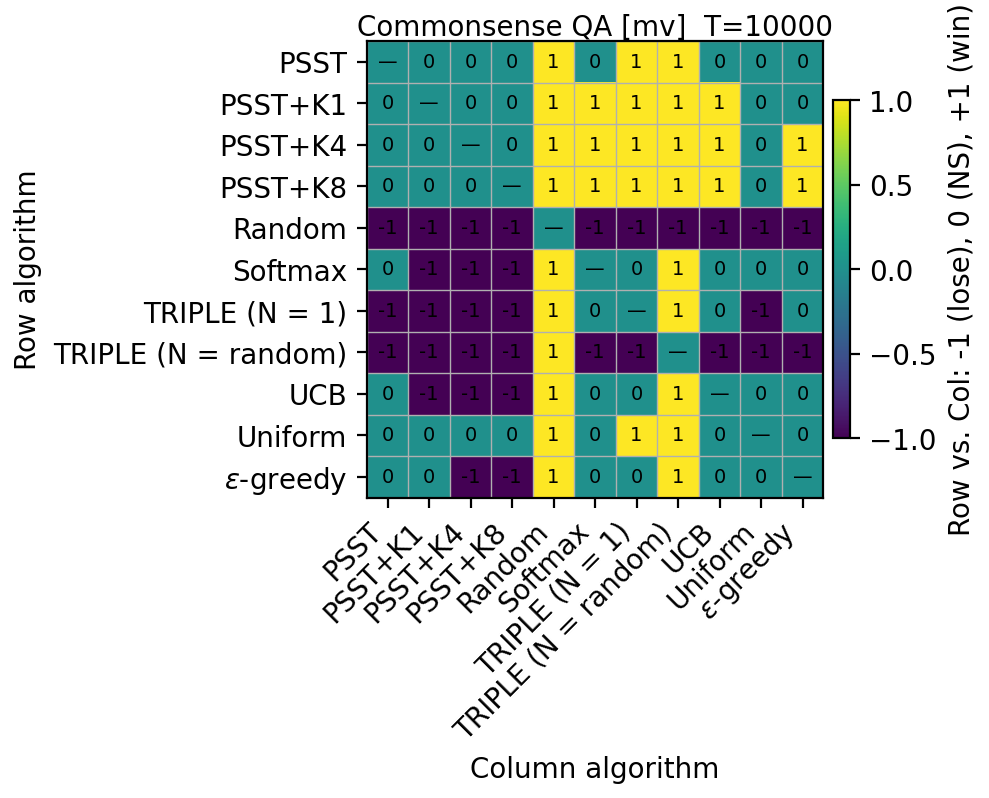} \\
[4pt]
\includegraphics[width=0.32\textwidth]{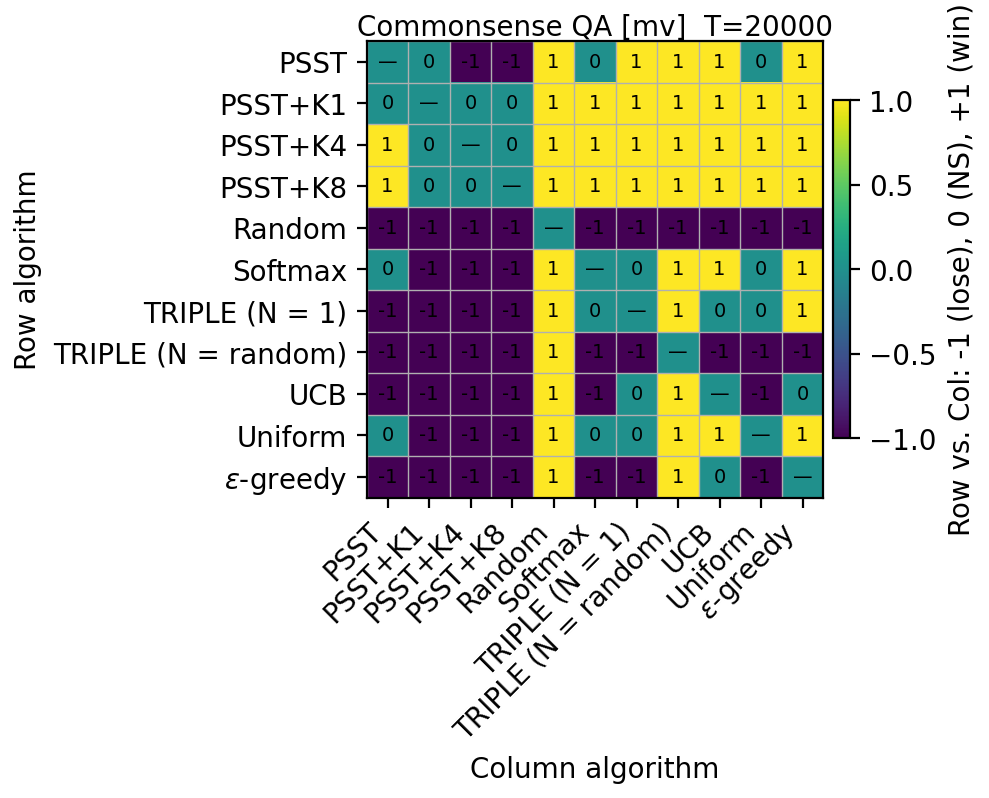} & \includegraphics[width=0.32\textwidth]{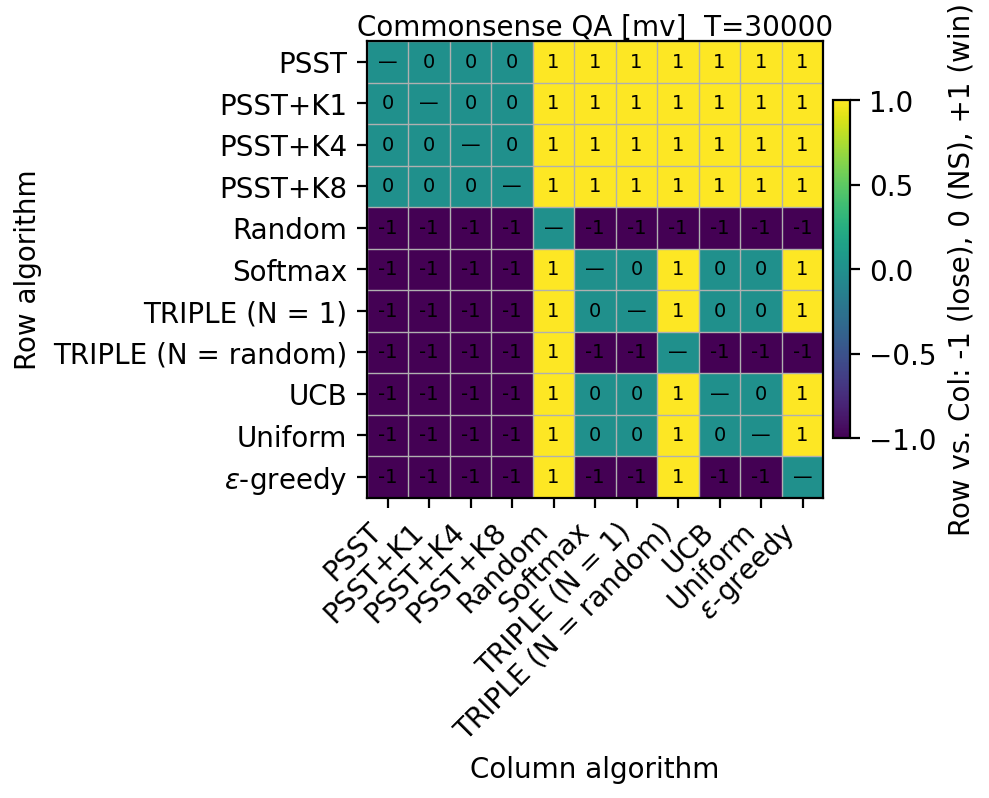} & \includegraphics[width=0.32\textwidth]{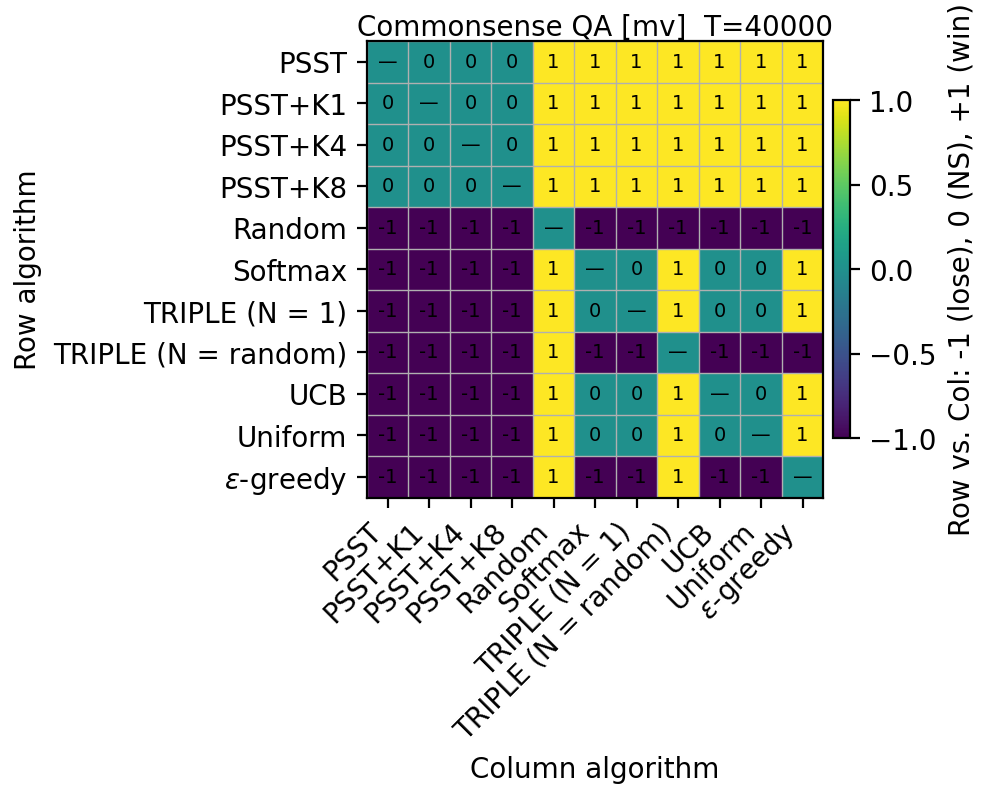}
\end{tabular}
\caption{Pairwise wins for Commonsense QA (MV) across six budgets ($T$ in order: 3000, 5000, 10000, 20000, 30000, 40000).}
\label{fig:cqa_mv_pairwise_grid}
\end{figure*}
\begin{figure*}[t]
\centering
\begin{tabular}{@{}ccc@{}}
\includegraphics[width=0.32\textwidth]{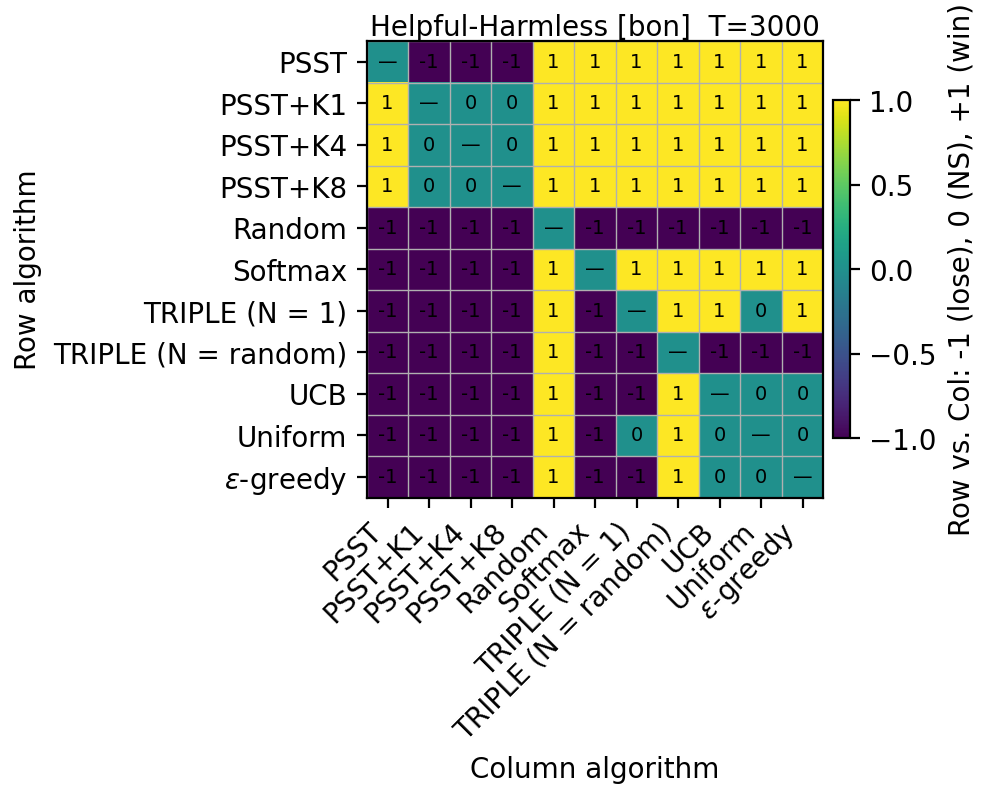} & \includegraphics[width=0.32\textwidth]{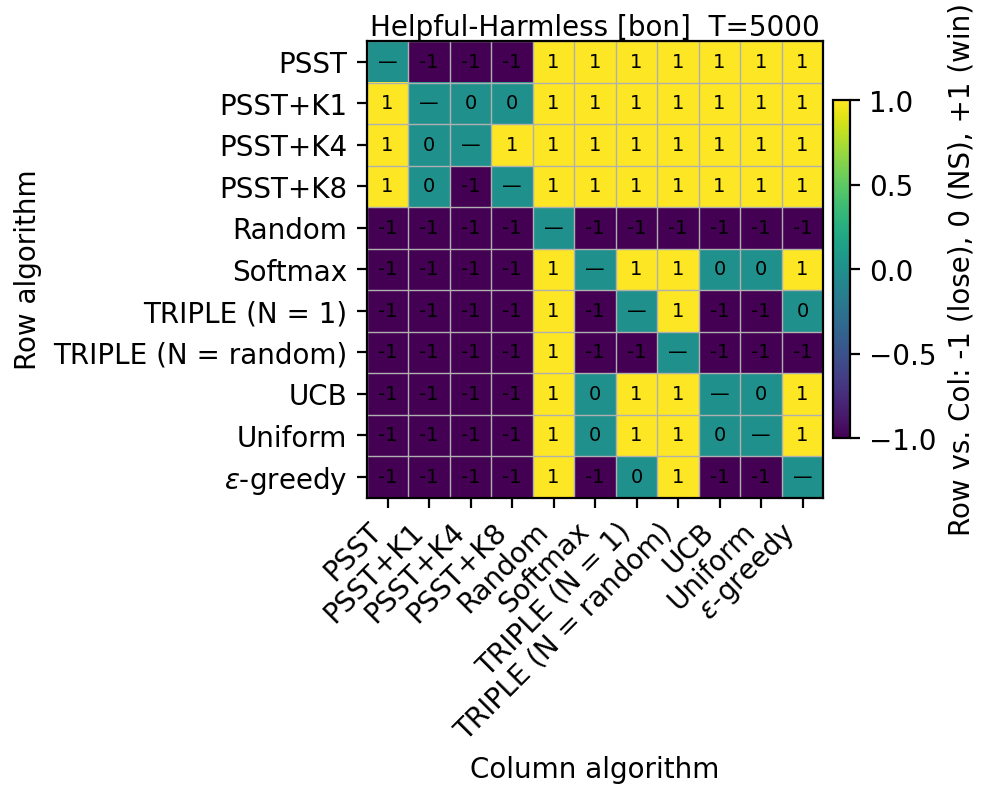} & \includegraphics[width=0.32\textwidth]{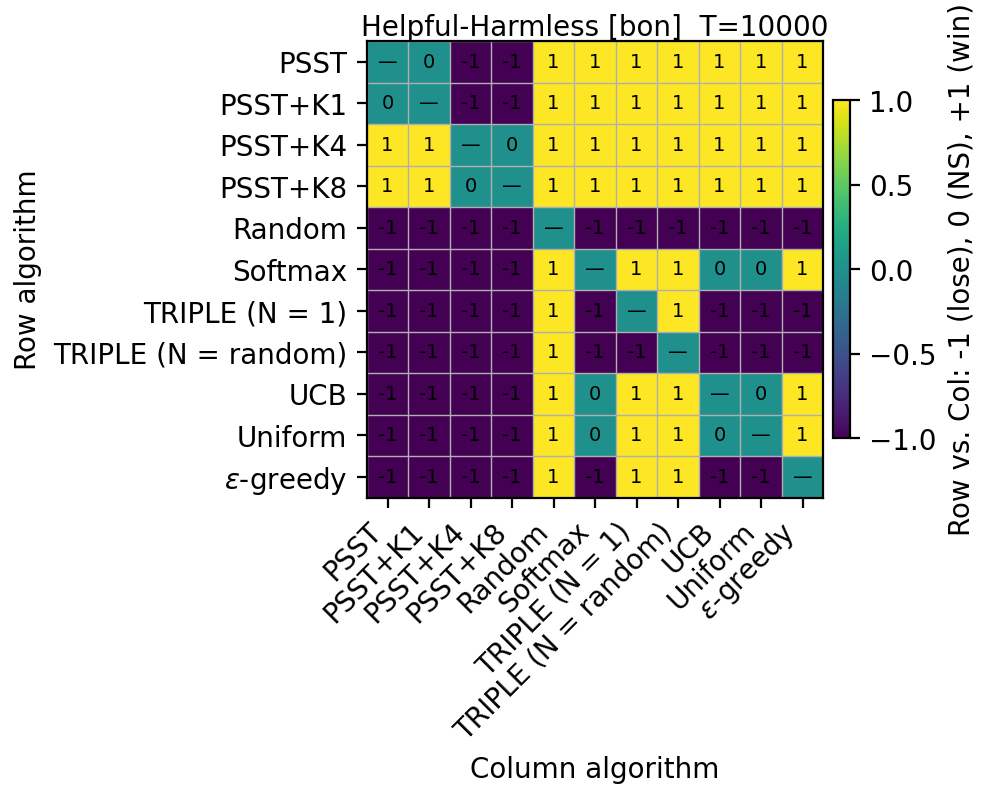} \\
[4pt]
\includegraphics[width=0.32\textwidth]{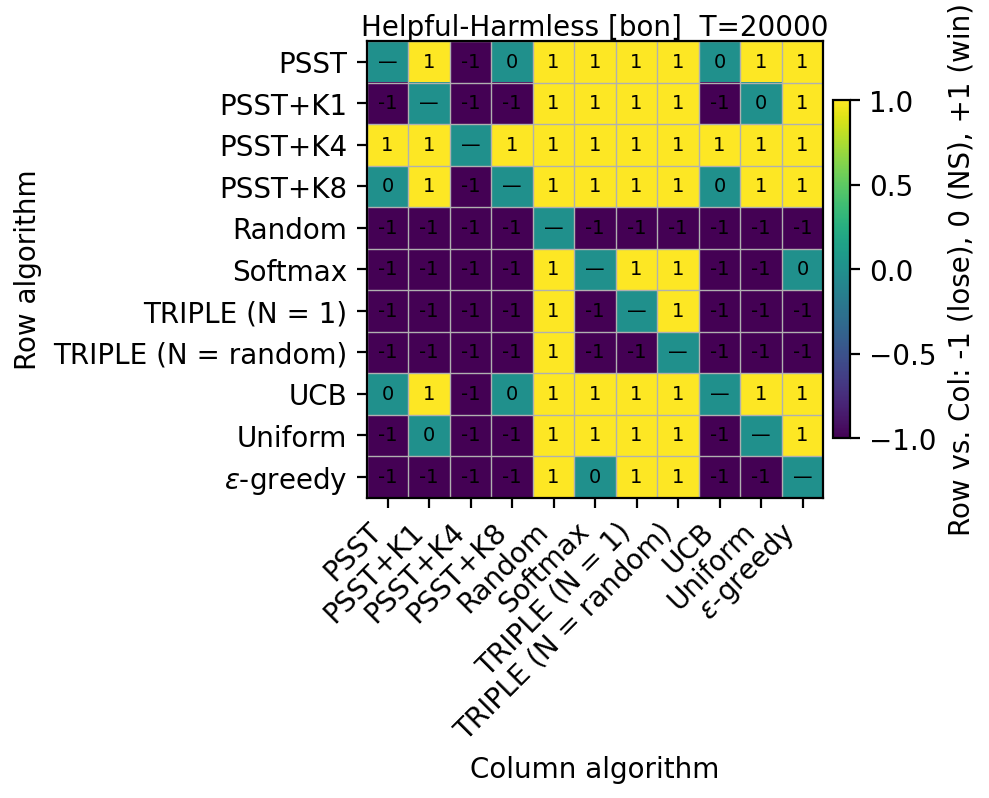} & \includegraphics[width=0.32\textwidth]{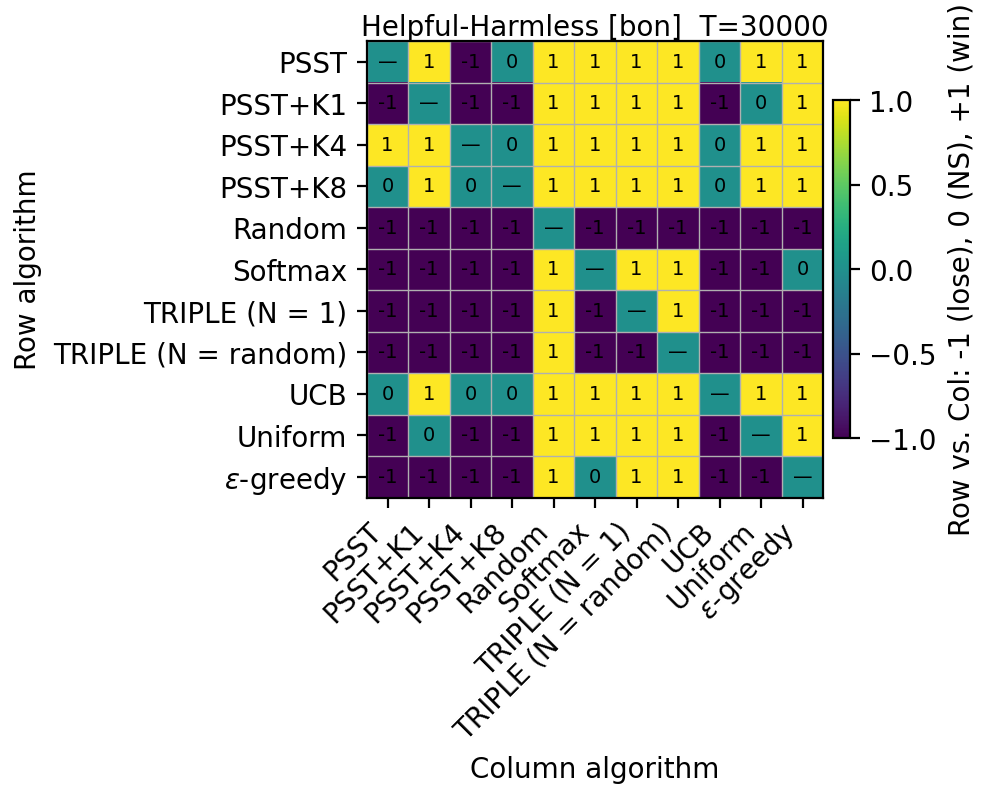} & \includegraphics[width=0.32\textwidth]{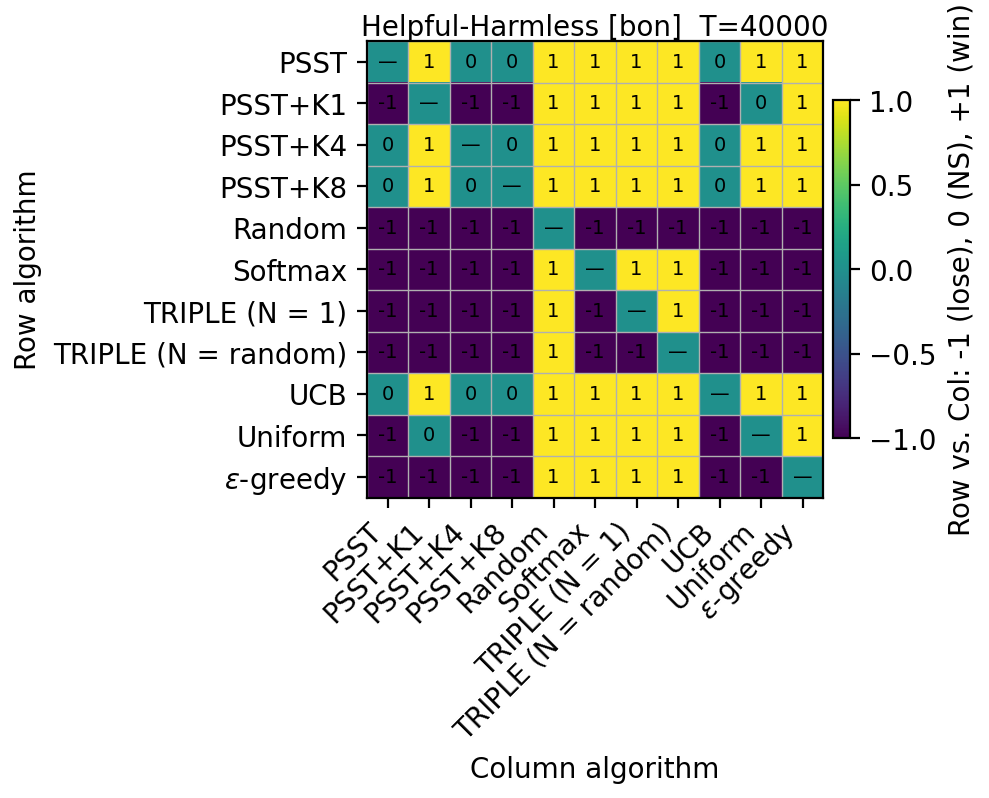}
\end{tabular}
\caption{Pairwise wins for Helpful-Harmless (BoN) across six budgets ($T$ in order: 3000, 5000, 10000, 20000, 30000, 40000).}
\label{fig:hh_bon_pairwise_grid}
\end{figure*}

\begin{figure*}[t]
\centering
\begin{tabular}{@{}ccc@{}}
\includegraphics[width=0.32\textwidth]{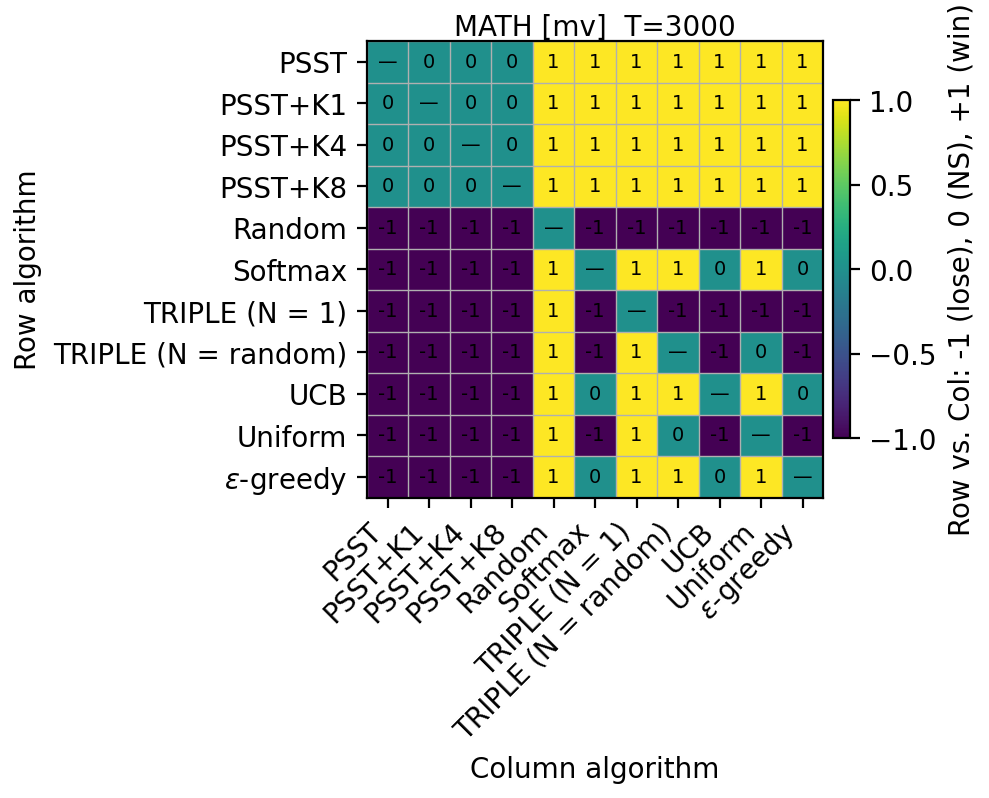} & \includegraphics[width=0.32\textwidth]{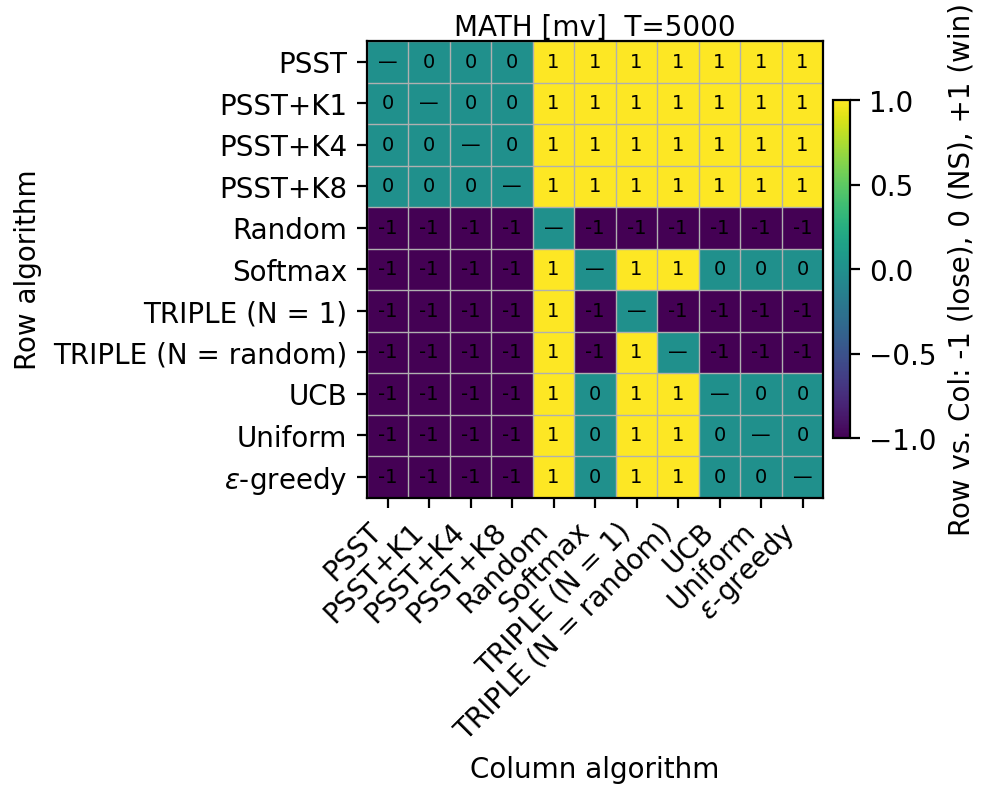} & \includegraphics[width=0.32\textwidth]{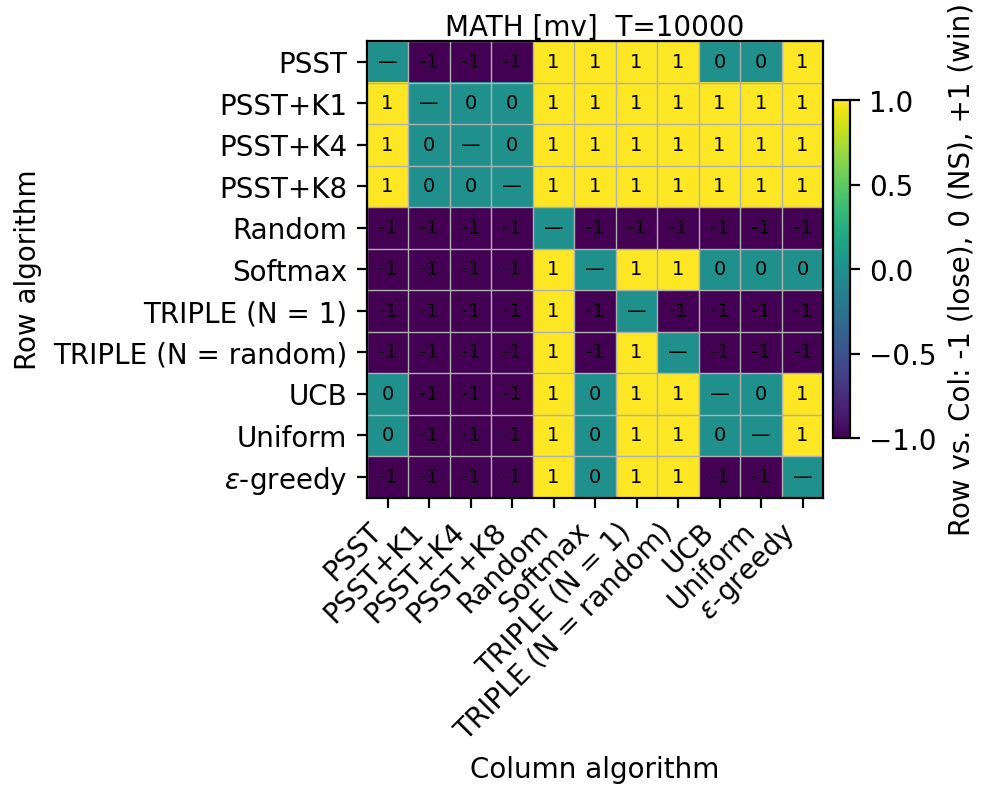} \\
[4pt]
\includegraphics[width=0.32\textwidth]{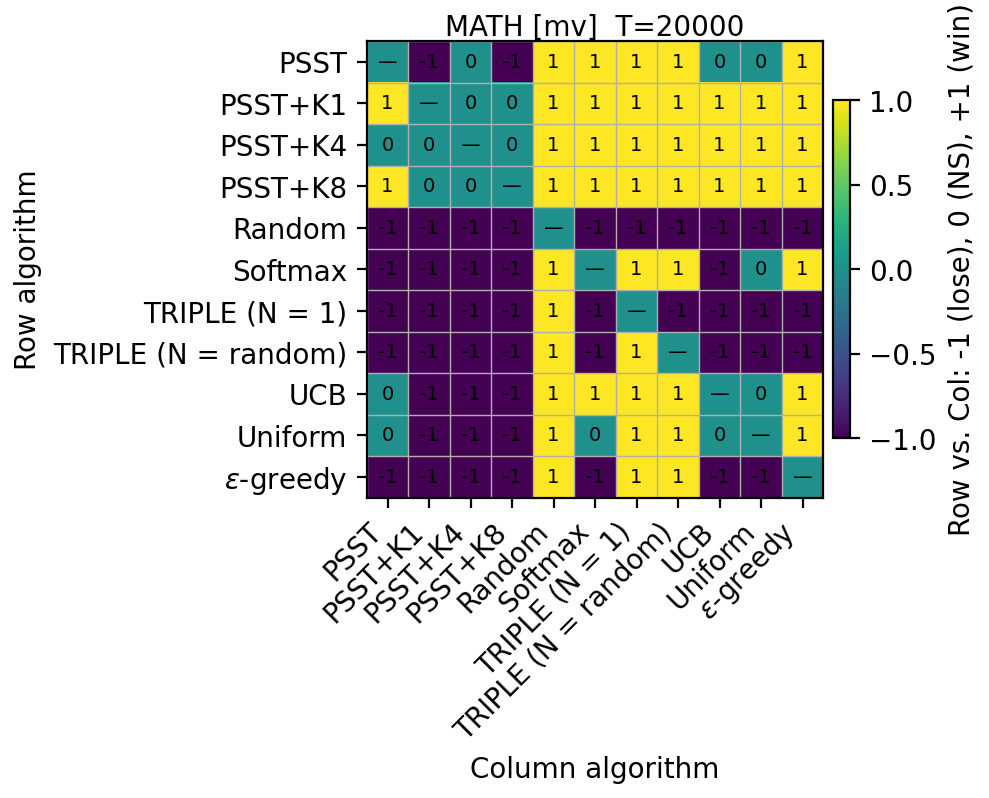} & \includegraphics[width=0.32\textwidth]{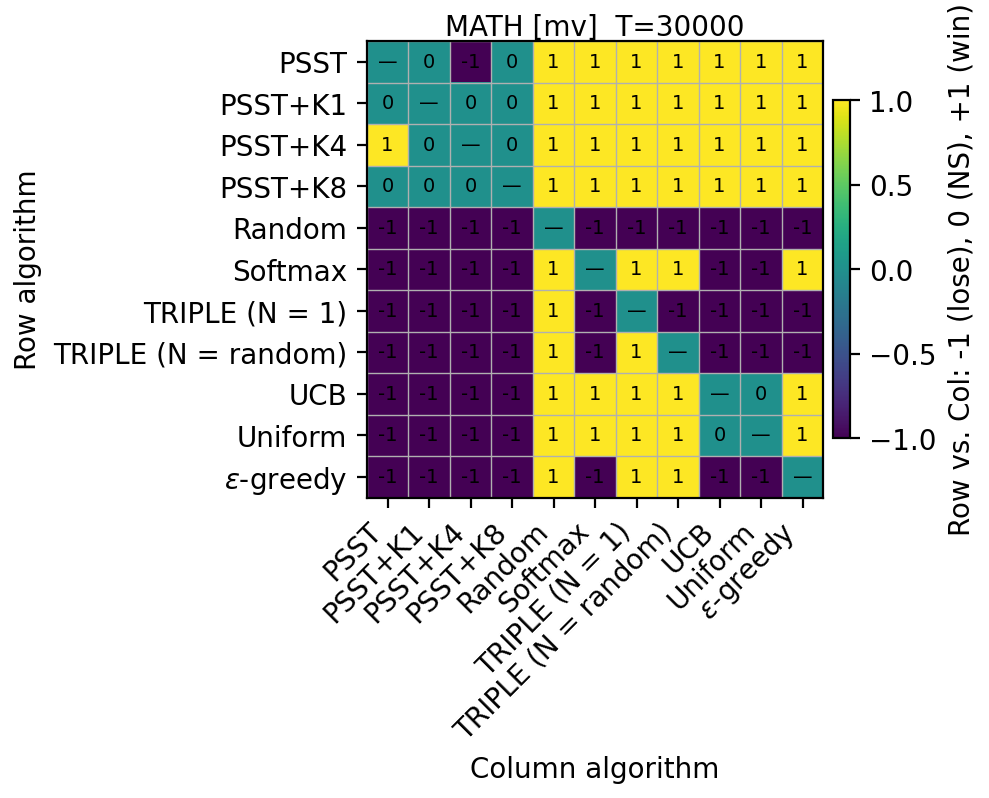} & \includegraphics[width=0.32\textwidth]{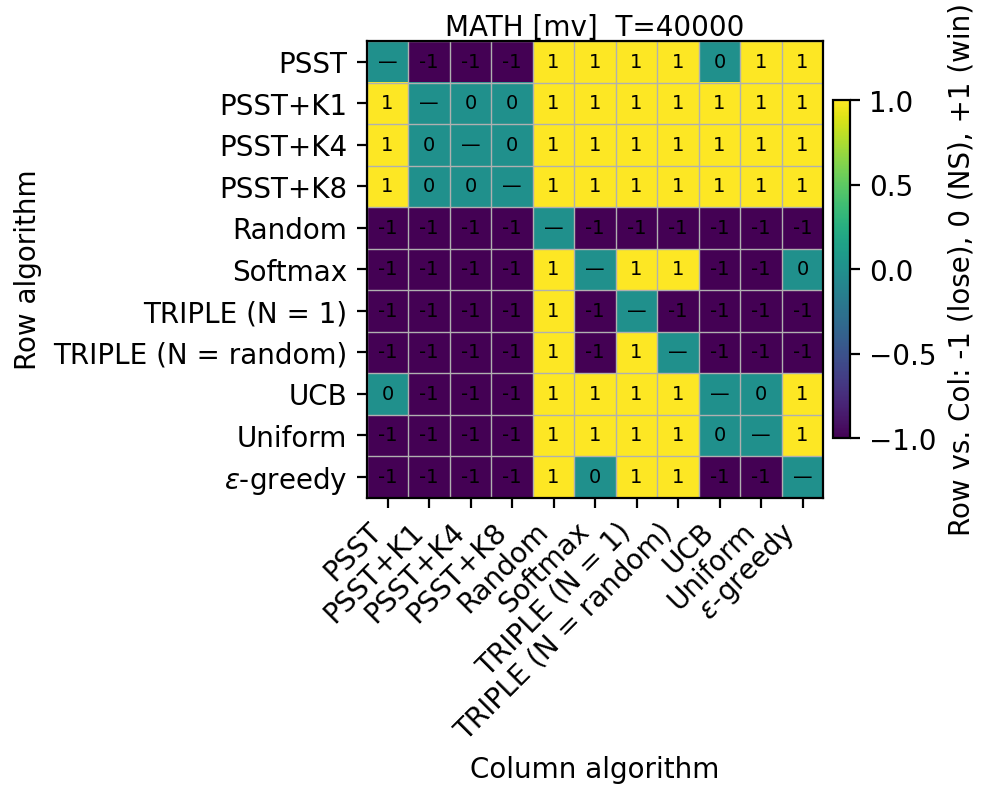}
\end{tabular}
\caption{Pairwise wins for MATH (MV) across six budgets ($T$ in order: 3000, 5000, 10000, 20000, 30000, 40000).}
\label{fig:math_mv_pairwise_grid}
\end{figure*}

\begin{figure*}[t]
\centering
\begin{tabular}{@{}ccc@{}}
\includegraphics[width=0.32\textwidth]{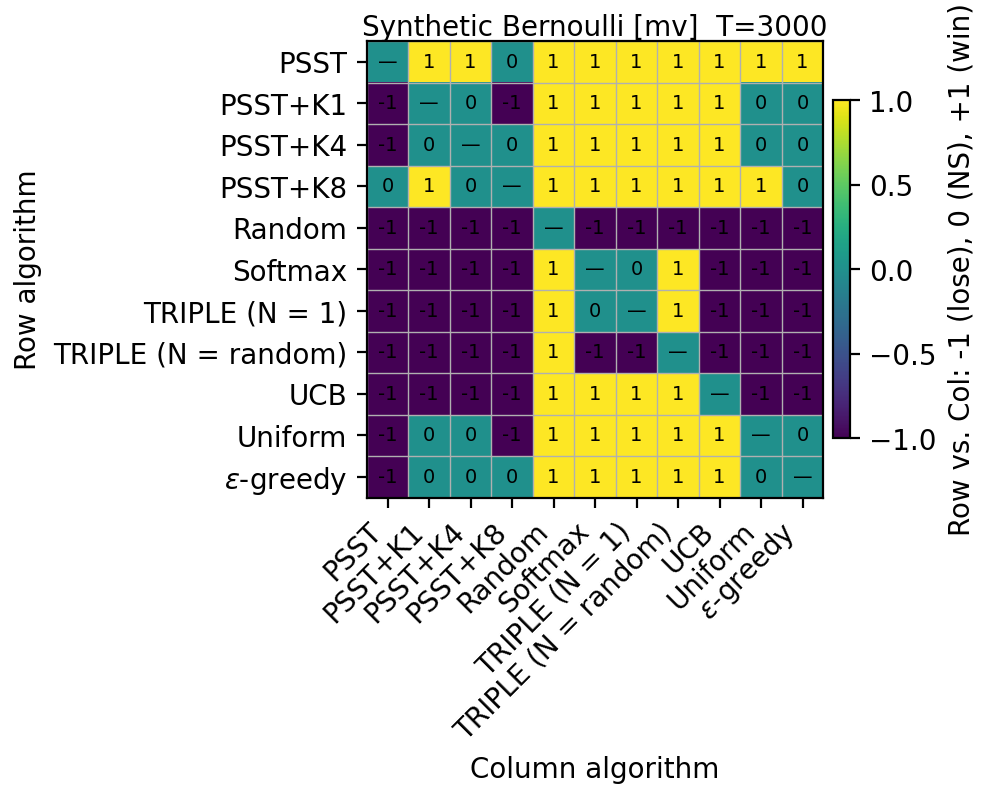} & \includegraphics[width=0.32\textwidth]{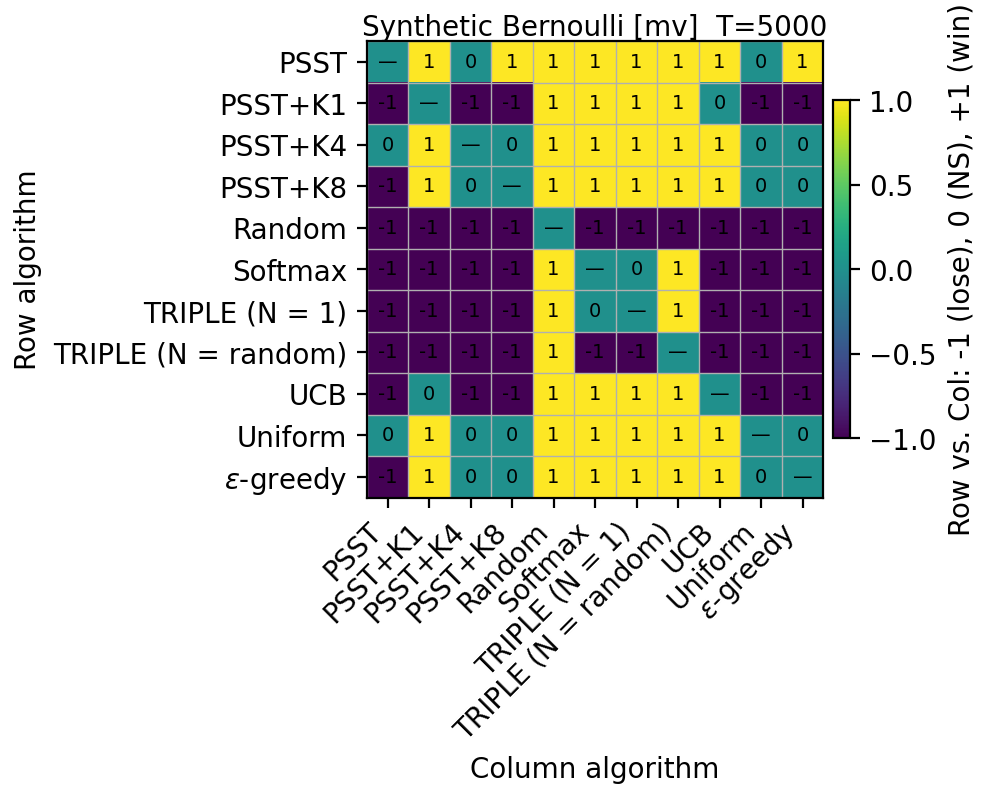} & \includegraphics[width=0.32\textwidth]{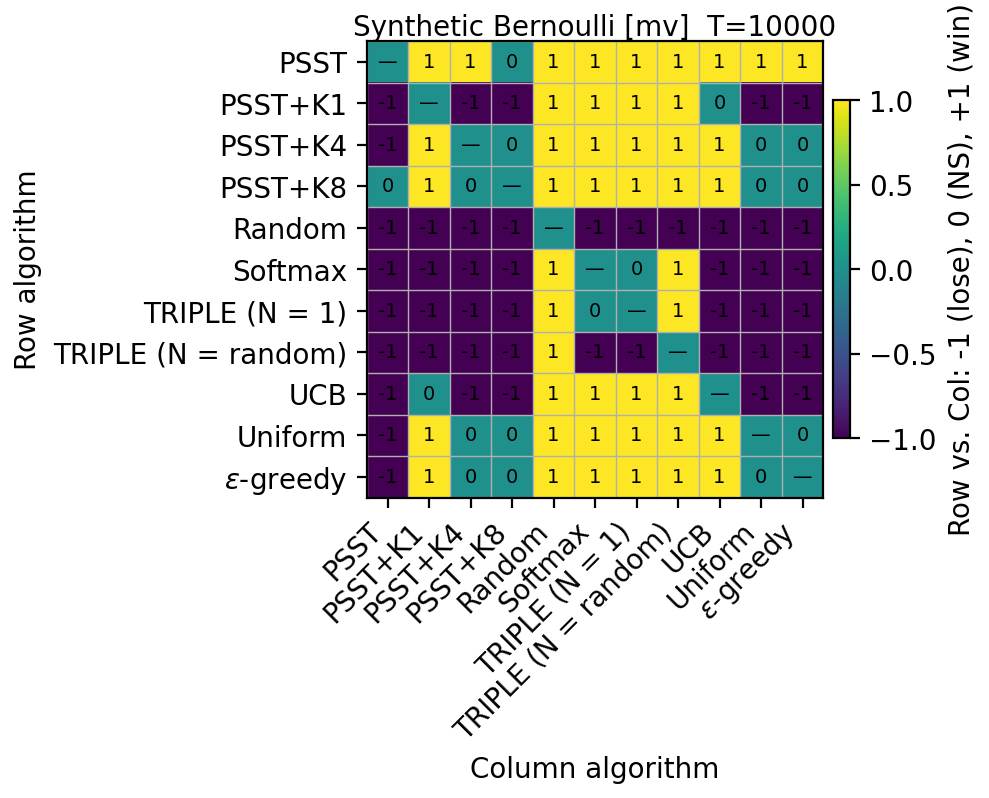} \\
[4pt]
\includegraphics[width=0.32\textwidth]{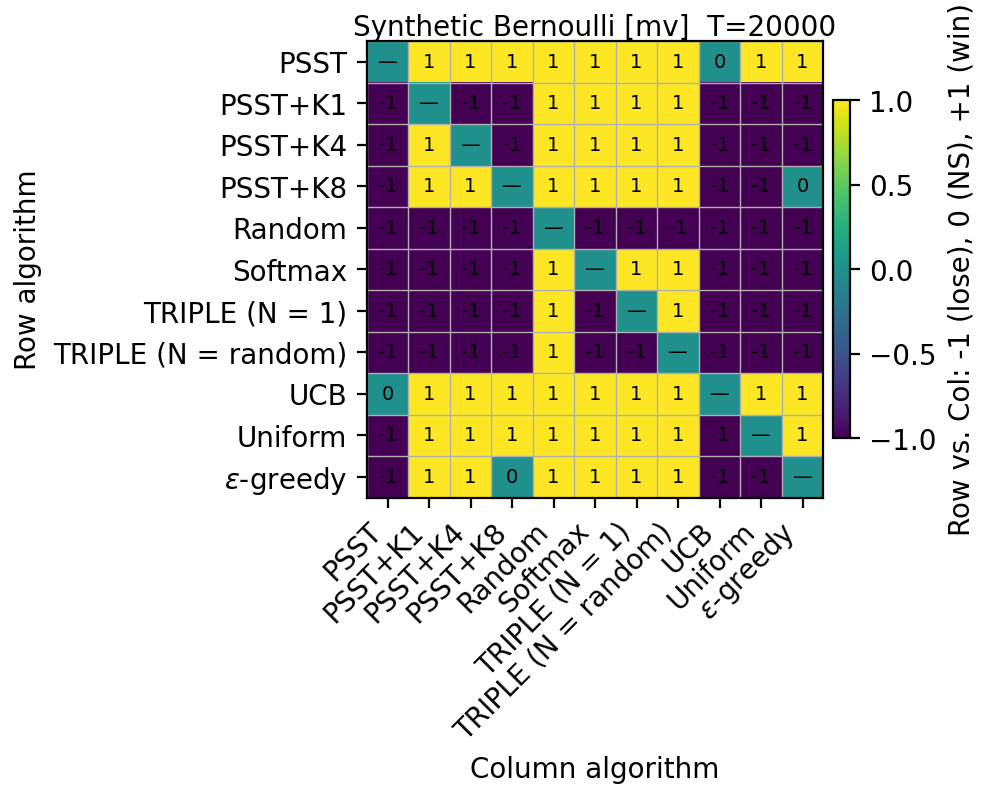} & \includegraphics[width=0.32\textwidth]{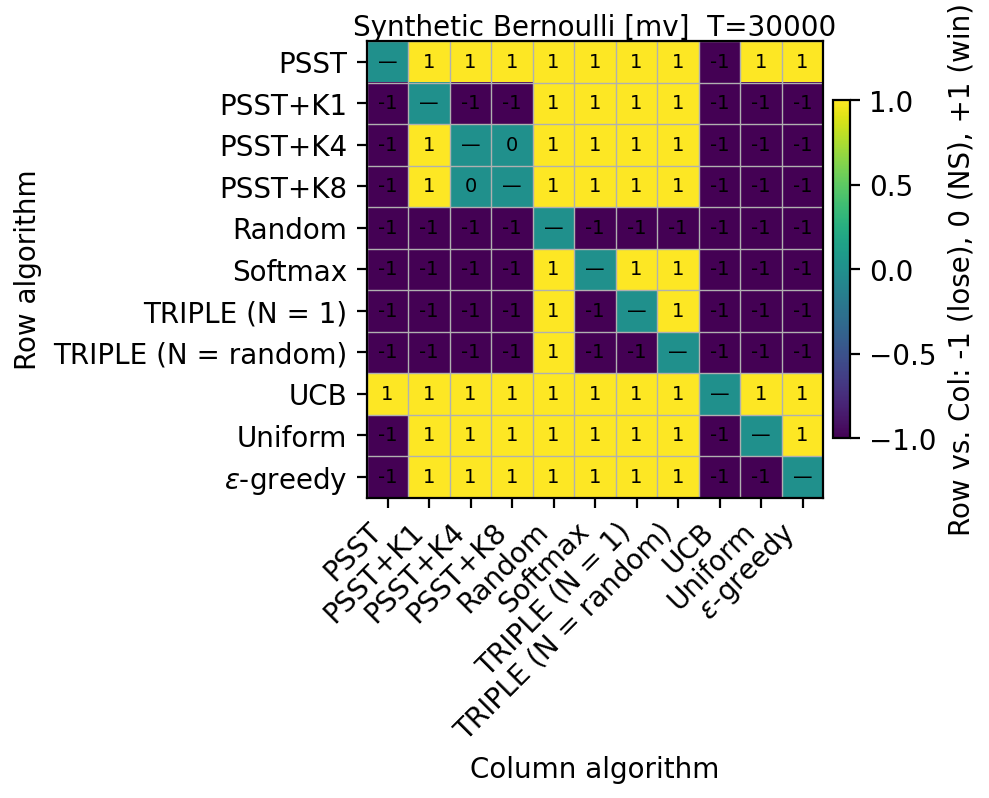} & \includegraphics[width=0.32\textwidth]{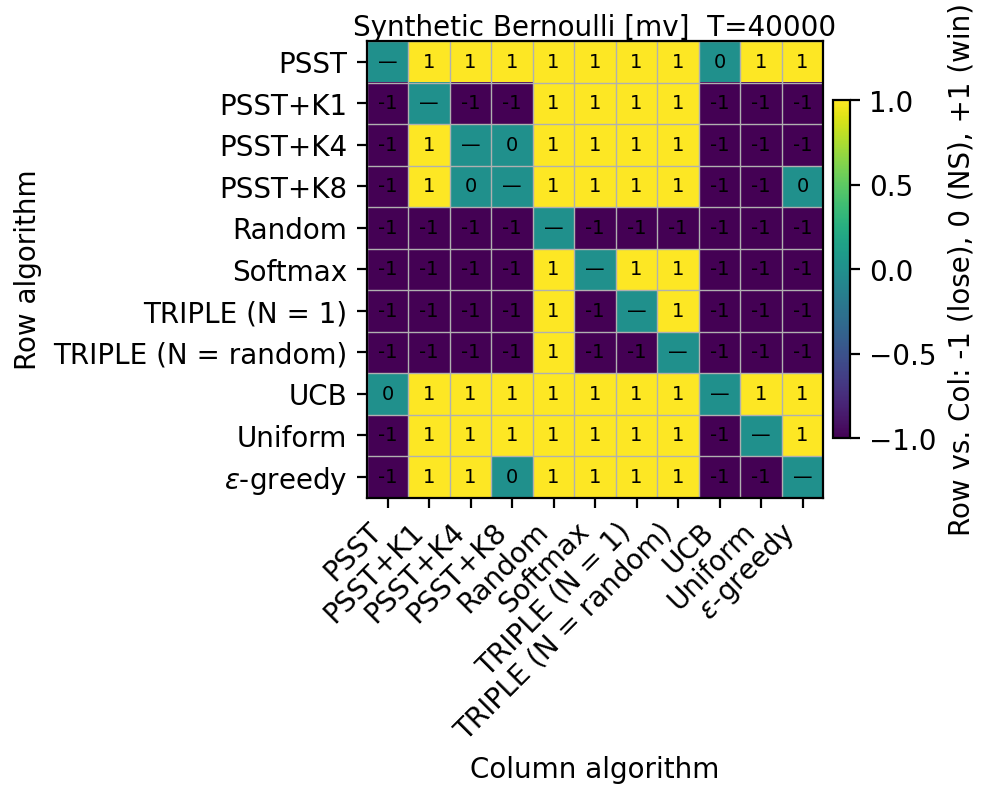}
\end{tabular}
\caption{Pairwise wins for Synthetic Bernoulli (MV) across six budgets ($T$ in order: 3000, 5000, 10000, 20000, 30000, 40000).}
\label{fig:sb_mv_pairwise_grid}
\end{figure*}

\begin{figure*}[t]
\centering
\begin{tabular}{@{}ccc@{}}
\includegraphics[width=0.32\textwidth]{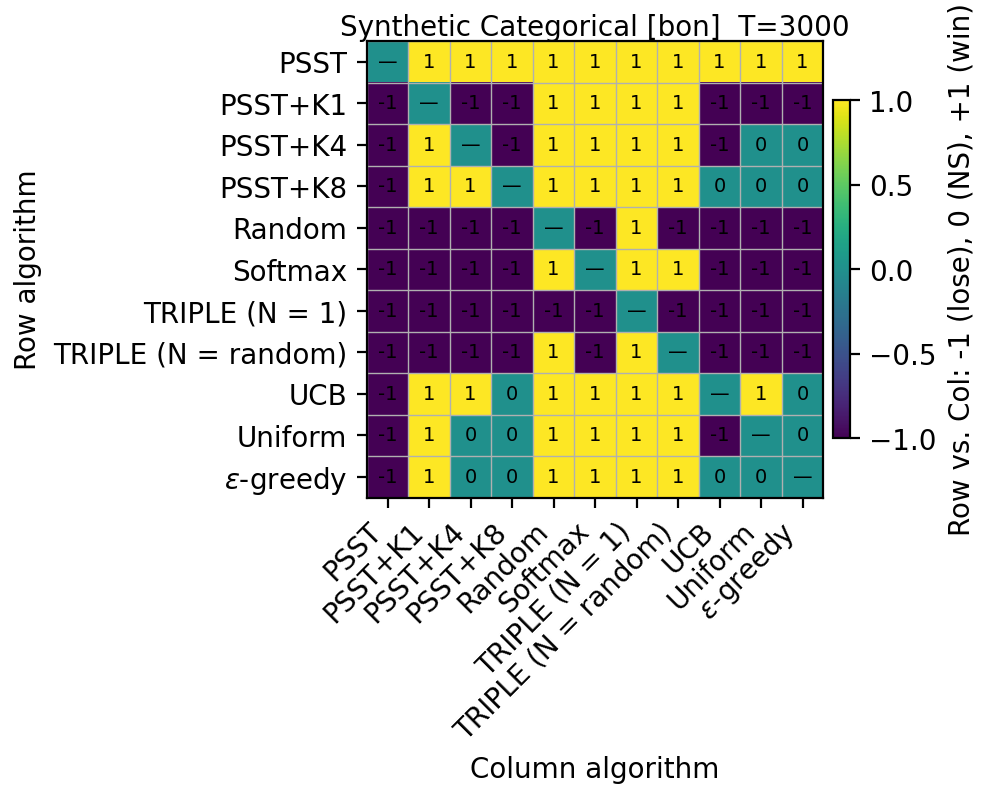} & \includegraphics[width=0.32\textwidth]{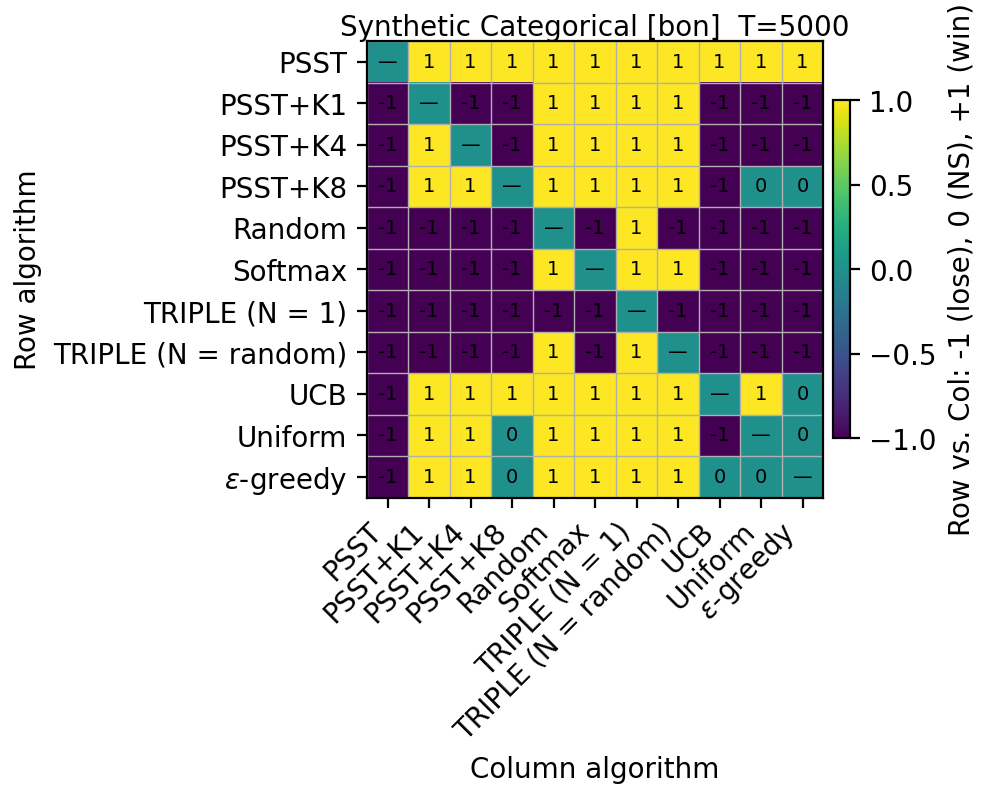} & \includegraphics[width=0.32\textwidth]{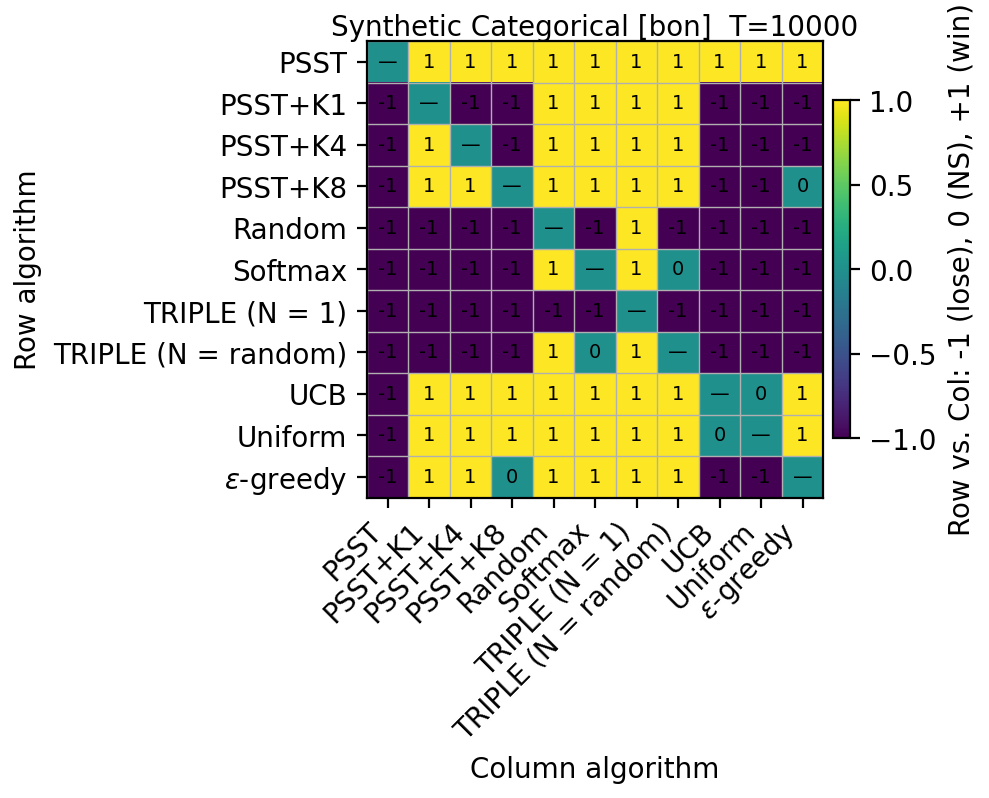} \\
[4pt]
\includegraphics[width=0.32\textwidth]{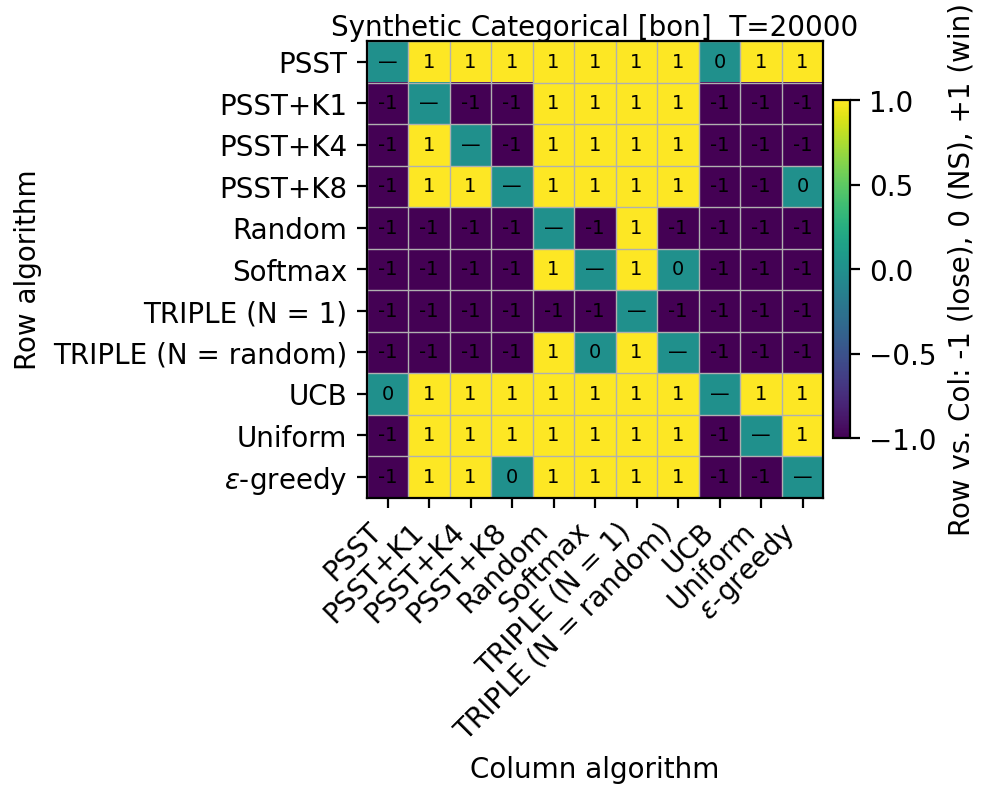} & \includegraphics[width=0.32\textwidth]{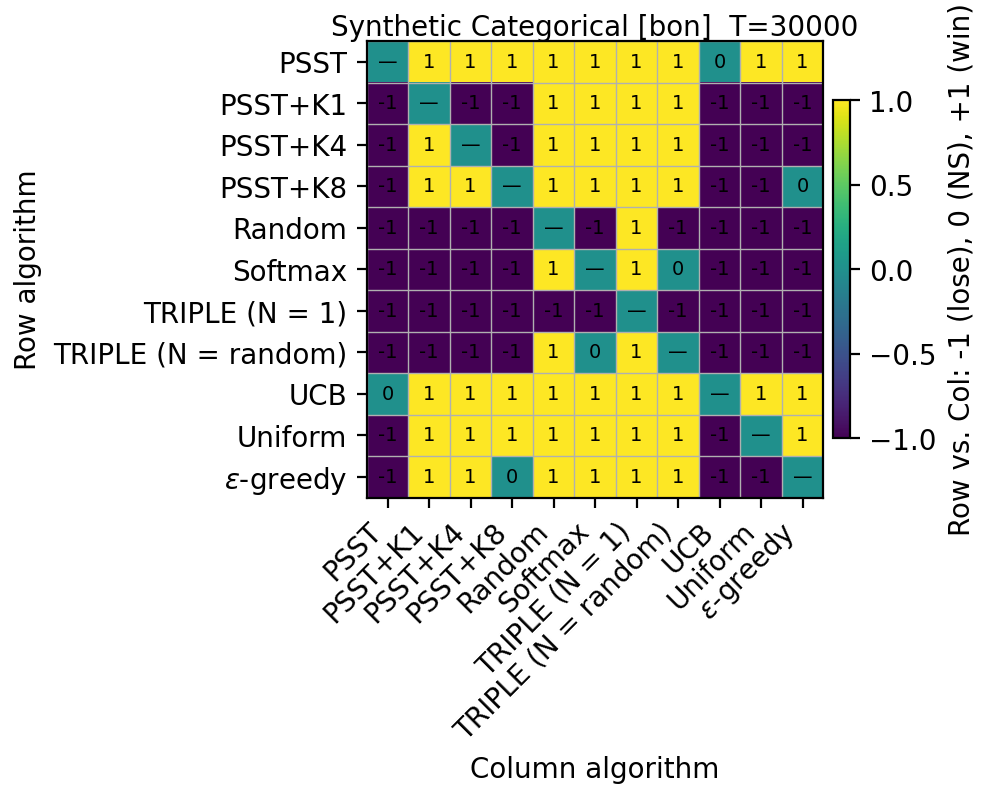} & \includegraphics[width=0.32\textwidth]{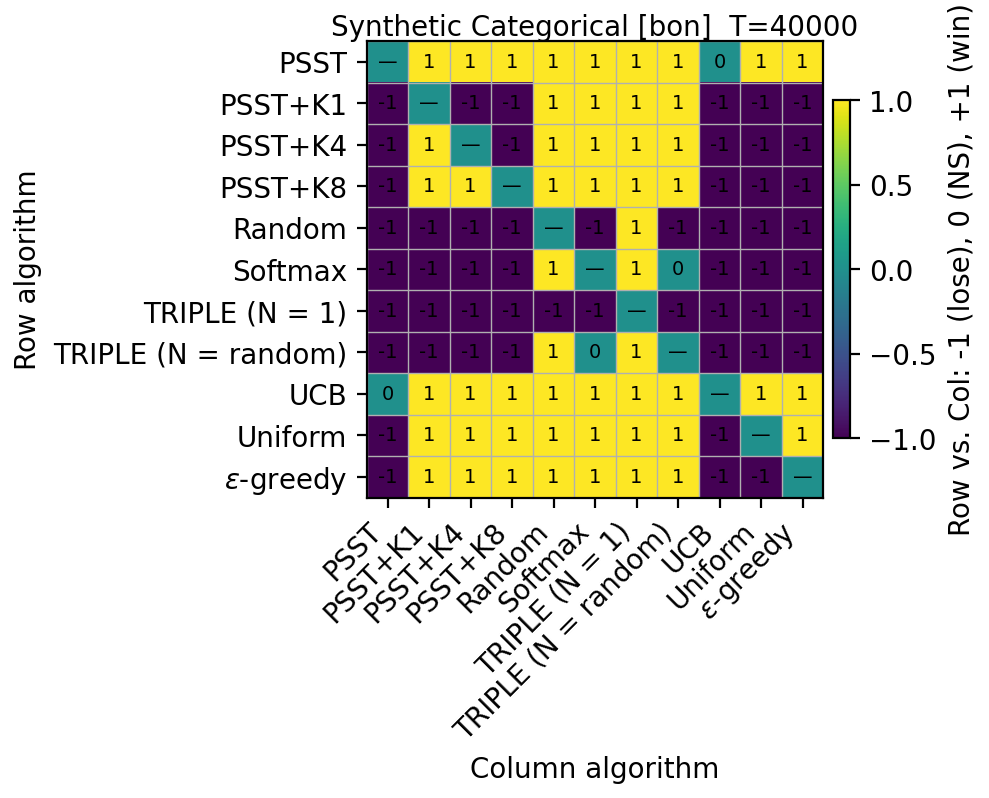}
\end{tabular}
\caption{Pairwise wins for Synthetic Categorical (BoN) across six budgets ($T$ in order: 3000, 5000, 10000, 20000, 30000, 40000).}
\label{fig:sc_bon_pairwise_grid}
\end{figure*}

\begin{figure*}[t]
\centering
\begin{tabular}{@{}ccc@{}}
\includegraphics[width=0.32\textwidth]{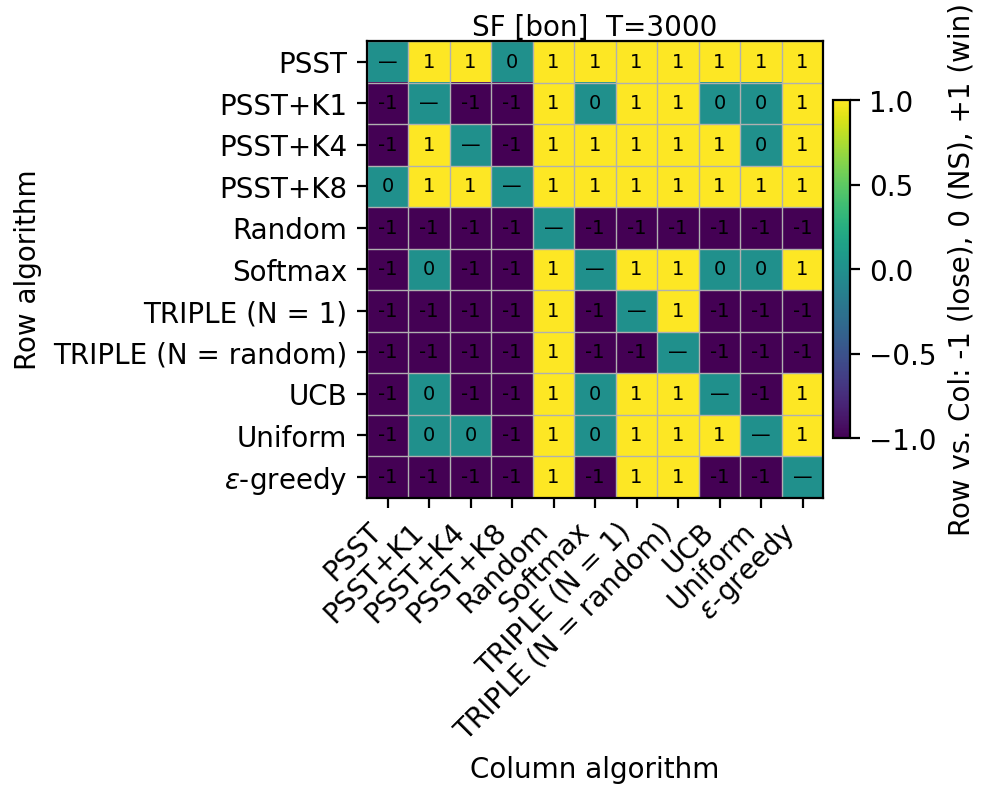} & \includegraphics[width=0.32\textwidth]{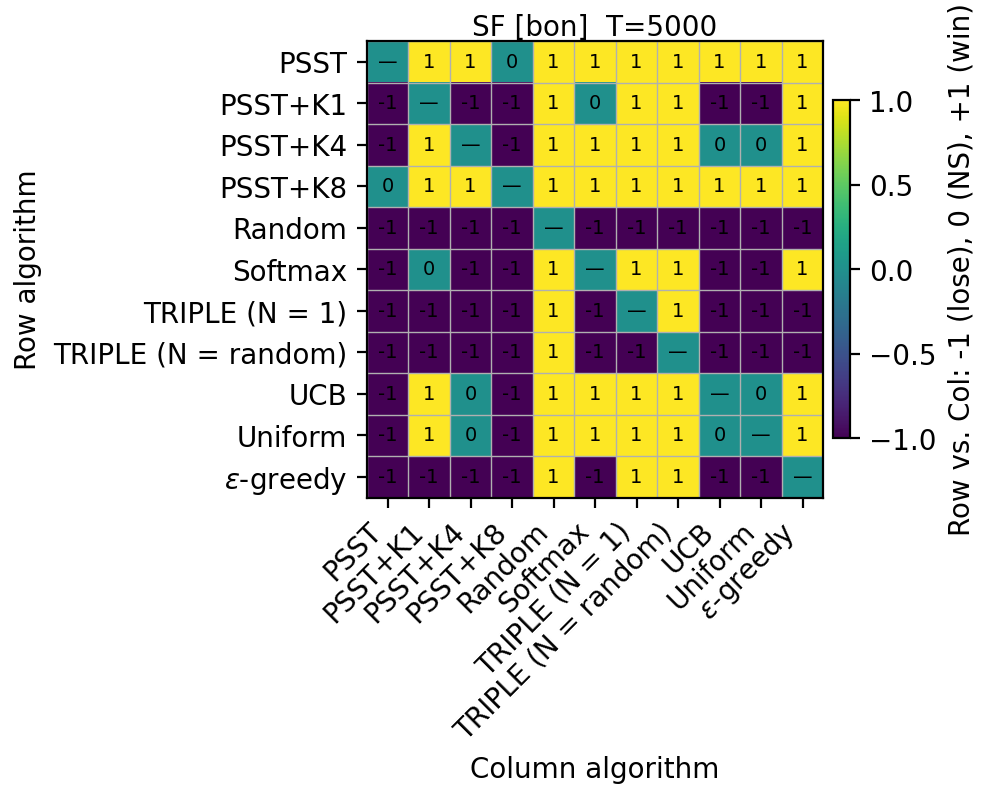} & \includegraphics[width=0.32\textwidth]{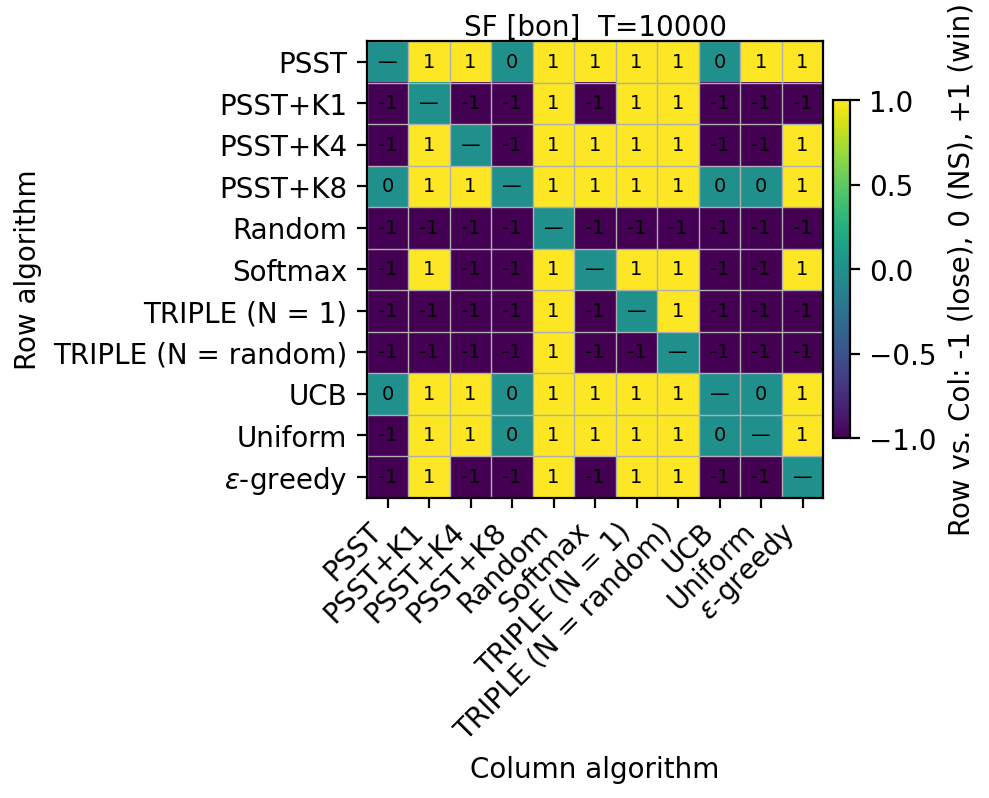} \\
[4pt]
\includegraphics[width=0.32\textwidth]{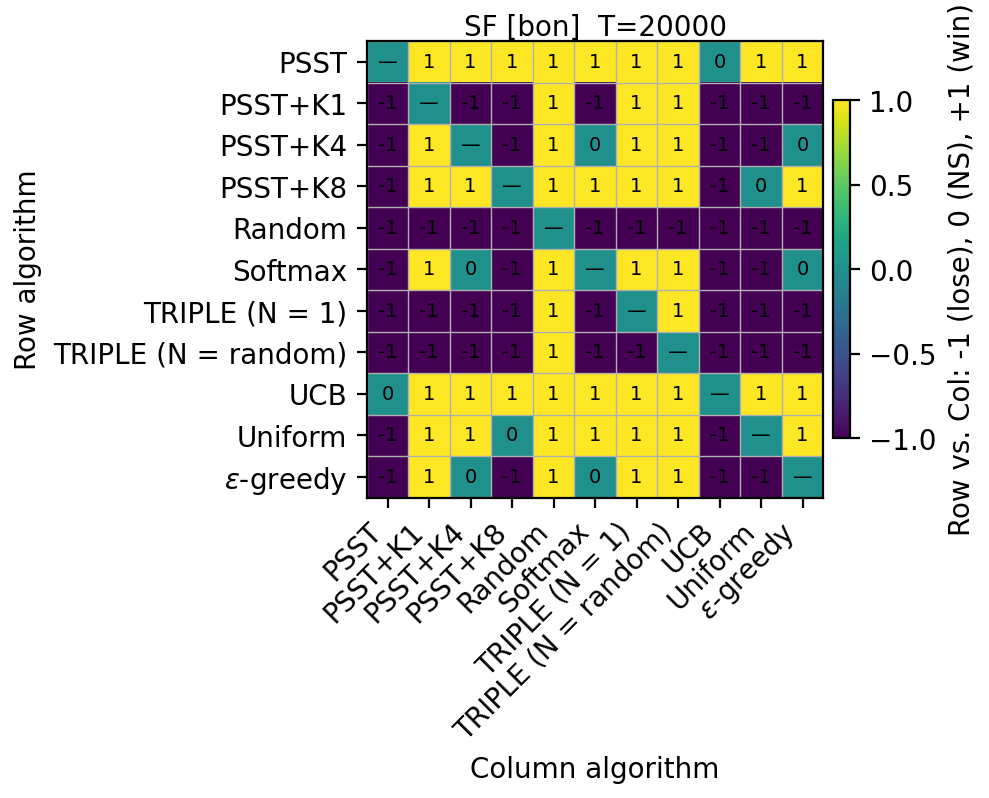} & \includegraphics[width=0.32\textwidth]{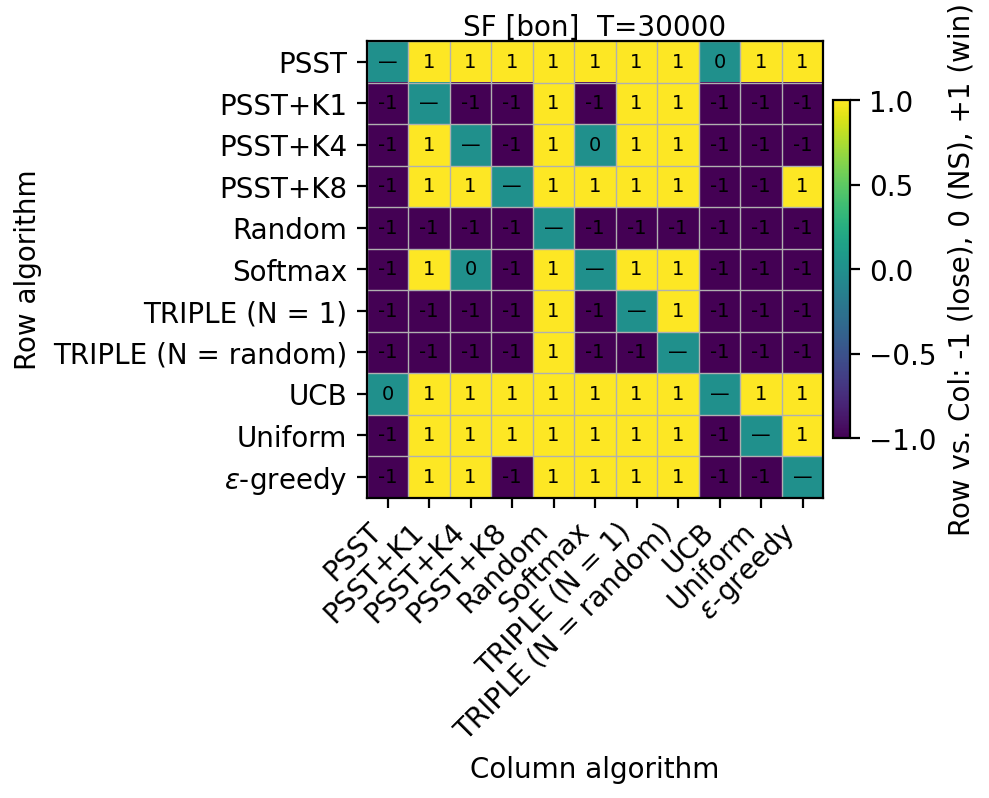} & \includegraphics[width=0.32\textwidth]{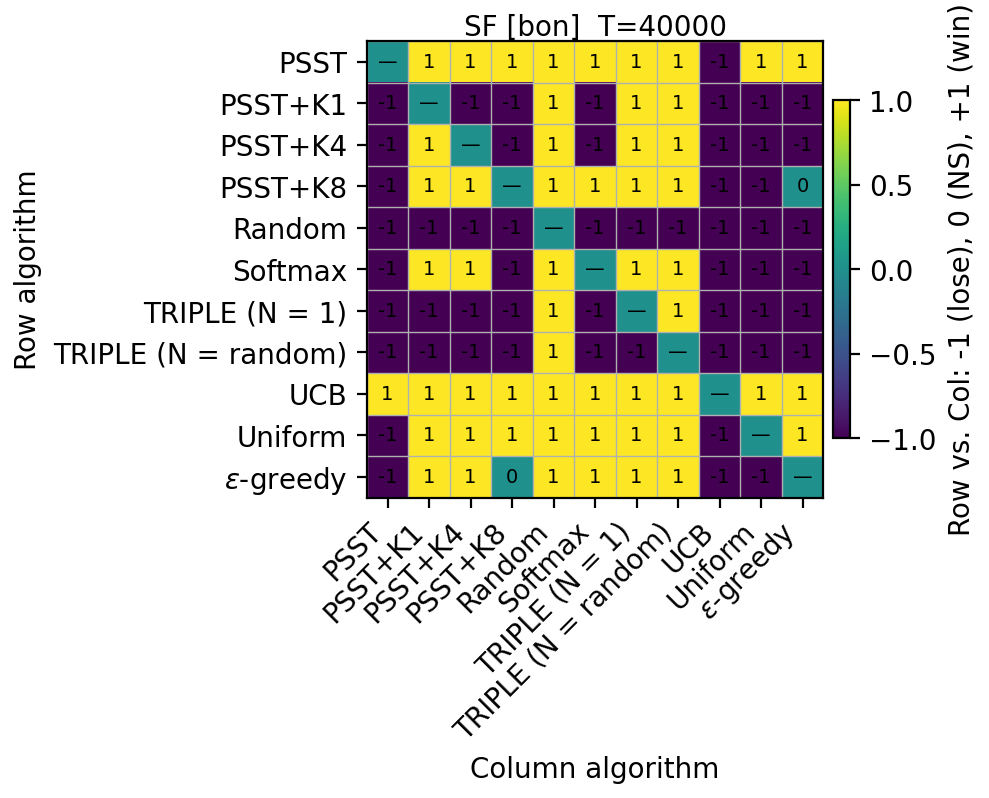}
\end{tabular}
\caption{Pairwise wins for Summarization (BoN) across six budgets ($T$ in order: 3000, 5000, 10000, 20000, 30000, 40000).}
\label{fig:summarization_bon_pairwise_grid}
\end{figure*}

\end{document}